\newcommand{\gmp}[1]{\textcolor{green}{#1}} 
\newcommand{\cmark}{\ding{51}}            % text mode
\newcommand{\xmark}{\ding{55}}            % text mode
\newcommand{\tmark}{$\triangle$}
\newcommand{\Mu}{\mathrm{M}}
  \spnewtheorem{assumption}{Assumption}{\bfseries}{\itshape}%
\numberwithin{proposition}{subsection}
\numberwithin{lemma}{subsection}
\numberwithin{assumption}{subsection}
\let\l@title\@gobbletwo
\let\l@author\@gobbletwo % optional, if author also appears
\newcommand{\hlc}[2]{\begingroup\setlength{\fboxsep}{1pt}\colorbox{#1}{\strut #2}\endgroup}
\newcommand\blfootnote[1]{%
  \begingroup
  \renewcommand{\thefootnote}{}%
  \NoHyper
  \footnotetext{#1}%
  \endNoHyper
  \endgroup
}
\begin{document}

% ---------------------------------------------------------------
% TODO REVIEW: Replace with your title
% \title{Subject-Driven Video Generation via Disentangled Identity and Motion } 
% \title{Efficient Training of Subject-Driven Video Generation via Disentangled Identity and Motion}
% \title{Training Subject-Driven Video Generation with 100$\times$ Efficient Data and Compute}
% \title{Training Subject-Driven Zero-Shot Video Generation with 1\% Data and 100$\times$ Less Compute}
% \title{Training Zero-Shot Subject-Driven Video Generation with 1\% Compute}
% \title{Learning Zero-Shot Subject-Driven Video Generation Using 1\% Compute and No Subject-Video Data}
\title{Learning Zero-Shot Subject-Driven Video Generation Using 1\% Compute}

% TODO REVIEW: If the paper title is too long for the running head, you can set
% an abbreviated paper title here. If not, comment out.
% \titlerunning{Abbreviated paper title}

% TODO FINAL: Replace with your author list. 
% Include the authors' OCRID for the camera-ready version, if at all possible.
\author{
Daneul Kim$^{1,3,\S}$\hspace{0.28em} Jingxu Zhang$^{3,\S}$\hspace{0.28em} Wonjoon Jin$^{2,\S}$\hspace{0.28em} Sunghyun Cho$^2$\hspace{0.28em} Qi Dai$^3$\hspace{0.28em} Jaesik Park$^1$\hspace{0.28em} Chong Luo$^3$
\vspace{0.4em}
}

% TODO FINAL: Replace with an abbreviated list of authors.
% \authorrunning{F.~Author et al.}
\authorrunning{D.~Kim et al.}
% First names are abbreviated in the running head.
% If there are more than two authors, 'et al.' is used.

% TODO FINAL: Replace with your institution list.
\institute{
$^1$Seoul National University \qquad
$^2$POSTECH\\
$^3$Microsoft Research Asia
}

\maketitle

\begin{abstract}
Subject-driven video generation (SDV-Gen) aims to produce videos of a specific subject by adapting a pretrained video model, enabling personalized and application-driven content creation.
To achieve this goal, per-subject tuning methods require approximately 200 A100 GPU hours to generate a customized video, whereas zero-shot methods avoid per-subject tuning but typically rely on millions of subject-video pairs for the supervision, incurring massive network fine-tuning costs (10K--200K A100 GPU hours).
We propose a \emph{data- and compute-efficient} zero-shot SDV-Gen framework that avoids test-time per-subject tuning and the use of large-scale subject-video pairs.
Our key idea decomposes SDV-Gen into (i) identity injection learned from subject-image pairs and (ii) motion-awareness preservation maintained by a small set of arbitrary videos.
We optimize the two tasks with stochastic switching, using random reference-frame sampling and image-token dropout to prevent trivial first-frame copying. 
Our gradient analysis shows that the two objectives rapidly evolve toward nearly orthogonal update subspaces, explaining the stable optimization.
Using CogVideoX-5B, we adapt a single model with 200K subject-image pairs and 4,000 arbitrary videos in 288 A100 GPU hours.
This yields about 1\% of compute compared to prior zero-shot baselines (\ie, 0.4\% of VACE and 2.8\% of Phantom) while using no subject-video pairs, yet remaining competitive in subject fidelity and motion quality.
We show that the same recipe transfers to Wan 2.2-5B.
  \keywords{Zero-Shot \and Video Customization \and Video Personalization \and Subject-Driven Video Generation}
\end{abstract}

\blfootnote{\hspace{-2em}$^\S$Work done during internship at Microsoft Research Asia.} 

\begin{figure}
    % \vspace{-5mm}
    \centering
  \includegraphics[width=\textwidth]{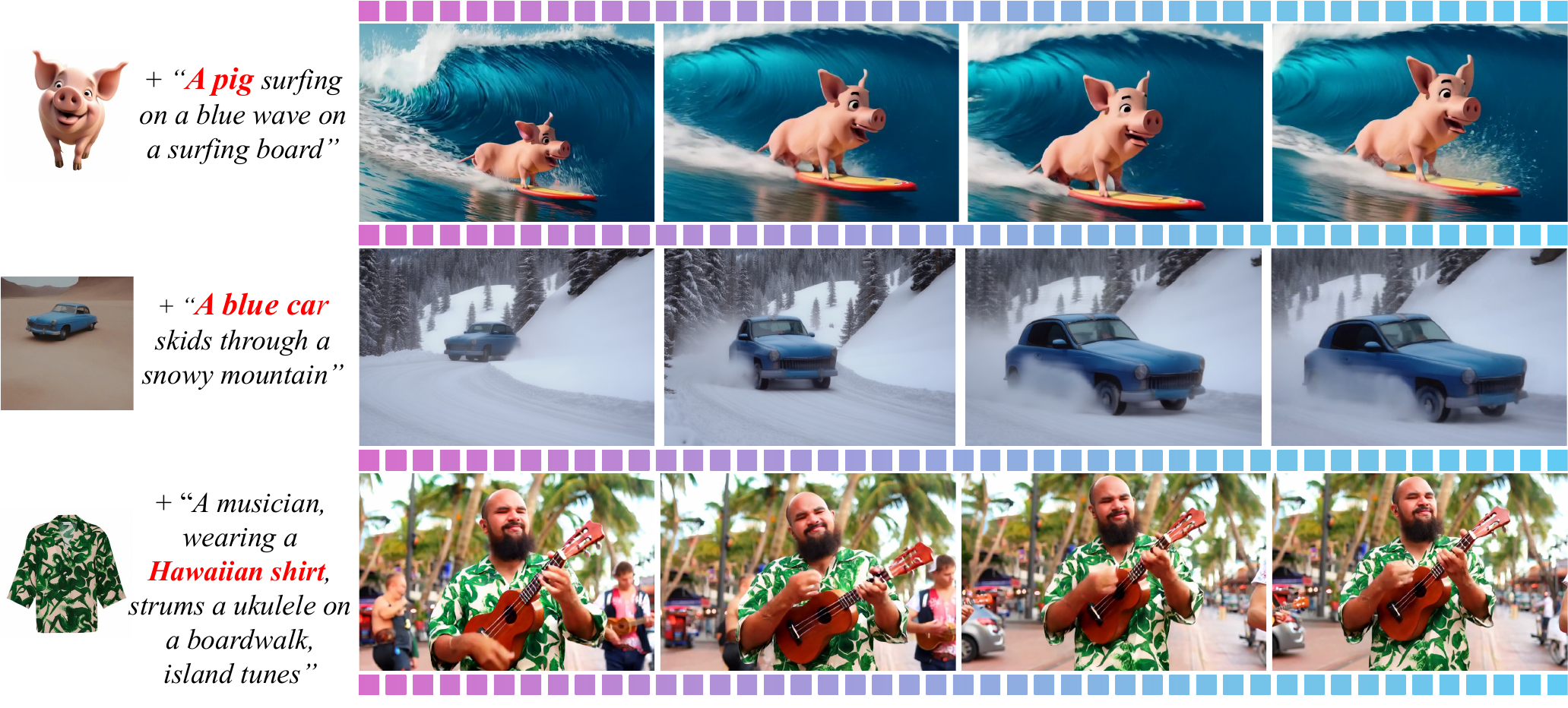}
% \caption{\textbf{Main results.} Our method generates high-quality subject-driven videos in a zero-shot manner. We extend video generative models to enable subject-driven video generation (SDV-Gen) with small datasets. For the first two rows, we fine-tune CogVideoX-5B with 4,000 unpaired videos and 200K subject-image pairs. For the result in the last row, we use only 4,000 unpaired videos and 4,000 paired subject-images.}
\caption{\textbf{Our results.} We extend video generation models to enable \emph{zero-shot} subject-driven video generation (SDV-Gen). We use small datasets and compute for the supervision. \textbf{(First row)} Results of extended CogVideoX-5B~\cite{cogvideox} supervised with 200K subject-image pairs and 4,000 arbitrary videos. \textbf{(Second row)} The same model extended with only 4,000 subject-image pairs and 4,000 videos. \textbf{(Third row)} Results of extended Wan 2.2-5B~\cite{wan2025} with 200K subject-image pairs and 4,000 arbitrary videos.}
\vspace{-6mm}
    \label{fig:teaser}
\end{figure}

\section{Introduction}

Recent advances in video diffusion models~\cite{svd,cogvideo,cogvideox,wan2025} have expanded controllable text-to-video (T2V) synthesis and video customization, enabling conditioning on signals such as keypoints, edges, and reference images~\cite{atzmon2024multi,videocontrolnet,blipdiffusion,ominicontrol,ipadapter}.
Building on this progress, \emph{subject-driven video generation}~\cite{videobooth,conceptmaster,dreamvideo,dreamvideo2,consisid,magicmirror} (also known as subject-to-video customization or video personalization, \ie, SDV-Gen) seeks to synthesize videos that preserve a target subject's identity while allowing variations in scene, motion, and context, an ability essential for customized content creation.
In practice, this is typically achieved by adapting a pretrained T2V backbone to the target subject.

% Early SDV-Gen approaches fine-tune a separate model or LoRA per subject~\cite{dreambooth,dreamvideo,stillmoving,customcrafter}, limiting scalability, while more recent methods train a single model that generalizes to unseen subjects~\cite{videobooth,conceptmaster,consisid,vace,phantom,hu2025hunyuancustom,videoalchemist}.
% Yet these zero-shot methods still rely on large subject-video datasets and require substantial computational resources (Table~\ref{tab:compute}; 10K--210K A100 GPU-hours for VACE and Phantom~\cite{vace,phantom}).
% This level of data and compute can be prohibitive for academic groups and smaller industrial labs, concentrating progress among only the most well-resourced organizations.  
% Moreover, reliance on large-scale subject-video pairs slows iteration cycles and reduces the accessibility of personalized video generation.  
% Reducing dependence on subject-video pairs and lowering compute is therefore important for broadening participation, accelerating innovation, and enabling more sustainable research in this area.  
% This implicitly assumes that both identity and motion-awareness must be learned from subject-video pairs.
Early SDV-Gen approaches fine-tune a separate model or LoRA per subject~\cite{dreambooth,dreamvideo,stillmoving,customcrafter}, limiting scalability. More recent methods train a single model that generalizes to unseen subjects~\cite{videobooth,conceptmaster,consisid,vace,phantom,hu2025hunyuancustom,videoalchemist}, but this implicitly assumes that both identity and motion-awareness must be learned from subject-video pairs. 
More importantly, these zero-shot methods rely on large subject-video datasets and substantial compute (Table~\ref{tab:compute}; 10K--210K A100 GPU-hours for VACE~\cite{vace} and Phantom~\cite{phantom}), raising the barrier to entry in this field. 
Reducing dependence on subject-video pairs and lowering computational cost are thus crucial for making subject-driven, customized video generation more accessible.   

A potential alternative is to learn identity from subject-driven image generation (SDI-Gen) data (\ie, subject-image pairs) while inheriting motion priors from the pretrained T2V backbone.
In principle, subject-image pairs provide strong identity cues, whereas the backbone encodes motion priors (\ie, temporal dynamics) from large-scale pretraining.
% In practice,
However, na\"{\i}ve fine-tuning on subject-images pairs severely degrades inter-frame temporal coherence, often causing motion to collapse into inconsistent dynamics or near-static (frozen) movement.
% This suggests that identity and motion modeling interfere with each other when optimized sequentially, indicating that treating SDV-Gen as a single objective may be fundamentally limiting.
This suggests that identity and motion modeling interfere with each other, indicating that treating SDV-Gen as a single objective has a fundamental issue.

\begin{table*}[t]
    \centering
    \caption{\textbf{Comparison of subject-driven video generation methods.} We compare per-subject tuning methods' required inputs and per-subject tuning time, along with zero-shot methods' dataset size and required finetuning time.}
    \vspace{-2mm}
    \resizebox{\textwidth}{!}{%
        \setlength{\tabcolsep}{4pt}
        \renewcommand{\arraystretch}{1.12}
        \begin{tabular}{l ccccc}
            \toprule
            \textbf{\textit{Per-subject Tuning}} & \textbf{Zero-shot} & \textbf{Required Inputs} & \textbf{Base Model (Param.)} & \textbf{Steps} & \textbf{A100 hours$^*$}\\
            \midrule
            CustomCrafter~\cite{customcrafter} & \xmark &
            200 regularizing imgs per subject &
            VideoCrafter2~\cite{chen2024videocrafter2} (1.4B) &
            10K &
            200$^{\ddagger}$ \\
            Still-Moving~\cite{stillmoving} & \xmark &
            Few reference images + 40 videos &
            Lumiere~\cite{lumiere} (1.2B) &
            500 &
            --
            \\
            \midrule
            \textbf{\textit{Zero-shot Methods}} & \textbf{Zero-shot}  & \textbf{Dataset Size} & \textbf{Base Model (Param.)} & \textbf{Steps} & \textbf{A100 hours$^*$} \\
            \midrule
            VideoBooth~\cite{videobooth} & 
            \tmark &
            48,724 subject-video pairs &
            SD-based VDM~\cite{rombach2021highresolution,lvdm} (1.08B) &
            400K$^\ddagger$ &
            775--1,938$^\ddagger$  \\
            VACE~\cite{vace} & \cmark &
            53M source videos &
            LTX~\cite{ltx} \& Wan~\cite{wan2025} (1.3--14B) &
            200K &
            70K--210K$^{\ddagger}$ \\
            Phantom~\cite{phantom} &
            \cmark & 
            1M subject-video pairs &
            Wan~\cite{wan2025} (1.3--14B) + Seed~\cite{seed} &
            30K &
            10K--30K$^{\ddagger}$ \\ [2pt]
            \cdashline{1-6}
            \noalign{\vskip 3pt}
            \rowcolor{yellow!20}
            \textbf{Ours-\textit{tiny/mini}} &
            \cmark &
            \begin{tabular}[c]{@{}l@{}}
            \textbf{2,000/4,000} subject-image pairs \\
            ~~~~~+ \textbf{4,000} arbitrary videos
            \end{tabular} &
            CogVideoX~\cite{cogvideox} (5B) &
            \textbf{4,000} &
            \textbf{288} \\
            \cdashline{1-6}
            \noalign{\vskip 3pt}
            \rowcolor{yellow!20}
            \textbf{Ours} &
            \cmark &
            \begin{tabular}[c]{@{}l@{}}
            \textbf{200K} subject-image pairs \\
            + \textbf{4{,}000} arbitrary videos
            \end{tabular} &
            \begin{tabular}[c]{@{}l@{}}
            CogVideoX~\cite{cogvideox} (5B) \\
            \& Wan~\cite{wan2025} (2.2-5B)
            \end{tabular} &
            \textbf{4{,}000} &
            \textbf{288} \\
            \bottomrule
        \end{tabular}%
    }
    \vspace{-2mm}
    \parbox{\textwidth}{\scriptsize\centering
    $^{*}$Total GPU hours. $^{\ddagger}$Estimated from reported batch sizes and wall-clock training time. Please see the supplement for details.
    }
    \label{tab:compute}
    \vspace{-5mm}
\end{table*}

% In this work, we propose a zero-shot SDV-Gen method that decomposes the subject-driven video generation into two complementary components: identity injection from subject-images pairs and motion-awareness preservation from a small set of unpaired videos.
% A stochastic task-switching schedule alternates between these objectives, predominantly sampling paired subject-images while using \emph{unpaired videos} only to maintain temporal coherence.
% Interestingly, this formulation allows identity and motion to be learned from different data sources, avoiding the collapse observed in na\"{\i}ve SDI-Gen fine-tuning and eliminating the need for paired subject-videos.
We consider subject adaptation into two complementary components: identity injection from subject-image pairs and motion-awareness preservation from a small set of arbitrary videos.  
Based on this, we propose to use a stochastic task-switching schedule that alternates between these objectives, predominantly sampling subject-image pairs while using arbitrary videos only to maintain temporal coherence.  
This separation allows identity and motion to be learned from different types of data sources, avoiding the collapse observed in na\"{\i}ve SDI-Gen fine-tuning and removing the need for subject-video pairs.  

% To understand why joint optimization is stable, we analyze the gradients of the two objectives.
% We observe that the gradient conflict of the two objectives diminishes rapidly and becomes nearly orthogonal during the fine-tuning, indicating that identity and motion update independent parameter subspaces.
% This provides a mechanistic explanation for why dual-task learning avoids interference and why large-scale paired subject-videos are not strictly necessary.
Across CogVideoX-5B~\cite{cogvideox} and Wan 2.2-5B~\cite{wan2025}, we observe that the conflict between the two gradients from the two-objectives diminishes rapidly and becomes nearly orthogonal. To understand why this two-objective training remains stable under task switching, we analyze the gradients of the two objectives during fine-tuning. The gradient analysis suggests that identity and motion updates concentrate on largely disjoint parameter subspaces, reducing interference.

% This leads to our recipe to extend a T2V backbone into a zero-shot (\ie, tuning-free at test time) SDV-Gen method in a data- and compute-efficient manner. It supports faster iteration and shorter model update cycles while avoiding reliance on subject-video pairs.
% We fine-tune CogVideoX-5B on 200K subject-image pairs and 4,000 Pexels videos~\cite{pexels} for 4,000 steps (288 A100-hours).
% This uses about \textbf{1\%} of the compute-\textbf{2.8\%} of Phantom~\cite{phantom} (Wan~2.2-5B, 10,000 A100-hours) and \textbf{0.4\%} of VACE~\cite{vace} (Wan~2.2-5B, 70,000 A100-hours)-while remaining competitive in subject fidelity and motion quality.
% By showing that SDV-Gen can be learned from subject-image data with $\sim$1\% of the compute compared to prior baselines, we outline a practical path toward scalable personalization.  
% In a low-data regime, even with 4,000 subject-images pairs and 4,000 arbitrary videos, we still observe strong subject fidelity.  
% We further validate the same recipe on Wan 2.2-5B, indicating the approach transfers across representative video diffusion backbones.  
This yields a data- and compute-efficient recipe to extend a T2V backbone into a zero-shot (\ie, tuning-free at test time) SDV-Gen model without subject-video pairs.  
We fine-tune CogVideoX-5B on 200K subject-image pairs and 4,000 Pexels videos~\cite{pexels} for 4,000 steps (288 A100-hours), using \textbf{$\sim$1\%} of the compute (2.8\% of Phantom~\cite{phantom}; 0.4\% of VACE~\cite{vace}) while remaining competitive.  
By showing that SDV-Gen can be learned with substantially less computation than prior baselines, we outline a practical path toward scalable video customization.  
Our approach even applies in a strong low-data regime (4,000 subject-image pairs with 4,000 arbitrary videos) and transfers to Wan~2.2-5B.

Our contributions are summarized in threefold:
\begin{itemize}
    \item We introduce a recipe to extend T2V models into a compelling zero-shot subject-driven video generation framework. We separate identity injection from motion preservation, enabling training without any subject-video pairs.
    \item We provide a gradient analysis showing that the identity and motion-aware objectives update independent parameter subspaces, explaining why the proposed formulation remains stable without subject-video pairs.
    \item We fine-tune multiple T2V backbones using only 200K subject-image pairs and 4,000 arbitrary videos, trained for 288 A100-hours, which is 1\% of compute compared to other zero-shot baselines.
\end{itemize}

\section{Related Work}
\label{sec:related}

\subsection{Subject-driven Image Generation}

Diffusion models have substantially advanced text-to-image synthesis~\cite{rombach2021highresolution,sd3,pixartalpha}, and extensive work studies how to inject novel subjects while preserving identity.
Early approaches introduce explicit spatial control (\eg, pose or edge guidance)~\cite{controlnet,t2iadapter}. 
More recent methods employ cross-attention-based visual encoders such as IP-Adapter and SSR-Encoder~\cite{ipadapter,ssrencoder}, as well as personalization techniques including DreamBooth~\cite{dreambooth}, Textual Inversion~\cite{textualinversion}, CustomDiffusion~\cite{customdiffusion}, DisenBooth~\cite{disenbooth}, and subsequent multi-subject or zero-shot variants~\cite{liu2023customizable,msdiffusion,jdeituningfree2024,Ding_2024,subjectagnostic2024,tewel2024training}.
These methods typically learn subject-specific representations from subject-image pairs, enabling subject-driven image generation (SDI-Gen).
% In our setting, paired subject-images provide a lightweight and scalable identity signal compared to paired subject-videos.
We follow this line by using subject-image pairs (Subject-200K from OminiControl~\cite{ominicontrol}) as the primary source of identity supervision, and additionally leverage a small number of arbitrary videos to preserve temporal coherence.

\subsection{Subject-driven Video Generation}

Subject-driven video generation (SDV-Gen) extends SDI-Gen to the temporal domain. Per-subject tuning methods such as DreamVideo~\cite{dreamvideo}, MotionBooth~\cite{motionbooth}, Still-Moving~\cite{stillmoving}, and CustomCrafter~\cite{customcrafter} adapt a model (or LoRA) for each identity, incurring non-trivial test-time compute per subject. In contrast, zero-shot approaches including VideoBooth~\cite{videobooth}, Concept-Master~\cite{conceptmaster}, DreamVideo-2~\cite{dreamvideo2}, MagicMirror~\cite{magicmirror}, ID-Animator~\cite{idanimator}, and Consis-ID~\cite{consisid} train a single model but often rely on subject-video pairs and may target restricted domains (\eg, faces). Large-scale DiT-based systems such as VACE~\cite{vace}, Phantom~\cite{phantom}, HunyuanCustom~\cite{hu2025hunyuancustom}, and VideoAlchemist~\cite{videoalchemist} further improve quality by training on million-scale datasets (\eg, Phantom-Data~\cite{phantomdata} and OpenS2V-Nexus~\cite{s2vnexus}), but require substantial compute budgets (Table~\ref{tab:compute}). In contrast, we study a \emph{data- and compute-efficient} alternative that avoids subject-video pairs during adaptation by decomposing identity learning and motion preservation, adapting pretrained backbones using subject-image pairs and a small set of arbitrary videos. We demonstrate efficient zero-shot SDV-Gen on modern video diffusion backbones, such as CogVideoX-5B~\cite{cogvideox} and Wan 2.2-5B~\cite{wan2025}.

\subsection{Multi-task Optimization}

% Our approach optimizes two objectives in a single model: an identity-injection objective on SDI-Gen pairs and a motion-preservation objective on unpaired videos.
% This connects to multi-task learning, where gradient conflicts across tasks can degrade performance; gradient-surgery methods such as PCGrad~\cite{gradientsurgery} explicitly modify updates to reduce interference.
% Catastrophic forgetting issues~\cite{mccloskey1989catastrophic} appear in such multi-task learning scenarios, and to handle those, continual learning techniques~\cite{catastrophic_forgetting} appear to mitigate forgetting by revisiting past samples by using memories~\cite{GEM,AGEM} or use replay~\cite{experience_replay,generative_replay,pseudo_rehearsal}.
% In contrast, we keep a fixed, small pool of unpaired videos and adopt a lightweight stochastic task-switching scheme between SDI-Gen and image-to-video (I2V) updates.
Our approach trains a single model with two objectives: (1) identity injection on subject-image pairs and (2) motion preservation on arbitrary videos.
This relates to multi-task learning, where gradient conflicts can hurt performance. Methods such as Gradient Surgery~\cite{gradientsurgery} mitigate interference by modifying updates.
Related issues such as catastrophic forgetting~\cite{mccloskey1989catastrophic} motivate continual learning approaches that revisit past samples via memory buffers~\cite{GEM,AGEM} or replay~\cite{experience_replay,generative_replay,pseudo_rehearsal,catastrophic_forgetting}.
In contrast, since we keep a fixed, small pool of videos, % and use a lightweight stochastic task-switching schedule between SDI-Gen and image-to-video (I2V) updates.
% This is also compute-efficient: unlike gradient-surgery methods that require computing multiple task gradients within the same step (and projecting them to reconcile conflicts), task switching updates one objective at a time, avoiding simultaneous gradient computation.
% Rather than enforcing explicit gradient projections or maintaining an expanding memory, we analyze the training dynamics and show that gradient conflict diminishes rapidly, supporting stable joint optimization under limited data and compute.
we proposed to adopt a lightweight stochastic task-switching schedule between subject-to-image and image-to-video updates.
This design is computationally more efficient than Gradient Surgery and continual learning approaches because it updates a single objective per step, avoiding the simultaneous computation of multiple task gradients, explicit gradient projections, and memory expansion.
We analyze training dynamics and show that gradient conflict diminishes rapidly.
This supports stable optimization under limited data and compute.

\section{Method}
\label{sec:method}

\begin{wrapfigure}{r}{0.52\columnwidth}
  \centering
  \vspace{-10mm}
  \includegraphics[width=0.50\columnwidth]{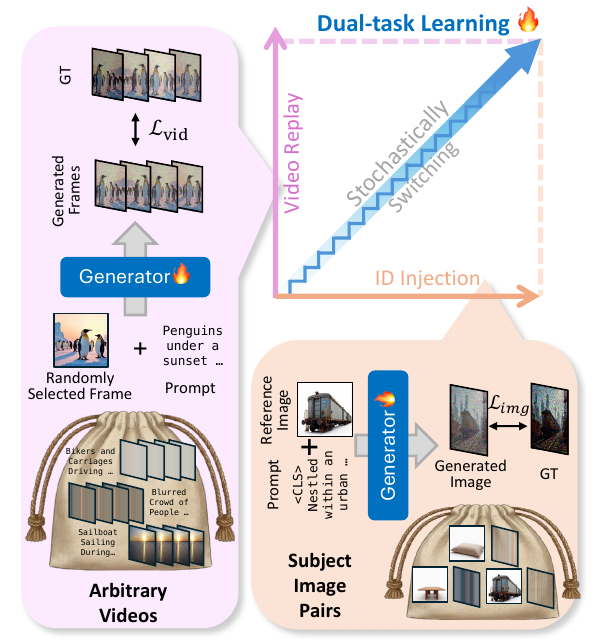}
  \caption{\textbf{Dual-task learning strategy.} We formulate SDV-Gen as a \emph{dual-task} problem: (i) identity injection (bottom) from subject-image pairs, and (ii) motion-awareness preservation (left) using arbitrary videos via stochastically switched learning.}
  \label{fig:overview_dual}
  \vspace{-10mm}
\end{wrapfigure}

\subsection{Preliminaries}
\label{subsec:prelim}

We fine-tune a pretrained multi-modal diffusion transformer (MM-DiT)) ~\cite{dit2023} for zero-shot subject-driven video generation.
% We instantiate our framework with CogVideoX-5B~\cite{cogvideox}, which provides strong text-video alignment and motion priors from large-scale pretraining, and the same adaptation recipe applies to other video diffusion backbones (e.g., Wan~\cite{wan2025}).
We instantiate our framework with Cog VideoX-5B~\cite{cogvideox} and show that the same adaptation recipe transfers to Wan 2.2-5B~\cite{wan2025}.

At each denoising step, MM-DiT processes noisy visual tokens $X \in \mathbb{R}^{N \times d}$ and text tokens $C_T \in \mathbb{R}^{M \times d}$ that share embedding dimension $d$.
Each block consists of LayerNorm followed by multi-modal attention (MMA) over the concatenated sequence $[X; C_T]$: $\mathrm{MMA}([X; C_T])=\mathrm{softmax}\left(\frac{QK^\top}{\sqrt{d}}\right)V$, where $Q,K,V$ are the usual query, key, and value projections.
Spatial and temporal positions are encoded with RoPE~\cite{rope}, applied to visual tokens as $X_{i,j} \leftarrow X_{i,j}\cdot R(i,j)$.
% We fine-tune pretrained diffusion transformers~\cite{dit2023} for zero-shot subject-driven video generation.
% Our main model is CogVideoX-5B~\cite{cogvideox} with an MM-DiT backbone, and we further show that the same recipe transfers to Wan 2.2-5B~\cite{wan2025}, which uses a DiT backbone with cross-attention.

% At each denoising step, the model processes noisy visual tokens $X \in \mathbb{R}^{N \times d}$ and text tokens $C_T \in \mathbb{R}^{M \times d}$.
% In MM-DiT, the concatenated tokens $[X; C_T]$ are jointly updated by multimodal attention (MMA), $\mathrm{MMA}([X; C_T])=\mathrm{softmax}\left(\frac{QK^\top}{\sqrt{d}}\right)V$, with RoPE~\cite{rope} applied to visual tokens.
% In Cross-DiT, visual and text tokens interact through cross-attention (CA), where visual tokens attend to text tokens: $\mathrm{CA}(X, C_T)=\mathrm{softmax}\left(\frac{Q_XK_T^\top}{\sqrt{d}}\right)V_T$.

Because attention cost scales quadratically with the number of tokens, video-centric fine-tuning is substantially more expensive than image-only training.
Our goal is to adapt the backbone without any subject-video pairs by using subject-image pairs as the primary identity supervision and a small set of arbitrary videos to preserve temporal dynamics.  

Let $\mathcal{D}_{\text{img}}$ denote a dataset of subject-image pairs $\{(I_{\text{ref}}, I_{\text{out}}), P\}$, where $I_{\text{ref}}$ and $I_{\text{out}}$ depict the same subject under different poses, viewpoints, or contexts, and $P$ is the text prompt.
Let $\mathcal{D}_{\text{vid}}$ denote a text-video dataset $\{(P_{\text{vid}}, V)\}$, where $P_{\text{vid}}$ is the text prompt and $V$ is the corresponding video.
% We define two diffusion objectives, $\mathcal{L}_{\text{img}}=\mathcal{L}_{\text{subject-image}}(I_{\text{ref}}, I_{\text{out}}, P)$ and $\mathcal{L}_{\text{vid}}=\mathcal{L}_{\text{text-video}}(P_{\text{vid}}, V)$, with gradients $g_{\text{img}}=\nabla_{\theta}\mathcal{L}_{\text{img}}$ and $g_{\text{vid}}=\nabla_{\theta}\mathcal{L}_{\text{vid}}$ with respect to the trainable parameters $\theta$.
We define two training objectives, $\mathcal{L}_{\text{img}}=\mathcal{L}_{\text{subject-image}}(I_{\text{ref}}, I_{\text{out}}, P)$ for identity injection and $\mathcal{L}_{\text{vid}}=\mathcal{L}_{\text{text-video}}(P_{\text{vid}}, V)$ for motion-awareness preservation.  
The corresponding gradients are $g_{\text{img}}=\nabla_{\theta}\mathcal{L}_{\text{img}}$ and $g_{\text{vid}}=\nabla_{\theta}\mathcal{L}_{\text{vid}}$ with respect to the trainable parameters $\theta$. 

\begin{figure*}[t]
    \centering
    \includegraphics[width=0.996\linewidth]{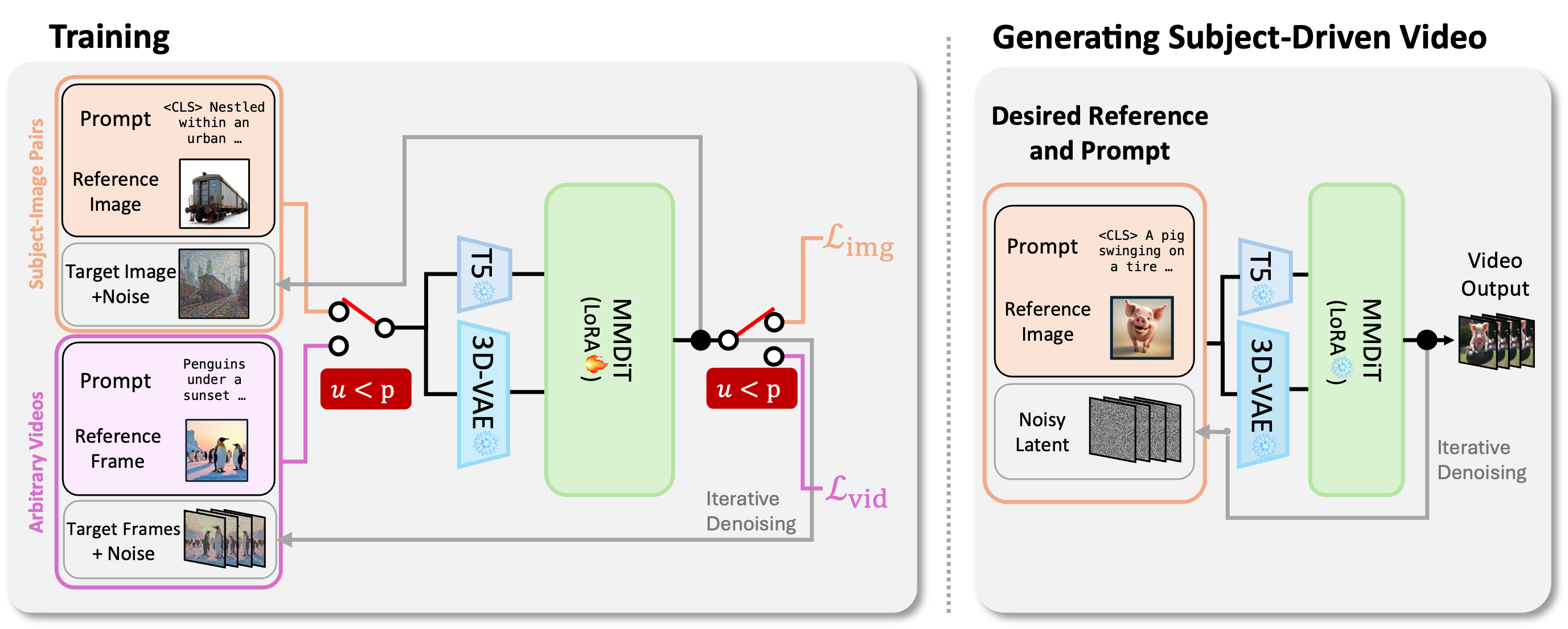}
    \vspace{-2mm}
    \caption{\textbf{Training and Inference Details.} \textbf{Left:} During training, we stochastically alternate between two objectives: identity injection using subject-image pairs and motion-awareness preservation using a small set of arbitrary videos. \textbf{Right:} At inference time, \emph{no additional per-subject tuning is required}. The model generates a video conditioned on the reference image and text prompt in a \emph{zero-shot} manner.}
    \vspace{-2mm}
    \label{fig:overview_training}
\end{figure*}

\subsection{Dual-Task Formulation}
\label{subsec:dual-task}

% We formulate SDV-Gen as a dual-task problem as in Fig.~\ref{fig:overview_dual}: (i) \textbf{identity injection} from subject-images pairs to learn subject appearance, and (ii) \textbf{motion-awareness} from unpaired videos.

% We optimize via stochastic task-switching 
We formulate a dual-task problem (Fig.~\ref{fig:overview_dual}): (i) identity injection from subject-image pairs to learn subject appearance, and (ii) motion-awareness preservation from arbitrary videos.  
We optimize via stochastic task switching (Fig.~\ref{fig:overview_training}).  
At each iteration, we draw $u \sim \mathcal{U}(0,1)$ and select:
\begin{equation}
\label{eq:stochastic-loss}
    \mathcal{L}_{\text{total}} =
    \begin{cases}
        \mathcal{L}_{\text{vid}}, & \text{if } u < p, \\
        \mathcal{L}_{\text{img}}, & \text{otherwise}.
    \end{cases}
\end{equation}
yielding expected update $\mathbb{E}[g] = (1-p)\,g_{\text{img}} + p\,g_{\text{vid}}$.
We use $p = 0.2$, so 80\% of updates come from subject-image pairs and 20\% from arbitrary videos during fine-tuning. $p$ is chosen empirically. We demonstrate more in detail in supplementary material.

\subsection{Identity Injection with Subject-Image Pairs}
\label{subsec:identity}

Following OminiControl~\cite{ominicontrol}, given subject-image pairs $(I_{\text{ref}}, I_{\text{out}})$ of the same subject and prompt $P$, we encode $X_{\text{ref}} = \mathrm{VAE}(I_{\text{ref}})$, $X_{\text{out}} = \mathrm{VAE}(I_{\text{out}})$, $C_T = \mathrm{T5}(P)$, and train MM-DiT for $X_{\text{out}}$ conditioned on $(X_{\text{ref}}, C_T)$.
For single-frame input, we truncate RoPE embeddings to match the token count of one frame.

% We apply LoRA~\cite{lora} to a subset of attention and normalization layers that interact with reference-image tokens; the rest of the backbone remains frozen.
% To anchor subject identity in the text stream, we prepend a special \verb|<CLS>| token to prompts (e.g., ``\texttt{An <CLS> armchair in the living room}'').
% Ablations in the supplement confirm that \verb|<CLS>| improves identity metrics (CLIP-I, DINO-I) without degrading motion quality.
We apply LoRA~\cite{lora} to a subset of attention and normalization layers that interact with reference-image tokens, while keeping the rest of the backbone frozen.
To anchor identity in the conditioning stream, we prepend a special \verb|<CLS>| token to prompts (e.g., ``\texttt{An <CLS> armchair in the living room}'').
We provide ablations in the supplement showing that \verb|<CLS>| improves identity metrics (CLIP-I, DINO-I) without degrading motion quality.

\subsection{Motion Awareness with Arbitrary Videos}
\label{subsec:temporal}

Fine-tuning a pretrained video model on SDI-Gen pairs alone often erodes temporal modeling and yields near-static generations at test time.
To preserve motion, we introduce an image-to-video (I2V) objective on a small set of arbitrary videos $(P_{\text{vid}}, V)\in\mathcal{D}_{\text{vid}}$.
For each video, we sample a reference frame index $i$ and set $I_{\text{ref}} = V_i$ as conditioning, while the full video $V$ serves as the reconstruction target.
We encode
\begin{equation}
    X_{\text{ref}} = \mathrm{VAE}(I_{\text{ref}}), \quad
    X_{\text{vid}} = \mathrm{VAE}(V), \quad
    C_T^{\text{vid}} = \mathrm{T5}(P_{\text{vid}}),
\end{equation}
and train MM-DiT for $X_{\text{vid}}$ conditioned on $(X_{\text{ref}}, C_T^{\text{vid}})$.
We adopt I2V fine-tuning (instead of pure T2V) since it matches the inference-time modality (reference image$\rightarrow$video) and isolates motion preservation from identity supervision.

\noindent\textbf{Mitigating reference-frame copying.}
Conditioning on a single frame can admit a degenerate solution that simply replicates the reference appearance across time with minimal motion.
We propose to apply two simple regularizers to discourage this behavior:
(i) \emph{random reference-frame sampling}, where $i$ is sampled uniformly over frames rather than fixed to the first frame;
(ii) \emph{image-token dropout}, where we randomly drop a subset of tokens in $X_{\text{ref}}$ with probability $p_{\text{drop}}$, encouraging the model to rely on temporal priors instead of copying.
We ablate these regularizers in the supplement.

\subsection{Training Procedure}
\label{subsec:training}

Putting everything together, fine-tuning proceeds as follows.
At each training step we sample $u \sim \mathcal{U}(0,1)$ and
\begin{enumerate}
    \item if $u < p$, draw a mini-batch of arbitrary videos $(T,V)$ from $\mathcal{D}_{\text{vid}}$, sample a random reference frame and apply image-token dropping, and update $\theta$ with $\mathcal{L}_{\text{vid}}(T,V)$;
    \item otherwise, draw subject-image pairs $(I^{(1)}, I^{(2)}, P)$ from $\mathcal{D}_{\text{img}}$ and update $\theta$ with $\mathcal{L}_{\text{img}}(I^{(1)}, I^{(2)}, P)$.
\end{enumerate}
This stochastic proxy replay maintains motion-awareness through occasional video updates while injecting identity from subject-image pairs in most steps.

\begin{wrapfigure}{r}{0.52\columnwidth}
  \centering
  \vspace{-8mm}
  \includegraphics[width=0.50\columnwidth]{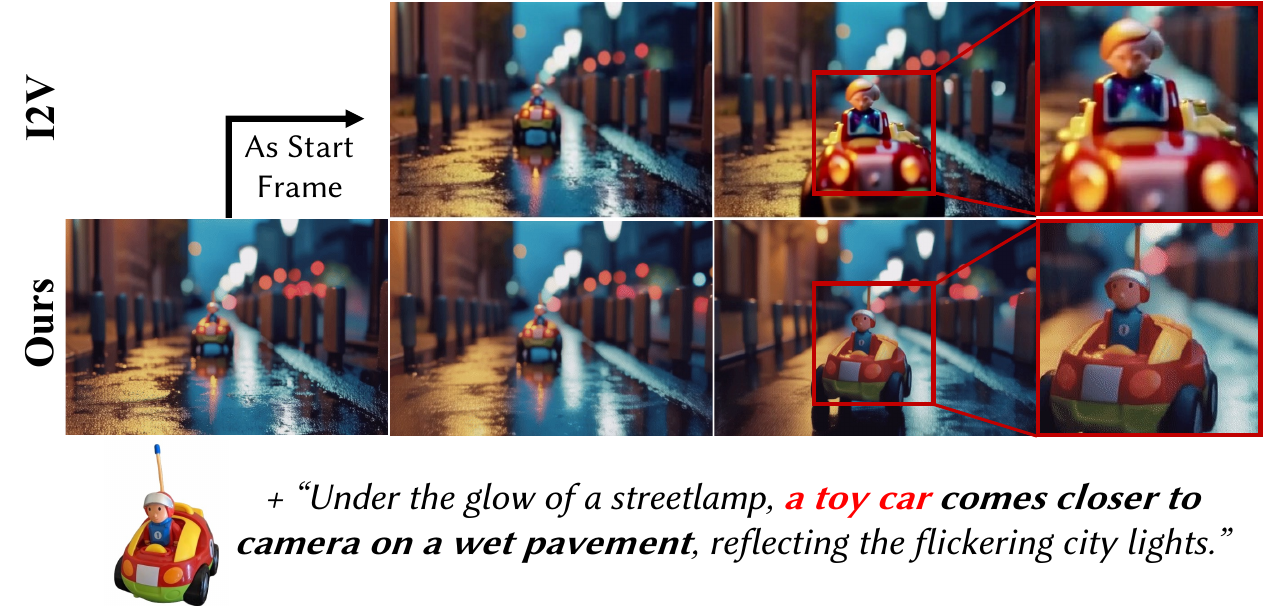}
  \vspace{-3mm}
  \caption{\textbf{Limitation of SDI-Gen$\rightarrow$I2V.} When the subject is small in the first frame, I2V fails to produce consistent results because it cannot reliably interpret low-resolution subjects.}
  \label{fig:i2v_limit}
  \vspace{-7mm}
\end{wrapfigure}

\noindent\textbf{SDI-Gen$\rightarrow$I2V.}
An alternative way to generate subject-driven video generation is to sequentially apply SDI-Gen model and I2V model. 
However, this pipeline relies on clear subject visibility in the initial frame for identity propagation, which often fails in dynamic scenes (e.g., ``a toy car comes closer to the camera on a wet pavement,'' Figure~\ref{fig:i2v_limit}).
When the subject is small or occluded, the image-to-video model may misidentify it and introduce inconsistent or fabricated details, causing a \emph{progressive loss of identity} in later frames.
We provide additional comparison with SDI-Gen$\rightarrow$I2V as baselines.
Please refer to the result in Sec.~\ref{sec:experiments}.
% (See supplement for details.)

\noindent\textbf{Computational Efficiency.}
We quantify the computational efficiency of our stochastic task-switching scheme by analyzing the expected per-step cost.  
Let $C_{\text{img}}$ and $C_{\text{vid}}$ denote the per-step cost of SDI-Gen and I2V updates, respectively.
Under MM-DiT attention, if a video contains $T$ latent frames, the total token count grows approximately linearly with $T$, so the attention cost grows approximately as $T^2$.
For $T{=}13$, we empirically observe $C_{\text{vid}} \approx 169\,C_{\text{img}}$.
With task-switch probability $p{=}0.2$, the expected cost per step is  
\begin{align}
\mathbb{E}[C] &= (1-p)C_{\text{img}} + pC_{\text{vid}} \notag 
\approx 0.8C_{\text{img}} + 0.2 \cdot 169C_{\text{img}} \approx 34.6\,C_{\text{img}},
\end{align}
which is substantially lower than video-only fine-tuning.  
Using only 4,000 arbitrary videos, we fine-tune T2V models within 288 A100-hours while achieving competitive zero-shot SDV-Gen performance (Table~\ref{tab:compute}).
% Using only 4,000 unpaired videos ($\sim$1\% of Pexels-400K) yields 288 A100-hours while achieving competitive zero-shot SDV-Gen performance (Table~\ref{tab:compute}).

\section{Gradient Dynamics in Dual-Task Learning}
\label{sec:grad-dynamics}
In Sec.~\ref{subsec:dual-task}, we formulated SDV-Gen as a dual-task learning problem with two losses, $\mathcal{L}_{\text{img}}$ and $\mathcal{L}_{\text{vid}}$, and their corresponding gradients $g_{\text{img}}(t)$ and $g_{\text{vid}}(t)$.
To analyze gradient dynamics, we freeze the model at each checkpoint $\theta_t$ and compute $g_{\text{img}}(t)$ and $g_{\text{vid}}(t)$ on mini-batches from $\mathcal{D}_{\text{img}}$ and $\mathcal{D}_{\text{vid}}$ with respect to the trainable parameters $\theta_{\text{train}}$ (LoRA and task-specific normalization layers), matching the training setup.
We log their cosine similarity $\phi(t)$ and the gradient norms $\|g_{\text{img}}(t)\|_2$ and $\|g_{\text{vid}}(t)\|_2$.
Measurement protocols are provided in the supplement.

\subsection{Do the Gradients Become Orthogonal?}
\label{subsec:grad-results}

Figure~\ref{fig:grad_analysis} plots the evolution of $\phi(t)$ under stochastic dual-task learning.
We fine-tune pretrained CogVideoX-5B and Wan 2.2-5B using Subject-200K~\cite{ominicontrol} as subject-image supervision and a small set of arbitrary videos from Pexels~\cite{pexels}.
Starting from initialization, the cosine similarity rapidly converges to a narrow band around zero and remains there for the rest of fine-tuning regardless of the model used.
This supports the hypothesis that, after a short transient, identity injection and motion-awareness preservation update nearly orthogonal directions in the trainable parameter subspace.

To rule out the trivial explanation that $\phi(t)$ is close to zero only because both gradients vanish as training converges, we additionally track gradient norms.
As shown in Fig.~\ref{fig:grad_analysis}, both $\|g_{\text{img}}(t)\|_2$ and $\|g_{\text{vid}}(t)\|_2$ remain well above the numerical noise floor throughout fine-tuning and stay within the same order of magnitude after the initial drop.
The decay of $\phi(t)$ therefore reflects a genuine change in gradient \emph{direction} rather than an artifact of vanishing magnitudes.

\begin{figure}[t]
    \centering
    \includegraphics[width=\columnwidth, trim={0.75 0 0.75 0}]{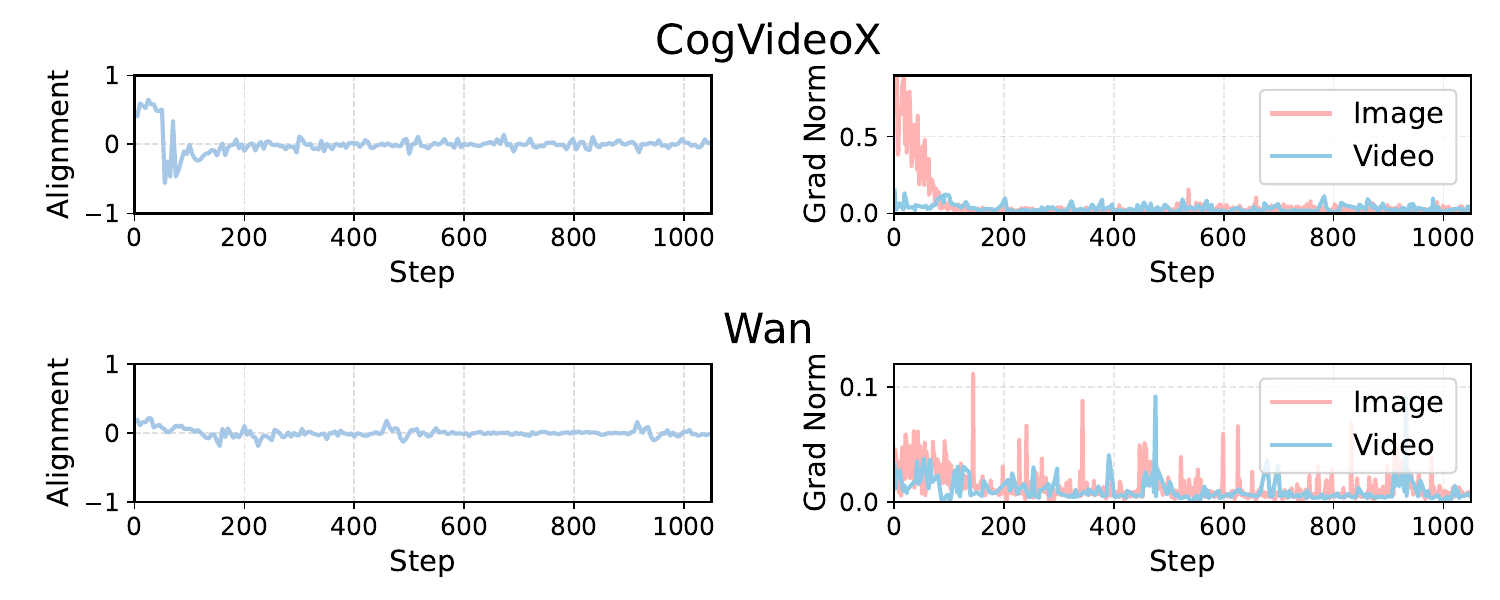}
    \vspace{-8mm}
    \caption{\textbf{Gradient analysis on alignment and norms during fine-tuning on CogVideoX (Top) and Wan (Bottom).} \textbf{(Left)} Cosine similarity $\phi(t)$ between $g_{\text{img}}$ and $g_{\text{vid}}$ (over trainable parameters) quickly collapses to a narrow band near zero under dual-task training, indicating emergent near-orthogonality. \textbf{(Right)} $\ell_2$ norms $\|g_{\text{img}}(t)\|_2$ and $\|g_{\text{vid}}(t)\|_2$ remain non-negligible and similar in scale after 100-step.}
    \vspace{-2mm}
    \label{fig:grad_analysis}
\end{figure}

% \subsection{Gradient Convergence}
\subsection{Why Does the Gradient Conflict Diminish?}
\label{subsec:grad-intuition}

The empirical behavior above suggests that stochastic switching between $\mathcal{L}_{\text{img}}$ and $\mathcal{L}_{\text{vid}}$ tends to reduce their gradient inner product over training.
For intuition, we provide a simplified analysis in the supplement (Appendix~C) based on a local second-order (quadratic) approximation.

% \begin{proposition}[Convergence of gradient inner product]
% \label{prop:orth}
% Consider two loss functions $\mathcal{L}_1$ and $\mathcal{L}_2$ that are $L$-smooth and locally convex in a neighborhood of convergence, and the stochastic update rule
% \begin{equation}
%     \theta_{t+1} = \theta_t - \eta g_t, \quad
%     g_t =
%     \begin{cases}
%         \nabla \mathcal{L}_1(\theta_t), & \text{w.p. } 1-p, \\
%         \nabla \mathcal{L}_2(\theta_t), & \text{w.p. } p,
%     \end{cases}
% \end{equation}
% with a sufficiently small step size $\eta$.
% If the initial gradients exhibit non-trivial conflict, i.e., $\langle \nabla \mathcal{L}_1(\theta_0), \nabla \mathcal{L}_2(\theta_0)\rangle \neq 0$, then under the above assumptions the expected gradient inner product $\mathbb{E}[\langle \nabla \mathcal{L}_1(\theta_t), \nabla \mathcal{L}_2(\theta_t)\rangle]$ converges toward zero as $t$ increases.
% \end{proposition}
\begin{proposition}[Local decay of gradient inner product]
\label{prop:orth}
Under the local second-order assumptions, let
\begin{equation}
\bar{\mathcal{L}}(\theta)=(1-p)\mathcal{L}_1(\theta)+p\mathcal{L}_2(\theta),
\end{equation}
and consider gradient descent on the mixture loss. Then there exist constants $C>0$ and $\rho\in(0,1)$ such that
\begin{equation}
\left|\left\langle \nabla \mathcal{L}_1(\theta_t),\nabla \mathcal{L}_2(\theta_t)\right\rangle\right|
\le C\rho^t.
\end{equation}
In particular,
\begin{equation}
\lim_{t\to\infty}
\left\langle \nabla \mathcal{L}_1(\theta_t),\nabla \mathcal{L}_2(\theta_t)\right\rangle = 0.
\end{equation}
\end{proposition}

\subsection{Relation to Gradient-Surgery Method}

Several works in multi-task learning explicitly manipulate gradients to resolve conflicts, such as Gradient Surgery~\cite{gradientsurgery}, which projects each task gradient to remove components that conflict with other tasks.
In our two-task setting, Gradient Surgery would replace $g_{\text{img}}$ by $g^{\text{GS}}_{\text{img}} = g_{\text{img}} - \frac{\langle g_{\text{img}}, g_{\text{vid}}\rangle}{\|g_{\text{vid}}\|_2^2} g_{\text{vid}}$ if $\langle g_{\text{img}}, g_{\text{vid}}\rangle < 0$
% \begin{equation}
%     g^{\text{GS}}_{\text{img}} =
%     \begin{cases}
%         g_{\text{img}} - \dfrac{\langle g_{\text{img}}, g_{\text{vid}}\rangle}{\|g_{\text{vid}}\|_2^2} g_{\text{vid}}, & \text{if } \langle g_{\text{img}}, g_{\text{vid}}\rangle < 0, \\
%         g_{\text{img}}, & \text{otherwise},
%     \end{cases}
% \end{equation}
and analogously for $g_{\text{vid}}$.
This guarantees $\langle g^{\text{GS}}_{\text{img}}, g_{\text{vid}}\rangle \ge 0$ at each step, but requires computing \emph{both} gradients and performing additional projections at every iteration, roughly doubling the gradient-computation cost compared to sampling a single task.

Our empirical analysis shows that the simpler stochastic switching scheme drives the gradient cosine $\phi(t)$ toward zero while keeping both norms non-vanishing, achieving a similar end effect (reducing gradient conflict) without explicit projections or extra gradient evaluations.
For this reason, we use dual-task learning with stochastic switching as our default, and regard Gradient Surgery-style techniques as unnecessary refinements in this setting.
We provide qualitative and quantitative comparisons between stochastic switching and Gradient Surgery in the supplement.

\section{Experiments}
\label{sec:experiments}
\subsection{Setup}
\noindent\textbf{Implementation.}
% We build our approach on CogVideoX-5B and Wan 2.2-5B, which adopts a 3D MM-DiT backbone~\cite{cogvideox} and X-DiT backbone~\cite{wan2025}, trained with a DDIM scheduler~\cite{rombach2021highresolution} Flow matching respectively.
We build our approach on CogVideoX-5B and Wan 2.2-5B. CogVideoX-5B uses a 3D MM-DiT backbone~\cite{cogvideox} with a DDIM scheduler~\cite{rombach2021highresolution}, while Wan 2.2-5B uses an X-DiT backbone~\cite{wan2025} trained with flow matching.
Fine-tuning with the two objectives is performed using AdamW with a learning rate of $5 \times 10^{-5}$ and a cosine-with-restarts schedule.
We fine-tune the CogVideoX-5B and Wan 2.2-5B for 4,000 steps ($\sim$288 A100 GPU hours) using BF16 mixed precision, with batch sizes of 256 for images and 32 for videos. 
We apply LoRA (rank 128, dropout 0.2) to the designated trainable layers and use stochastic switching with probability $p=0.2$, based on the empirical choice. We provide an ablation study of LoRA rank and $p$ in the supplement.

We use OminiControl’s Subject200K dataset as our subject-image pairs. To study data efficiency, we additionally evaluate 2,000 and 4,000 subsets of Subject200K.
We sample $\sim$4,000 arbitrary videos (1\% of Pexels 400K~\cite{pexels}), providing diverse real-world motion patterns.
We train two sampling-rate variants: an 8 FPS model aligned with the original CogVideoX setting, and a 16 FPS model further trained on Pexels videos to improve motion smoothness and temporal consistency at higher frame rates.
All ablations are conducted on CogVideoX, with additional implementation details deferred to the supplementary material.

\begin{table*}[t]
    \centering
    \caption{\textbf{VBench benchmark results}. We utilize publicly available versions for Phantom~\cite{phantom} and VACE~\cite{vace} based on Wan-2.1 1.3B~\cite{wan2025}. We denote the \hlc{orange!20}{first}, the \hlc{yellow!20}{second}, and the \hlc{gray!15}{third} best approaches.}
    \vspace{-3mm}
    \resizebox{0.95\textwidth}{!}{%
    \setlength{\tabcolsep}{2pt}
    \begin{tabular}{l ccccc}
        \toprule
        \textbf{\textit{SDI-Gen $\rightarrow$ I2V methods}}~~~~~~~~ & ~~\textbf{Motion Smooth.}~~ & ~~\textbf{Dynamic Degree}~~ & ~~~~~\textbf{CLIP-T}~~~~~ & ~~~~~\textbf{CLIP-I}~~~~~ & ~~~~~\textbf{DINO-I}~~~ \\
        \midrule
        OminiControl~\cite{ominicontrol} & 98.21 & 51.67 & 31.89 & 72.58 & 54.16  \\
        BLIP~\cite{blipdiffusion}  & 97.53 & 49.17 & 28.19 & \cellcolor{orange!20}79.29 & \cellcolor{gray!15}56.58 \\
        IP-Adapter~\cite{ipadapter}  & 97.21 & \cellcolor{gray!15}55.83 & 26.97 & 73.86 & 45.18 \\
        \midrule
        \textbf{\textit{SDV-Gen Methods}}~~~~~~~~  & ~~\textbf{Motion Smooth.}~~ & ~~\textbf{Dynamic Degree}~~ & ~~~~~\textbf{CLIP-T}~~~~~ & ~~~~~\textbf{CLIP-I}~~~~~ & ~~~~~\textbf{DINO-I}~~~ \\
        \midrule
        VideoBooth~\cite{videobooth} & 96.95 & 51.67 & 29.59 & 66.06 & 34.54\\
        Phantom-1.3B~\cite{phantom} & \cellcolor{orange!20}98.93 & 54.90 & \cellcolor{yellow!20}33.51 & 72.49 & 51.94 \\
        VACE-1.3B~\cite{vace} & \cellcolor{yellow!20}98.68 & 40.00 & \cellcolor{orange!20}33.60 & 73.35 & 52.68 \\[2pt]
     \cdashline{1-6}
        \noalign{\vskip 2pt}
        Ours (CogVideoX-5B) & 98.45 & \cellcolor{orange!20}69.64 & \cellcolor{gray!15}32.69 &  \cellcolor{gray!15}77.14 & \cellcolor{orange!20}62.88 \\
        Ours (Wan 2.2-5B) & \cellcolor{gray!15}98.53 & \cellcolor{yellow!20}66.67 & 32.38 & \cellcolor{yellow!20}78.28 & \cellcolor{yellow!20}60.18 \\
        \bottomrule
    \end{tabular}%
    }
    \vspace{-3mm}
    \label{tab:quanti}
\end{table*}

\begin{table*}[t]
\centering
% \caption{\textbf{Quantitative comparison on OpenS2V benchmark.} Total score is the normalized weighted sum of other scores. Higher is better. We denote with best total score in \textcolor{orange!20}{orange} and second best score in \textcolor{yellow!20}{yellow}}
\caption{\textbf{OpenS2V benchmark results.} Total score is the normalized weighted sum of other scores. We denote the \hlc{orange!20}{first}, the \hlc{yellow!20}{second}, and the \hlc{gray!15}{third} best approaches.}
\label{tab:single_domain_stv}
\resizebox{\textwidth}{!}{%
\begin{tabular}{l c c c c c c c c}
\toprule
Method & Training Cost & Total Score$\uparrow$ & Aesthetics$\uparrow$ & Motion$\uparrow$ & FaceSim$\uparrow$ & GmeScore$\uparrow$ & NexusScore$\uparrow$ & NaturalScore$\uparrow$ \\
\midrule
Vidu 2.0          & -               & 48.67 & 34.78 & 24.40 & 36.20 & 65.56 & 45.20 & 72.60 \\
Pika 2.1          & -               & 48.93 & 38.64 & 31.90 & 32.94 & 62.19 & 47.34 & 70.60 \\
Kling 1.6         & -               & 53.12 & 35.63 & 36.40 & 39.26 & 61.99 & 48.24 & 81.40 \\
\midrule
VACE-P1.3B~\cite{vace}       & $\approx$70K hrs  & 44.28 & 42.58 & 18.00 & 18.02 & 65.93 & 36.26 & 76.00 \\
VACE-1.3B~\cite{vace}       & $\approx$70K hrs  & 47.33 & 41.81 & 33.78 & 22.38 & 65.35 & 38.52 & 76.00 \\
\rowcolor{orange!20}VACE-14B~\cite{vace}         & $\approx$210K hrs & \textbf{58.00} & 41.30 & 35.54 & 64.65 & 58.55 & 51.33 & 77.33 \\
Phantom-1.3B~\cite{phantom}     & $\approx$10K hrs  & 49.95 & 42.98 & 19.30 & 44.03 & 65.61 & 37.78 & 76.00 \\
\rowcolor{gray!15}Phantom-14B~\cite{phantom}      & $\approx$30K hrs  & \textbf{53.17} & 47.46 & 41.55 & 51.82 & 70.07 & 35.35 & 69.35 \\
SkyReels-A2-P14B~\cite{chen2025skyreels} & -               & 51.64 & 33.83 & 21.60 & 54.42 & 61.93 & 48.63 & 70.60 \\
HunyuanCustom~\cite{hu2025hunyuancustom}    & -               & 51.64 & 34.08 & 26.83 & 55.93 & 54.31 & 50.75 & 68.66 \\
\midrule
Ours (CogVideoX-5B)~\cite{cogvideox}            & 288 hrs      & 50.05 & 45.40 & 19.38 & 18.05 & 70.53 & 41.23 & 68.52 \\
\rowcolor{yellow!20}Ours (Wan 2.2-5B)~\cite{wan2025} & 288 hrs & \textbf{53.26} & 42.87 & 15.98 & 36.77 & 66.31 & 40.72 & 72.89 \\
\bottomrule
\vspace{-7mm}
\end{tabular}%
}
\end{table*}

% \begin{table}[t]
%     \centering
%     % \vspace{-18mm}
%     \caption{Ablation result on training strategy of alternating optimization compared to image-only and two-stage fine-tuning approaches.}
%     \vspace{-3mm}
%     \resizebox{0.95\columnwidth}{!}{%
%     \setlength{\tabcolsep}{2pt}
%     \begin{tabular}{l ccccc}
%         \toprule
%         \multirow{2}{*}{\textbf{Training Method}} &\textbf{Motion} & \textbf{Dynamic} & \multirow{2}{*}{\textbf{CLIP-T}} & \multirow{2}{*}{\textbf{CLIP-I}} & \multirow{2}{*}{\textbf{DINO-I}} \\
%         & \textbf{Smoothness} & \textbf{Degree} & & & \\
%         \midrule
%         Image-only& 99.60 & 0.84 & 32.67 & 71.15 & 43.19 \\
%         Two-stage & 96.04 & 81.51& 28.96 & 84.73 & 76.13  \\
%         \midrule
%         Ours &   98.45 & 69.64 & 32.69 & 77.14 & 62.88  \\
%         \bottomrule
%     \end{tabular}%
%     }
%     % \vspace{-3mm}
%     \label{tab:abl_train_st}
% \end{table}

% \vspace{4mm}
\noindent\textbf{Baseline.}
For zero-shot baselines, we use the SDV-Gen models of VideoBooth~\cite{videobooth}, Phantom~\cite{phantom} and VACE~\cite{vace} with Wan 2.1-1.3B and additionally compare with state-of-the-art SDI-Gen models OminiControl~\cite{ominicontrol}, BLIP-Diffusion~\cite{blipdiffusion}, and IP -Adapter~\cite{ipadapter}, combined with the CogVideoX-5B I2V model: each first performs subject-driven image generation with its original setup, and we then generate videos via CogVideoX-5B. 
% These constitute strong zero-shot subject-driven video baselines. 
For per-subject tuning, we additionally comapre with Still-Moving~\cite{stillmoving} and CustomCrafter~\cite{customcrafter} qualitatively, using the results reported in their paper due to unavailable code or costly per-subject tuning.

\vspace{0.2mm}
\noindent\textbf{Evaluation Details.}
We randomly picked 30 reference images from state-of-the-art SDI-Gen sets~\cite{chosenone,stillmoving} and the traditional DreamBooth dataset~\cite{dreambooth}. 
We utilize GPT to generate 4 prompts for each image comprising of a total 120 pairs for video generation.
Utilizing 120 videos, we report VBench metrics~\cite{vbench}: \textit{Motion Smoothness} (temporal consistency), \textit{Dynamic Degree} (amount of motion), \textit{CLIP-T} (text-video alignment), \textit{CLIP-I} (CLIP-based image similarity), and \textit{DINO-I} (DINO-based subject fidelity).
Additionally, we provide OpenS2V~\cite{s2vnexus} evaluation benchmark result.

\vspace{-0.5mm}
\subsection{Comparison with Baselines}
\label{sec:tunefree}

% \vspace{0.2mm}
\noindent\textbf{Quantitative Analysis.}

% Table~\ref{tab:quanti} shows our method outperforming all baselines in zero-shot SDV-Gen, achieving the highest dynamic degree (69.64) and identity consistency (DINO-I, 62.88), while maintaining strong motion smoothness (98.45) and text alignment (CLIP-T, 32.69).
% Compared to recent SDV-Gen methods such as Phantom and VACE, our approach yields competitive dynamics and identity scores even with much less training time and data.
% Among SDI-Gen$\rightarrow$I2V based pipelines (OminiControl+I2V, BLIP+I2V, IP-Adapter+I2V), BLIP+I2V attains the highest CLIP-I (79.29), but with weaker dynamics and DINO-I; our method offers a more balanced trade-off between motion and subject fidelity.
\noindent Table~\ref{tab:quanti} reports VBench metrics for zero-shot SDV-Gen (Phantom and VACE use the publicly available Wan-2.1 1.3B versions).  
On VBench, our CogVideoX variant achieves the best Dynamic Degree (69.64) and the best identity consistency (DINO-I, 62.88), while maintaining high motion smoothness (98.45).  
Our Wan variant shows consistent gains with Dynamic Degree 66.67 and DINO-I 60.18, and also attains strong CLIP-I (78.28).  
While Phantom and VACE achieve slightly higher CLIP-T (33.51/33.60), our method remains competitive on text alignment (32.69/32.38) and substantially improves dynamics and identity.  
Among SDI-Gen$\rightarrow$I2V pipelines, BLIP+I2V attains highest CLIP-I (79.29) but exhibits weaker dynamics (49.17) and lower DINO-I (56.58), whereas our approach provides a more balanced trade-off between motion and subject fidelity.

Table~\ref{tab:single_domain_stv} further evaluates Open S2V, where our Wan model achieves a total score of 53.26\% with only 288 training hours, surpassing Phantom-14B (53.17\%, about 30K hours) and remaining close to stronger large-scale baselines.  
Our CogVideoX model reaches 50.05\% total score with the same 288-hour budget and achieves the best GmeScore (70.53\%) among the listed methods.  
VACE-14B attains the highest total score (58.00\%) but requires about 210K training hours, highlighting the favorable compute-performance trade-off of our approach.

\begin{figure*}[t]
    % \vspace{-10mm}
    \centering
    \captionsetup{type=figure}
    \includegraphics[width=0.98\textwidth]{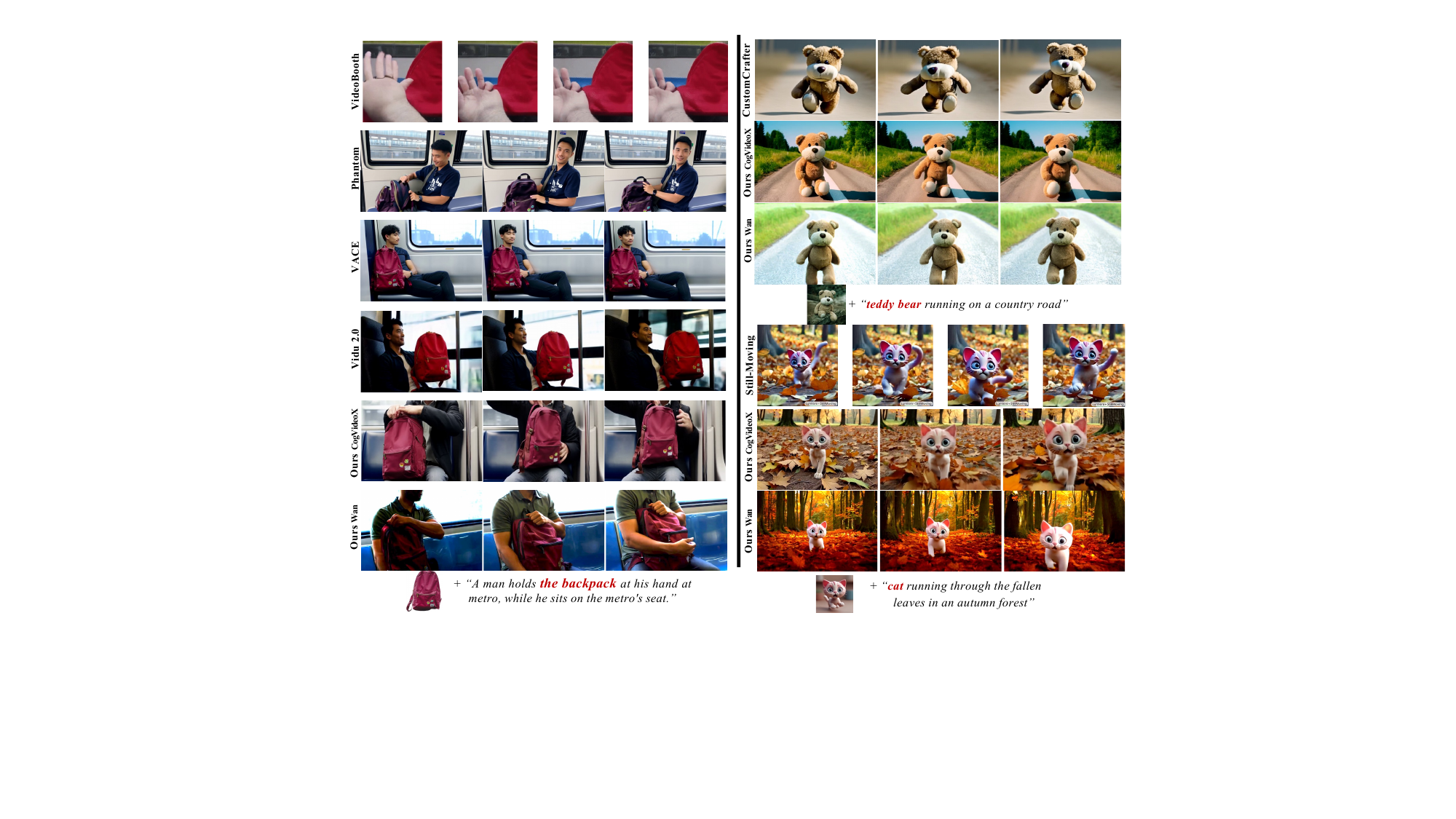}
    \vspace{-3mm}
    \caption{\textbf{Qualitative comparison with zero-shot methods (left) and per-subject tuning methods (right).} 
    Ours CogvideoX denotes our model fine-tuned on CogVideoX-5B, and Ours Wan denotes our model fine-tuned on Wan 2.2-5B. Note that ours is zero-shot, requiring no per-subject tuning at inference time.}
    % Ours~\emph{mini} denotes our model fine-tuned with 4,000 subset of Subject-200K~\cite{ominicontrol} and 4,000 arbitrary videos. Note that ours is zero-shot, requiring no per-subject tuning at inference time.}
    \vspace{-6mm}
    \label{fig:quali}
    
\end{figure*}

\begin{wraptable}{r}{0.55\columnwidth}
  \centering
  \vspace{-10mm}
  \caption{Ablation study on training strategy.}
  % \vspace{-3mm}
  \resizebox{\linewidth}{!}{%
    \setlength{\tabcolsep}{2pt}
    \begin{tabular}{l ccccc}
      \toprule
      \multirow{2}{*}{\textbf{Method}} & \textbf{Motion} & \textbf{Dynamic} & \multirow{2}{*}{\textbf{CLIP-T}} & \multirow{2}{*}{\textbf{CLIP-I}} & \multirow{2}{*}{\textbf{DINO-I}} \\
      & \textbf{Smoothness} & \textbf{Degree} & & & \\
      \midrule
      Image-only & 99.60 & 0.84  & 32.67 & 71.15 & 43.19 \\
      Two-stage  & 96.04 & 81.51 & 28.96 & 84.73 & 76.13 \\
      \midrule
      Ours       & 98.45 & 69.64 & 32.69 & 77.14 & 62.88 \\
      \bottomrule
    \end{tabular}%
  }
  \vspace{-8mm}
  \label{tab:abl_train_st}
\end{wraptable}

\noindent\textbf{Qualitative Analysis.}
% As illustrated on the left of Fig~\ref{fig:quali}, our method demonstrates better detail, ID consistency, and motion-awareness than tuning-free baselines Vidu~2.0 and VideoBooth, and is competitive with Phantom and VACE with Wan.
% For the backpack example, our model faithfully reproduces fine textures and foreground motion, whereas Vidu~2.0 preserves ID but exhibits jittery trajectories, and VideoBooth yields the weakest identity fidelity.
% Our approach preserves identity under diverse motions, matching the quantitative gains observed in Tab~\ref{tab:quanti}.
% We show qualitative comparisons with CustomCrafter~\cite{customcrafter} and Still-Moving~\cite{stillmoving} using their official samples in Fig~\ref{fig:quali} (right).
% Our results in rows \{2, 3, 5, 6\} remain more identity-faithful, whereas CustomCrafter often exhibits distorted or drifting details, with Still-Moving~\cite{stillmoving} neglects details such as whiskers.
% Also note that ours~\emph{mini}, fine-tuned only with 4,000 pairs of SDI-Gen data, also demonstrates competitive quality to baselines.
As shown on the left of Fig.~\ref{fig:quali}, our method produces sharper subject details and more consistent identity than tuning-free baselines such as Vidu~2.0 and VideoBooth, while remaining competitive with recent SDV-Gen methods (Phantom and VACE).  
In the backpack example, VideoBooth largely fails to follow the prompt, collapsing to unrelated close-up frames.  
Vidu~2.0 preserves the backpack appearance but shows less stable motion across frames.  
In contrast, our CogVideoX and Wan variants better retain fine backpack textures and maintain coherent foreground motion, consistent with the VBench gains in Tab.~\ref{tab:quanti}.  

We also compare against per-subject tuning methods CustomCrafter~\cite{customcrafter} and Still-Moving~\cite{stillmoving} using their official samples (Fig.~\ref{fig:quali}, right).  
For the teddy-bear sequence, CustomCrafter exhibits noticeable shape/texture drift across frames, whereas our results are more identity-faithful.  
For the cat sequence, Still-Moving tends to lose fine facial details (e.g., whiskers) and shows weaker detail preservation, while our results better maintain subject appearance under motion.  
We further find that \emph{ours-mini}, trained with only 4{,}000 SDI-Gen pairs, remains competitive against these baselines. We discuss this further in the ablation below.
% \begin{figure}[t]
%     \centering
%     \includegraphics[width=0.995\columnwidth]{eccv26/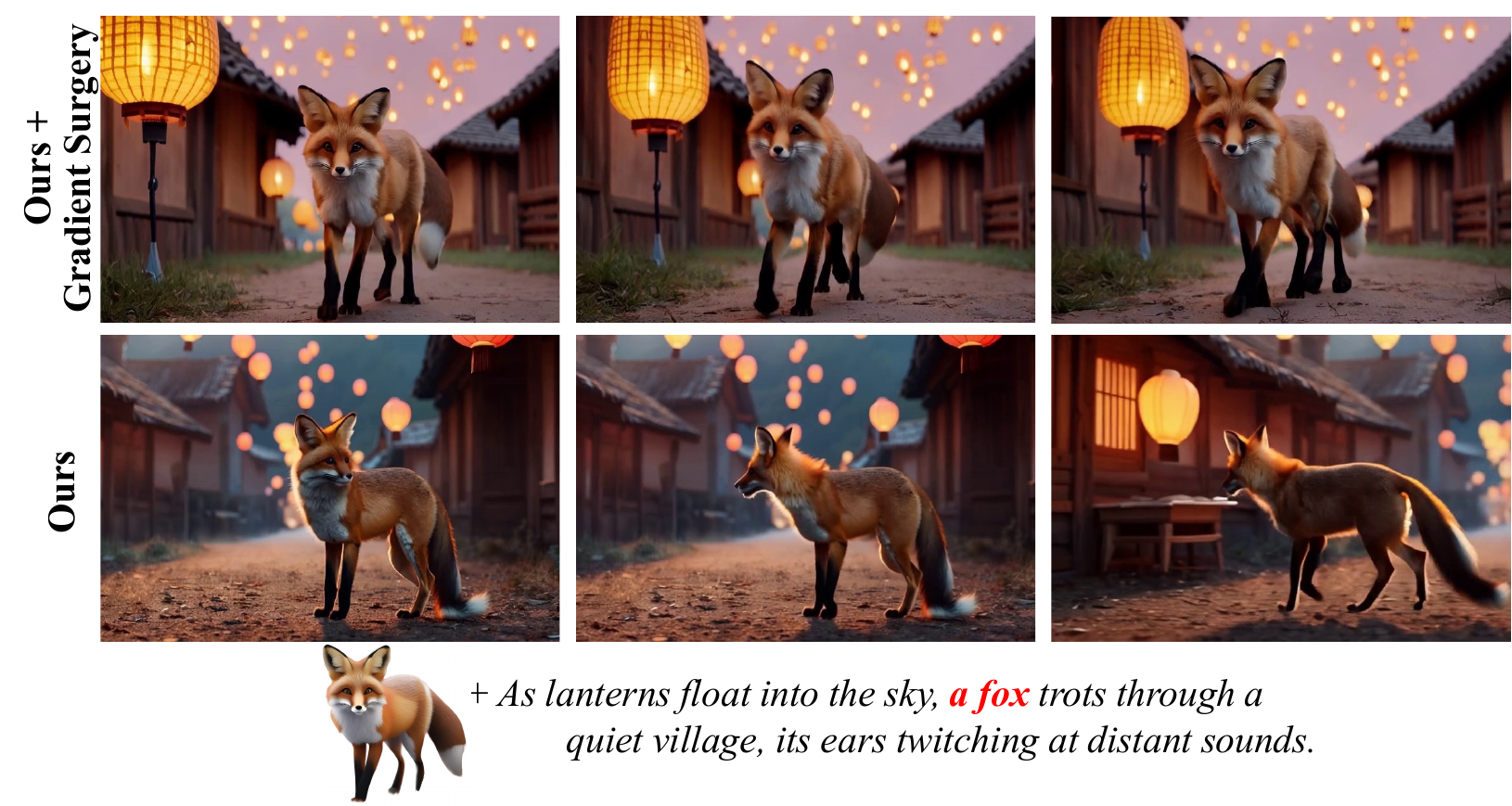}
%     \vspace{-5mm}
%     \caption{\textbf{Qualitative Comparison with Gradient Surgery.}}
%     \label{fig:pcgrad}
%     \vspace{-5mm}
% \end{figure}
% \begin{wrapfigure}{r}{0.52\columnwidth}
%   \centering
%   \vspace{-2mm}
%   \includegraphics[width=\linewidth]{eccv26/Figures/fig8_pcgrad_260303_0153.pdf}
%   \vspace{-3mm}
%   \caption{\textbf{Qualitative Comparison with Gradient Surgery.}}
%   \label{fig:pcgrad}
%   \vspace{-2mm}
% \end{wrapfigure}

\subsection{Ablation Study}
\noindent\textbf{Training Strategy.} 

\begin{wraptable}{r}{0.56\columnwidth}
    \centering
    \vspace{-12mm}
    \caption{Ablation study on the number of subject-image pairs.}
    \label{tab:abl_train_size}
    % \vspace{-2mm}
    \setlength{\tabcolsep}{2pt}
    \renewcommand{\arraystretch}{1.05}
    \resizebox{0.56\columnwidth}{!}{%
    \begin{tabular}{l cccccc}
        \toprule
        \multirow{2}{*}{\textbf{}} & \textbf{\# of Pairs} & \textbf{Motion} & \textbf{Dynamic} & \multirow{2}{*}{\textbf{CLIP-T}} & \multirow{2}{*}{\textbf{CLIP-I}} & \multirow{2}{*}{\textbf{DINO-I}} \\
        & \textbf{Subject-Images} & \textbf{Smoothness} & \textbf{Degree} & & & \\
        \midrule
        Ours-\emph{tiny} & 2{,}000 & 98.72 & 60.71 & 33.38 & 74.72 & 57.03 \\
        Ours-\emph{mini} & 4{,}000 & 97.99 & 78.33 & 33.22 & 75.87 & 58.86 \\
        \midrule
        Ours            & 200K    & 98.45 & 69.64 & 32.69 & 77.14 & 62.88 \\
        \bottomrule
    \end{tabular}%
    }
    \vspace{-6mm}
\end{wraptable}
\noindent We compare three training strategies: (1) alternating optimization (Ours), (2) image-only, and (3) sequential two-stage (\ie, image-only $\rightarrow$ image-to-video) fine-tuning approaches.
Image-only fine-tuning produced high motion smoothness (99.60) because static videos (0.84 dynamic degree) lead to smooth motion, indicating a need for improved dynamics. (Tab.~\ref{tab:abl_train_st})
Two-stage fine-tuning improved dynamics (81.51) and similarity (CLIP-I: 84.73) but introduced severe artifacts and identity forgetting. 
Our \emph{dual-task learning} excelled, achieving a balance between motion smoothness (98.72) and dynamic degree (60.19).

\begin{wrapfigure}{r}{0.56\textwidth}
    \centering
    \vspace{-8mm}
    \includegraphics[width=0.56\textwidth]{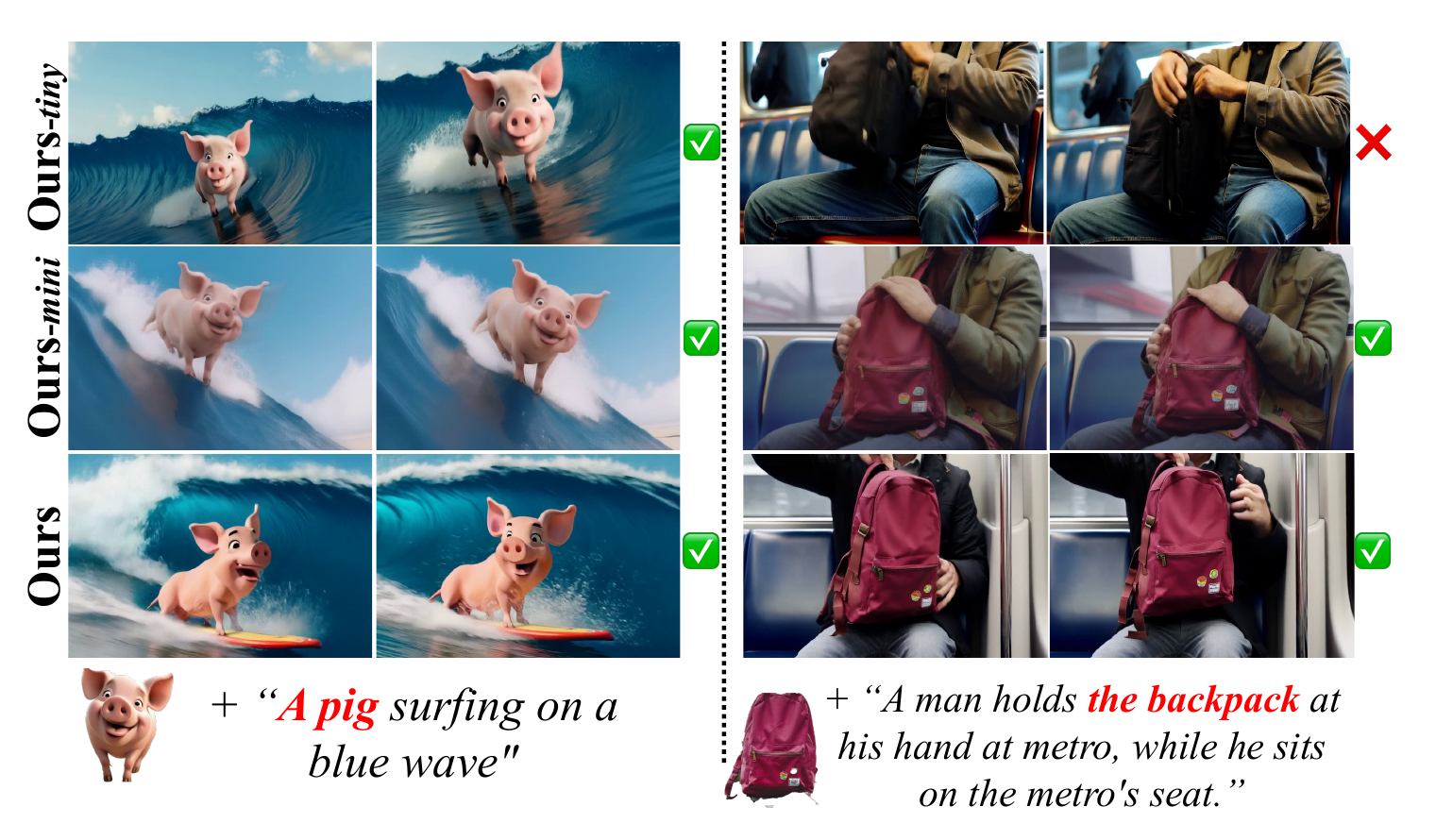}
    \vspace{-6mm}
    \caption{\textbf{Ablation on varying the number of subject-image pairs.} When it reduces to 2,000 pairs (Ours-tiny), it shows less dynamic motions and failure cases.}
    \label{fig:image_set_abl}
    \vspace{-8mm}
\end{wrapfigure}

\noindent\textbf{Comparison with Gradient Surgery.}
We also apply Gradient Surgery~\cite{gradientsurgery} on top of our dual-task learning, but observe only minor visual differences compared to our default stochastic switching.
Given the similar performance but roughly doubled gradient computation, we adopt the simpler stochastic schedule as our default and regard Gradient Surgery as an optional refinement.
We provide quantitative and qualitative comparisons in the supplement.

\noindent\textbf{Effect of Scaling the Image Dataset.}
While our method remains robust with only 4{,}000 subject-image pairs (Ours~\emph{mini}), increasing the dataset to the full 200K consistently improves fine-grained subject fidelity and temporal realism.
With smaller subject-image pairs, we occasionally observe a \emph{copy-paste} artifact where the subject appears overly rigid across time (e.g., a near-identical appearance/layout being reused), even though identity is preserved.
Importantly, this issue is not well captured by identity metrics (CLIP-I, DINO-I), as these scores largely reward consistency in subject identity and can remain high even when motion-dependent pose changes are under-realized.

Using the full 200K pairs mitigates this copy-paste behavior by encouraging more natural, motion-consistent identity rendering.
For example, in the \emph{pig surfing} case, the full-set model is more likely to reflect viewpoint and pose variations (\eg, head turns and body articulation) instead of reusing a static subject template, aligning with the qualitative trend in Fig.~\ref{fig:image_set_abl} and the stronger identity scores in Table~\ref{tab:abl_train_size}.
Overall, these observations suggest that identity injection \emph{scales}: a small subject-image pairs is sufficient to establish the identity mapping, while larger subject-image pairs coverage reduces copy-paste artifacts and better preserves high-frequency details under complex motions.

\begin{figure}[t]
    \centering
    \includegraphics[width=1.0\columnwidth]{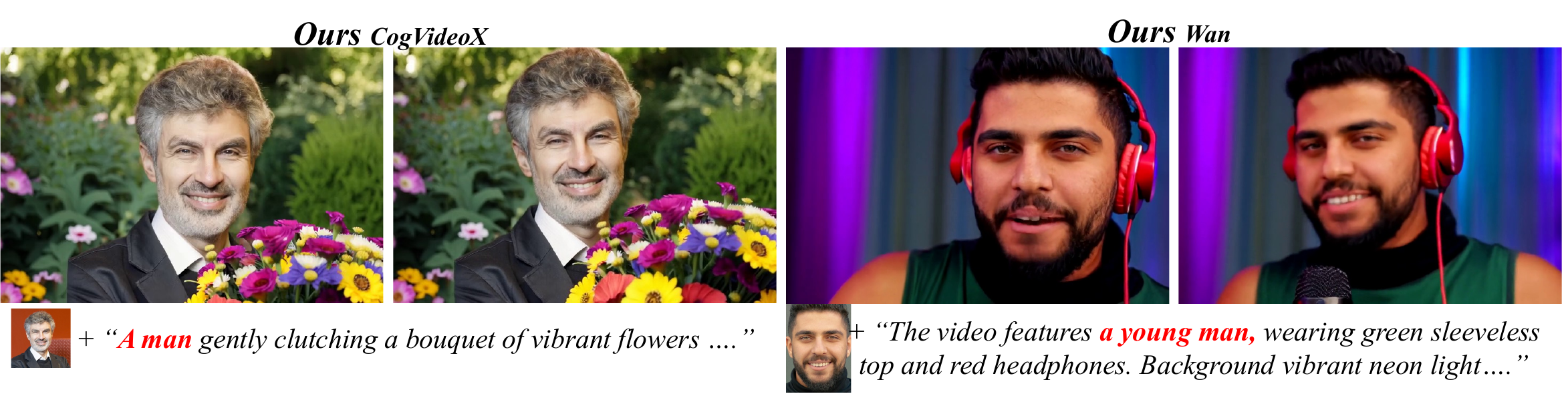}
        \vspace{-6mm}
    \caption{\textbf{Human-driven Video Generation.} \textit{Left:} CogVideoX. \textit{Right:} Wan. Without training on human-centric datasets, our model still preserves facial identity well.}
    \vspace{-5mm}
    \label{fig:humanface}
\end{figure}

\subsection{Limitations}
Our primary fine-tuning dataset, Subject-200K~\cite{ominicontrol}, is largely object-centric and includes very few human faces.
Despite this limited face coverage, our model generalizes reasonably well to facial subjects, preserving recognizable identity (Fig.~\ref{fig:humanface}). FaceSim scores on OpenS2V that are comparable to \emph{commercial} models (Table~\ref{tab:single_domain_stv}).
However, failures can still occur under strong stylization/extreme appearance changes, and motion can be challenged by rare high-dynamic trajectories that are under-represented in our current video pool.
% In the supplement, we stratify FVD by motion magnitude on CogVideoX and find performance comparable to the original CogVideoX across motion bins, yet the motion amplitude is still lower than some of the subject-driven methods, suggesting room for improvement.

Regarding the Motion score in Table~\ref{tab:single_domain_stv}, we verify that our extended CogVideoX model preserves the motion magnitude of the vanilla model, indicating that motion variation is not degraded even after our recipe is applied (details in the supplement). However, it is lower than state-of-the-art subject-driven methods, suggesting room for improvement.

\section{Conclusion}
% We recast subject-driven video generation as a dual-task problem that combines identity injection from paired subject-images with motion-awareness from a small set of unpaired videos.
% Using a simple stochastic task-switching scheme, our method replaces expensive paired subject-videos with 200K paired subject-images and 4,000 unpaired videos (which is only 288 A100-hours in total), while still generalizing to unseen subjects in a zero-shot manner.
% Gradient measurements within the backbone show that the image and video objectives quickly become nearly orthogonal, supporting the efficacy of joint optimization without explicit gradient surgery and yielding strong identity fidelity, motion quality, and text alignment compared to both per-subject-tuned and zero-shot SDV-Gen baselines.
We recast subject-driven video generation as a dual-task problem that combines identity injection from subject-image pairs with motion-awareness preservation from a small set of arbitrary videos.
Using a simple stochastic task-switching scheme, we train with 200K subject-image pairs and 4,000 arbitrary videos in 288 A100-hours, achieving competitive subject fidelity and motion quality while generalizing to unseen subjects in a zero-shot manner.
Across backbones, gradient measurements show that the image and video objectives rapidly become nearly orthogonal, enabling stable task switching without explicit gradient surgery and delivering strong identity fidelity, motion quality, and text alignment compared to both per-subject-tuned and zero-shot SDV-Gen baselines.

% \vspace{-2mm}
% \paragraph{Ethics Statement.}
% This work is purely a research project. 
% Currently, we have no plans to incorporate the developed method into a product or expand access to the public. 
% Our research paper accounts for the ethical concerns associated with video generation research. 
% To mitigate issues associated with training data, we have implemented a rigorous filtering process to remove inappropriate content, such as explicit imagery and offensive language, from our training data, thereby minimizing the likelihood of generating inappropriate content.

\vspace{-2mm}
\paragraph{Ethics Statement.}
This work is purely a research project. 
Currently, we have no plans to incorporate the developed method into a product or expand access to the public. 
Our research paper accounts for the ethical concerns associated with video generation research. 
To mitigate issues associated with training data, we have implemented a rigorous filtering process to remove inappropriate content, such as explicit imagery and offensive language, from our training data, thereby minimizing the likelihood of generating inappropriate content.

% ---- Bibliography ----
%
% BibTeX users should specify bibliography style 'splncs04'.
% References will then be sorted and formatted in the correct style.
%
\bibliographystyle{splncs04}
\bibliography{main}

\clearpage
\appendix

% Mark where supplementary material starts in the global .toc file.
\addtocontents{toc}{\protect\setcounter{tocdepth}{2}}

\begin{center}
{\Large\bfseries Supplementary Material}\\[0.4em]
{\large\itshape Learning Zero-Shot Subject-Driven Video Generation\\ Using 1\% Compute}
\end{center}

\vspace{1em}

{\small\bfseries Supplementary Contents\par}
\vspace{2mm}

\begingroup
\makeatletter
\def\authcount#1{}        % remove stray authcount output
\let\l@title\@gobbletwo   % hide title TOC entry
\let\l@author\@gobbletwo  % hide author TOC entry

% Important: hide everything before the appendix marker.
\setcounter{tocdepth}{-10}

\@starttoc{toc}
\makeatother
\endgroup

\vspace{3mm}

% \clearpage
% \appendix

% \maketitlesupplementary
% \label{sec:supplementary}

% \vspace{2em}
% \maketitle

% {\small\bfseries Supplementary Contents\par}
% \vspace{2mm}

% \begingroup
% \makeatletter
% \def\authcount#1{}      % stray 1 제거
% \let\l@title\@gobbletwo % title TOC entry 숨김
% \let\l@author\@gobbletwo % author TOC entry 숨김
% \setcounter{tocdepth}{2}
% \@starttoc{toc}
% \makeatother
% \endgroup

\section*{Notes}\addcontentsline{toc}{section}{Notes} We use \gmp{green} color to refer to figures, tables in the main manuscript (\eg, Figure~\gmp{2}) and use \textbf{black} color to refer to figures, tables in this supplementary material (\eg, Figure~\textbf{1}).

\appendix

\begin{algorithm}[t]
\footnotesize
\caption{\textbf{Dual-task Learning for Subject-Driven Video Generation}}
\label{alg:ours}
% \SetKwProg{Fn}{Function}{:}{\KwRet}
\SetKwInOut{Input}{Input}
\SetKwInOut{Output}{Output}
\DontPrintSemicolon
\Input{Pretrained CogVideoX-5B $f_\theta$ with LoRA; paired image set $\mathcal{D}_{\text{img}}$; proxy video set $\mathcal{D}_{\text{vid}}$; replay probability $p=0.2$; \texttt{RandomFrameSelect} and \texttt{TokenDrop}($p_{\text{drop}}=0.5$).}
\Output{Fine-tuned $f_\theta$ balancing identity fidelity and motion coherence}

\textbf{Problem Decomposition:}
\begin{itemize}
  \item \emph{Identity learning}: inject subject features from $\mathcal{D}_{\text{img}}$ with random crop/jitter and view drop
  \item \emph{motion-awareness preservation}: maintain motion priors via proxy replay from $\mathcal{D}_{\text{vid}}$
\end{itemize}

\ForEach{iteration $t \leftarrow 1,\dots,T_{\text{max}}$}{
    Sample $u \sim \mathcal{U}(0,1)$ for stochastic task selection\;
    
    \eIf{$u < p$}{
        \tcc{\textbf{Motion-awareness preservation}: video path}
        Decode batch $\{(T_i, V_i)\}$ from $\mathcal{D}_{\text{vid}}$ to $720 \times 480$, $8$ fps (max $49$ frames)\;
        $\mathbf{f}_{\text{ref}} \leftarrow \texttt{RandomFrameSelect}(V_i)$ \tcp*{mitigate frame bias}
        $\mathbf{X}_{\text{ref}} \leftarrow \texttt{TokenDrop}(\text{VAE}(\mathbf{f}_{\text{ref}}), p_{\text{drop}})$ \tcp*{drop ref tokens}
        Compute $\mathcal{L}_{\text{vid}}(T_i, V_i, \mathbf{X}_{\text{ref}})$ via v-prediction\;
        $\mathbf{g}_{\text{vid}} \leftarrow \nabla_\theta \mathcal{L}_{\text{vid}}$; $\theta \leftarrow \theta - \eta \mathbf{g}_{\text{vid}}$\;
    }{
        \tcc{\textbf{Identity learning}: image path}
        Decode paired views $\{(I^{\text{src}}_i, I^{\text{tgt}}_i)\}$ from $\mathcal{D}_{\text{img}}$ with crop jitter and optional view drop\;
        Apply LoRA-only updates on subject-token pathways\;
        Compute $\mathcal{L}_{\text{img}}(I^{\text{src}}_i, I^{\text{tgt}}_i)$ via v-prediction\;
        $\mathbf{g}_{\text{img}} \leftarrow \nabla_\theta \mathcal{L}_{\text{img}}$; $\theta \leftarrow \theta - \eta \mathbf{g}_{\text{img}}$\;
    }
    
    \tcc{Optional: monitor gradient interactions (off by default)}
    \If{$t \mod 100 = 0$}{
        $\phi_t \leftarrow \cos(\mathbf{g}_{\text{img}}, \mathbf{g}_{\text{vid}})$ \tcp*{conflict tracking / orthogonality}
    }
}
\KwRet{$f_\theta$}
\end{algorithm}
\vspace{-2mm}

\section{Implementation Details}

\subsection{Training Protocol}
\label{sec:algo_details}
Our subject-driven video generation (SDV-Gen) framework follows Algorithm~\ref{alg:ours}. We initialize a pre-trained multi-modal diffusion transformer $f_\theta$ (e.g., CogVideo X-5B~\cite{cogvideox}), which already provides strong text--visual alignment and motion priors.

Training uses two data sources: a subject-image pairs $\mathcal{D}_{\text{img}}$, consisting of image pairs $(I^{(1)}, I^{(2)})$ of the same subject under different poses or contexts, and an arbitrary video dataset $\mathcal{D}_{\text{vid}}$, consisting of text--video pairs $(T, V)$. Our key idea is to decompose SDV-Gen into two complementary objectives, identity injection and motion-aware preservation, and optimize them by stochastic interleaving during training.

At each iteration, we sample $u \sim \mathcal{U}(0,1)$. If $u < p$, we enter the \emph{motion-aware phase} and sample a mini-batch of arbitrary videos $(T_i, V_i)$. We then optimize the video reconstruction loss $\mathcal{L}_{\text{vid}}(T_i, V_i)$ to preserve or recover realistic motion dynamics. In this phase, \texttt{RandomFrameSelect} and \texttt{TokenDrop} are applied to prevent the model from overfitting to a single reference frame.

Otherwise, if $u \ge p$, we enter the \emph{identity phase} and sample a mini-batch of paired subject images $(I^{(1)}_i, I^{(2)}_i)$ from $\mathcal{D}_{\text{img}}$. We optimize the identity injection loss $\mathcal{L}_{\text{img}}(I^{(1)}_i, I^{(2)}_i)$ while updating only the LoRA parameters associated with the subject-conditioning pathway $\mathbf{X}_{\text{in}}$. This preserves the model's original capacity for text conditioning $\mathbf{C}_T$ and output generation $\mathbf{X}_{\text{out}}$.

Unless otherwise noted, we use $p = 0.2$. This stochastic alternation balances subject fidelity and temporal consistency throughout training, avoiding the drawbacks of purely sequential fine-tuning or single-objective optimization. After $T_{\max}$ iterations, the resulting model acts as a zero-shot SDV-Gen generator without requiring a large-scale annotated SDV-Gen dataset.

\subsection{Backbone-Specific Configurations}
We validate the generality of our method on two backbones, CogVideoX-5B and Wan 2.2-5B. The overall training recipe is shared across backbones, while several design choices are adapted to match each architecture. CogVideoX is based on MM-DiT, whereas Wan uses a cross-attention-based DiT variant.

\subsubsection{Shared Configuration}
We insert rank-$128$ LoRA adapters with scaling $\alpha = 64$ into the transformer. The adapters are applied to the attention projections (\texttt{to\_q}, \texttt{to\_k}, \texttt{to\_v}, \texttt{to\_out.0}) and selected MLP and normalization-related linear projections (e.g., \texttt{proj}, \texttt{text\_proj}, \texttt{norm1.linear}, \texttt{norm2.linear}, \texttt{ff.net.2}). All base weights remain frozen, and only the LoRA parameters are updated. Normalization parameters themselves are not directly optimized; any normalization-related adaptation is induced through the attached LoRA modules. We denote the union of all trainable parameters by $\theta_{\text{train}}$, which is also the set used for gradient logging in Sec.~\ref{sec:grad_measure}. Gradient checkpointing is enabled for memory efficiency.

\subsubsection{CogVideoX-5B}
We train CogVideoX-5B in bfloat16 with TF32 enabled, and use tiling/slicing to improve memory efficiency. Training sequences contain up to $49$ frames at $720\times480$ resolution and $8$ or $16$ FPS. In practice, videos are first decoded at a higher frame rate and then sub-sampled to $8$ or $16$ FPS for training.

We optimize the model using AdamW with learning rate $5\times10^{-5}$, $\beta_1 = 0.9$, $\beta_2 = 0.95$, cosine-with-restarts scheduling with $200$ warmup steps, and gradient clipping at $1.0$. We train for $30$ epochs with a global batch size of $32$, and save checkpoints every $50$ steps. In the identity phase, we apply per-view random crop jitter and randomly drop one reference view with probability $0.5$ to reduce pose overfitting.

\subsubsection{Wan 2.2-5B}
We further evaluate our method on Wan 2.2-5B using the same overall training protocol. Compared with CogVideoX, several backbone-specific adjustments are required. First, because Wan 2.2-5B is a TI2V model, timestep conditioning must be factorized between the reference frame and the latent video frames through masking. Second, the positional encoding treatment differs across backbones. While CogVideoX works well with a continuous extension of RoPE~\cite{rope}, we found that directly extending RoPE in Wan does not perform well. Third, to better match the native capacity of Wan 2.2-5B, we use sequences of up to $81$ frames at $1280\times704$ resolution and $16$ FPS.

Unless otherwise noted, we keep the same AdamW optimizer, learning-rate schedule, warmup, gradient clipping, and batch-size configuration as in CogVideoX.

\section{Gradient Measurement and Analysis}
\label{sec:grad_measure}
\subsection{Measurement Protocol}

Measuring full gradients of a multi-billion-parameter model is memory-intensive.
We therefore follow multi-task learning practice and compute on the LoRA parameters actually updated during fine-tuning, denoted $\theta_{\text{train}}$.

\paragraph{Trained Parameters.}
We compute statistics over the same trainable set $\theta_{\text{train}}$ defined in Sec.~\ref{sec:algo_details} (LoRA adapters on attention and selected projection layers; base weights and normalization scale/bias remain frozen).

At a checkpoint $\theta_t$, we freeze parameters and assemble two validation mini-batches, using the same preprocessing as training unless noted:
\begin{itemize}
  \item \textbf{Subject-image pairs}: $((I^{(1)}, P), I^{(2)})$ sampled from the validation split, disjoint from training subjects. Images are decoded to $720\!\times\!480$ with the same preprocessing as training (random crop jitter enabled; no view drop during measurement). Prompts $P$ are tokenized exactly as in training.
  \item \textbf{Arbitrary videos}: $(T, V)$ sampled from the validation split of the unapired video corpus. Videos are decoded on-the-fly, resized to $720\!\times\!480$, and truncated to at most $49$ frames rendered at $8$ or $16$ fps. A reference frame is selected uniformly at random (\texttt{RandomFrameSelect}).
\end{itemize}

\paragraph{Timestep Sampling and Objective.}
We use the same diffusion scheduler as in Sec.~\ref{sec:algo_details}. For each sample, draw a timestep $\tau \sim \text{Unif}\{0,\dots,N{-}1\}$ and add Gaussian noise according to the scheduler; the target is formed via velocity parameterization. We then compute gradients with respect to $\theta_{\text{train}}$:
\begin{align*}
g_{\text{img}}(t) \,=\, \nabla_{\theta_{\text{train}}} \, \mathcal{L}_{\text{img}}\big(I^{(1)}, I^{(2)}, P\big), \\
g_{\text{vid}}(t) \,=\, \nabla_{\theta_{\text{train}}} \, \mathcal{L}_{\text{vid}}(T, V). \qquad
\end{align*}

\paragraph{DDP Pre-sync Capture and Aggregation.}
In distributed data parallel training, we record \emph{pre-allreduce} gradients to avoid distortion from cross-rank synchronization.
For each dataset (Subject-image pairs/Arbitrary videos) and before any optimizer step:
\begin{enumerate}
  \item Register backward hooks on modules containing $\theta_{\text{train}}$ to copy per-parameter \texttt{.grad} into a preallocated flat buffer in a fixed parameter order.
  \item Capture the flat gradient on each rank before the framework performs all-reduce.
  \item Reconstruct the global gradient by reducing the sum of the flat buffers across ranks and dividing by the world size (equivalent to an average pre-sync gradient at the current step).
  \item Move the aggregated flat gradient to the CPU for logging to minimize device memory pressure. When gradient accumulation is used, we first accumulate local micro-batches, then capture the pre-sync aggregate.
\end{enumerate}

\begin{figure}[t]
    \centering
    \includegraphics[width=0.85\columnwidth]{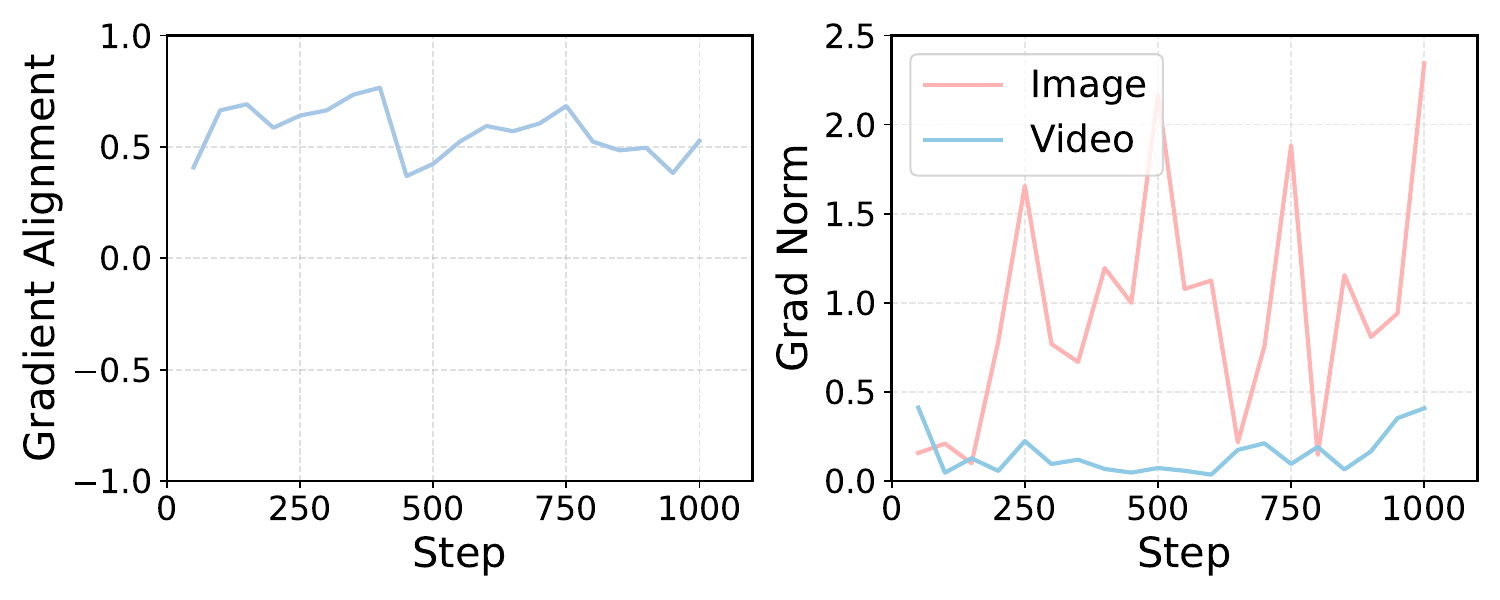}
    \vspace{-3mm}
    \caption{\textbf{Gradient alignment in CogVideoX-5B between image and video batches during image-only finetuning and gradient norms of two objectives.} Recorded gradient alignment and norm every 50 steps.}
    \label{fig:grad_align_img_only}
\end{figure}

\paragraph{Flattening and Layer-wise Grouping.}
Let $\tilde g_{\text{img}}(t)$ and $\tilde g_{\text{vid}}(t)$ be the aggregated flat vectors formed by concatenating per-parameter gradients from $\theta_{\text{train}}$ in a deterministic module/parameter order, skipping \texttt{None} entries. We also maintain indices to recover layer-wise blocks (e.g., attention vs. MLP-associated projections) for per-group statistics.

\paragraph{Cosine Similarity and Norms.}
We report the cosine similarity and L2 norms:
\begin{align*}
\phi(t) \,=\, \frac{\langle \tilde g_{\text{img}}(t), \tilde g_{\text{vid}}(t) \rangle}
{\lVert \tilde g_{\text{img}}(t) \rVert_2\, \lVert \tilde g_{\text{vid}}(t) \rVert_2 + \varepsilon},\\
\lVert \tilde g_{\text{img}}(t) \rVert_2,
\lVert \tilde g_{\text{vid}}(t) \rVert_2, \qquad \quad
\end{align*}
with a small $\varepsilon$ to avoid division by zero. When desired, we compute the same statistics per layer group by slicing the flat vectors using the stored indices.

\paragraph{Cadence and Compared Strategies.}
We repeat the above every $K$ training steps (we use $K{=}5$ to $50$ depending on the experiment.), holding the evaluation seed fixed and reusing the same validation batches over time for comparability. We compare: (a) SDI-Gen-only fine-tuning ($p{=}0$), (b) our stochastic dual-task learning ($p{=}0.2$).

% \begin{figure}[t]
%     \centering
%     \includegraphics[width=0.90\columnwidth]{Figures/img-only_grad_norm.png}
%     \vspace{-3mm}
%     \caption{\textbf{Gradient norms of the two objectives during Image-only finetuning.}
%     Recorded gradient norm every 50 steps.}
%     \label{fig:grad_norms_img_only}
% \end{figure}

\subsection{Image-only Finetuning and Comparison}
\label{sec:img_only_grad}

We now probe the gradient behaviour of a pure SDI-Gen set fine-tuning regime ($p{=}0$), where all optimizer steps are taken on $\mathcal{L}_{\text{img}}$ and the video loss $\mathcal{L}_{\text{vid}}$ is never used for training.
At checkpoints between steps $50$ and $1000$, we freeze $\theta$ and measure $g_{\text{img}}(t)$ and $g_{\text{vid}}(t)$ on fixed held-out image and video mini-batches following the protocol in Sec.~\ref{sec:grad_measure}; the resulting norms and cosine similarities are shown in Fig.~\ref{fig:grad_align_img_only}.

\paragraph{Norms vs.\ the Dual-task Schedule.}
In the main paper, Fig.~\gmp{5} shows that under our stochastic dual-task schedule the norms of $g_{\text{img}}(t)$ and $g_{\text{vid}}(t)$ rapidly settle to a similar and relatively small scale after the first $\sim$100 steps and remain non-negligible but stable thereafter.
In contrast, the image-only run in Fig.~\ref{fig:grad_align_img_only} exhibits persistently large and highly fluctuating image gradients: $\lVert g_{\text{img}}(t)\rVert_2$ grows well beyond the range observed in Fig.~\gmp{5} and does not decay to a steady plateau.
At the same time, the video gradient norm stays clearly non-zero and is substantially larger than in the dual-task case, even though $\mathcal{L}_{\text{vid}}$ is never optimized.
This indicates that image gradient updates continue to inject strong gradients into the shared LoRA subspace along directions that the video objective remains sensitive to, rather than gradually stabilizing that subspace, as in the dual-task setting.

\paragraph{Alignment vs.\ Emergent Orthogonality.} Figure~\gmp{5} in the main paper shows that, under dual-task training, the cosine similarity $\phi(t) = \cos\angle(g_{\text{img}}(t), g_{\text{vid}}(t))$ rapidly collapses into a narrow band around zero and stays there, revealing an emergent near-orthogonality between identity and temporal gradients while their norms remain non-vanishing.
By contrast, in the image-only run, the cosine in Fig.~\ref{fig:grad_align_img_only} never approaches this orthogonal regime; after an initial transient, it stays strongly positive, typically in the $0.4$–$0.75$ range across checkpoints.
Pure image fine-tuning thus does not lead to the decoupling observed in Fig.~\gmp{5} and instead keeps $g_{\text{img}}$ and $g_{\text{vid}}$ acting through highly overlapping directions in the LoRA parameter subspace.

\paragraph{Implications for Temporal Behavior.}

At first glance, a large positive cosine might seem desirable, since it suggests that the two gradients would cooperate if both were optimized simultaneously.
In the image-only setting, however, the optimizer follows only $-g_{\text{img}}(t)$.
% The strong and persistent alignment in Fig.~\ref{fig:grad_align_img_only}, together with the large and volatile norms, therefore means that every identity update moves the parameters along directions that are also highly sensitive to the video objective, but without any counterbalancing video gradient.
The strong and persistent alignment in Fig.~\ref{fig:grad_align_img_only}, combined with the large and highly variable gradient norms, implies that each identity step makes a substantial update in directions to which the video loss is also sensitive, yet no video gradient is applied to regularize or correct this drift.
Empirically, this matches the behavior of the ``Image-only'' baseline in Tab.~\gmp{4} of the main manuscript, which achieves high motion smoothness but almost zero dynamic degree, indicating near zero-motion (``frozen'') videos despite strong identity scores.
Comparing Fig.\gmp{5} with Fig.~\ref{fig:grad_align_img_only} thus suggests that our stochastic dual-task schedule is crucial for decoupling identity and temporal directions in parameter space, allowing identity injection to proceed without continuously eroding the pretrained motion prior.

\subsection{Gradient Surgery Details and Comparison}
\label{sec:pcgrad}

\paragraph{Gradient Surgery in Our Dual-task Setting.} We consider the two objectives used in the main paper (identity injection and motion-awareness preservation) and apply Projected Conflicting Gradient (PCGrad, Gradient Surgery) at the level of task gradients computed on the trainable set $\theta_{\text{train}}$ (Sec.~\ref{sec:algo_details}). At a given iteration we first obtain pre-synchronization gradients $g_{\text{img}}$ and $g_{\text{vid}}$ as in Sec.~\ref{sec:grad_measure}.
Gradient Surgery then removes components responsible for pairwise conflicts:
\begin{align*}
\tilde g_{\text{img}} &= g_{\text{img}} - \mathbf{1}[\langle g_{\text{img}}, g_{\text{vid}}\rangle < 0] \, \frac{\langle g_{\text{img}}, g_{\text{vid}}\rangle}{\lVert g_{\text{vid}}\rVert_2^2 + \varepsilon} \, g_{\text{vid}}, \\
\tilde g_{\text{vid}} &= g_{\text{vid}} - \mathbf{1}[\langle g_{\text{vid}}, g_{\text{img}}\rangle < 0] \, \frac{\langle g_{\text{vid}}, g_{\text{img}}\rangle}{\lVert g_{\text{img}}\rVert_2^2 + \varepsilon} \, g_{\text{img}},
\end{align*}
where $\varepsilon$ stabilizes divisions for small norms. The final update direction is $g^{\star} = \tilde (1-p)g_{\text{img}} + p\tilde g_{\text{vid}}$ with changing probability of \textit{p}.
In practice, we implement Gradient Surgery in a \emph{buffered} fashion: as training proceeds, we maintain a short FIFO buffer of recent pre-sync gradients annotated by domain and apply the projection once both domains have appeared within the buffer. This 
\begin{wrapfigure}{r}{0.55\columnwidth}
    \centering
    \vspace{-6mm}
    \includegraphics[width=0.98\linewidth]{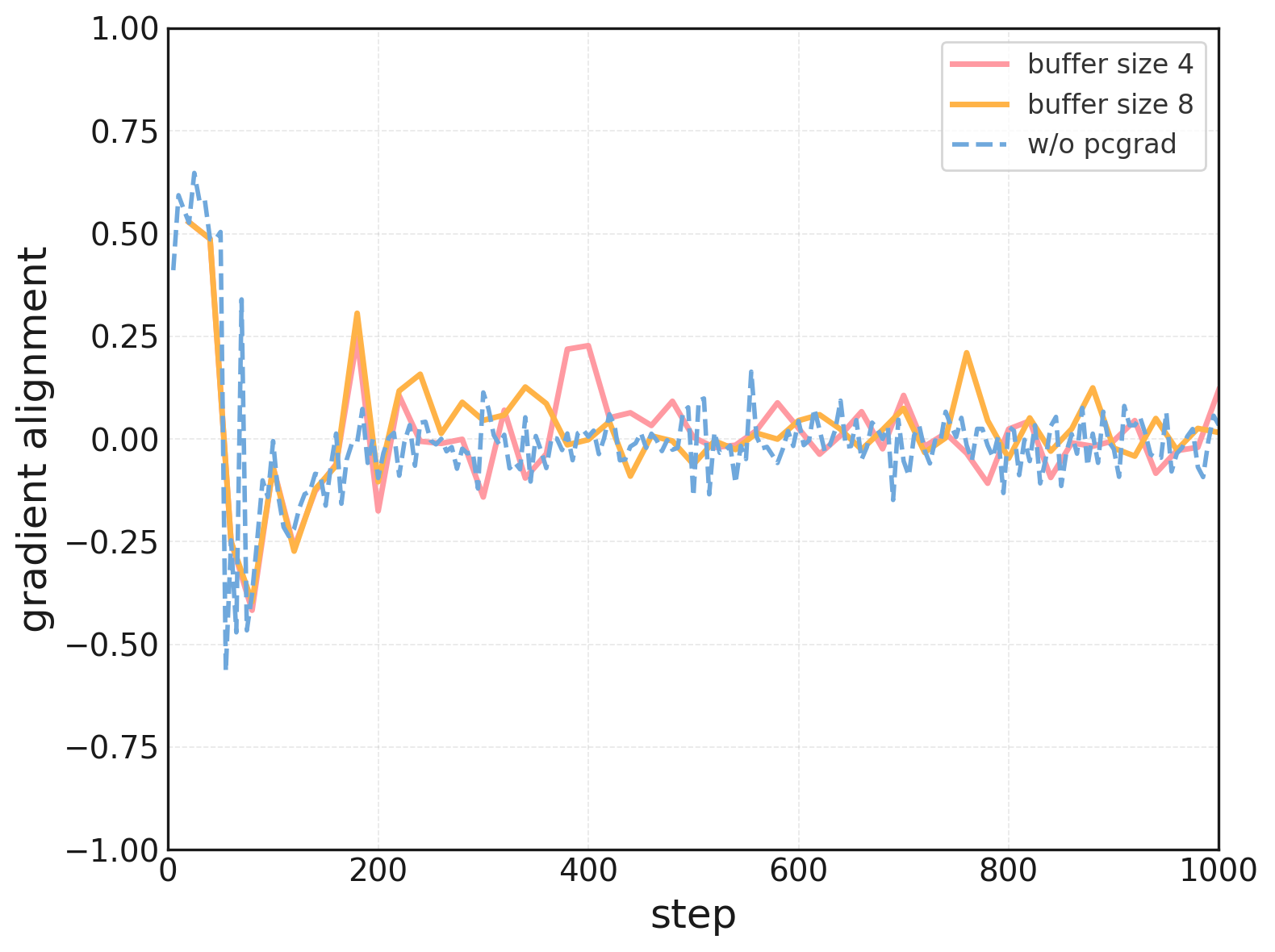}
    \vspace{-3mm}
    \caption{\textbf{Gradient alignment of the two objectives with and without Gradient Surgery.} The \textit{red} and \textit{orange} curves show the alignment obtained with Gradient Surgery using different buffer sizes, while \textit{blue dashed} curve shows the result without Gradient Surgery, corresponding to Fig.~\gmp{5} in the manuscript.}
    \label{fig:pcgrad_alignment}
    \vspace{-14mm}
\end{wrapfigure}
avoids doubling the per-iteration cost and lets us trade off temporal smoothing of gradients against recency by changing the buffer size.

% \begin{figure}
%     \centering
%     \includegraphics[width=0.90\columnwidth]{Figures/Gradient Surgery_vs_ours_grad_alignment.png}
%     \vspace{-3mm}
%     \caption{\textbf{Gradient alignment comparison of the two objectives when applying Gradient Surgery or not.} \textit{Red} and \textit{Orange line} indicate the alignment result with Gradient Surgery with different buffer sizes, and \textit{blue dashed line} indicates the alignment without Gradient Surgery, equivalent to Fig.~\gmp{5} in the main manuscript.}
%     \label{fig:Gradient Surgery_alignment}
% \end{figure}

% \begin{figure}[t]
%     \centering
%     \includegraphics[width=0.995\columnwidth]{eccv26/Figures/fig8_pcgrad_260303_0153.pdf}
%     \vspace{-5mm}
%     \caption{\textbf{Qualitative Comparison with Gradient Surgery.}}
%     \label{fig:pcgrad}
%     \vspace{-5mm}
% \end{figure}

\paragraph{Gradient Alignment under Gradient Surgery.}
Figure~\ref{fig:pcgrad_alignment} reports the cosine similarity between $g_{\text{img}}$ and $g_{\text{vid}}$ when Gradient Surgery is used during training with two buffer sizes (4 and 8). Both variants quickly push the gradient cosine into a narrow band around zero and keep it there, with only small differences between buffer sizes.
The resulting curves are very close to those obtained with our default stochastic dual-task schedule in blue dashed line in Fig.~\ref{fig:pcgrad_alignment} (Equivalent to that of Fig.~\gmp{5} in the main manuscript), indicating that stochastic interleaving plus proxy replay already induces near-orthogonality between image and video-batch gradients. In particular, increasing the buffer length from 4 to 8 yields only minor changes in the alignment trajectory, suggesting that most of the benefit comes from the dual-task geometry itself rather than from explicit projections over a longer history.

% \begin{wrapfigure}{r}{0.52\columnwidth}
%     \centering
%     \vspace{-5mm}
%     \includegraphics[width=0.98\linewidth]{Figures/pcgrad_vs_ours_grad_alignment.png}
%     \vspace{-3mm}
%     \caption{\textbf{Gradient alignment of the two objectives with and without Gradient Surgery.} The \textit{red} and \textit{orange} curves show the alignment obtained with Gradient Surgery using different buffer sizes, while the \textit{blue dashed} curve shows the result without Gradient Surgery, corresponding to Fig.~\gmp{5} in the main manuscript.}
%     \label{fig:pcgrad_alignment}
%     \vspace{-3mm}
% \end{wrapfigure}
\begin{wrapfigure}{r}{0.52\columnwidth}
  \centering
  \vspace{-10mm}
  \includegraphics[width=\linewidth]{Figures/fig8_pcgrad_260303_0153.pdf}
  \vspace{-3mm}
  \caption{\textbf{Qualitative Comparison with Gradient Surgery.}}
  \label{fig:pcgrad}
  \vspace{-10mm}
\end{wrapfigure}

\paragraph{Why We Keep the Simpler Schedule.}
Although Gradient Surgery slightly accelerates the convergence in early iterations, we observe only marginal differences in downstream metrics and qualitative samples (Fig.~\ref{fig:pcgrad}) compared to our default method.
This is consistent with the observation that (i) the stochastic task-switching with mixing probability $p$ already reduces gradient conflict, (ii) updates are confined to low-rank LoRA adapters where the representational overlap between tasks is limited, and (iii) the replay ratio biases most steps toward image gradient updates, so genuine conflicts are relatively rare.
Given this, the additional complexity of maintaining gradient buffers and performing projections offers limited practical gains in our setting.
We therefore adopt the stochastic dual-task schedule without Gradient Surgery as our main training recipe and regard Gradient Surgery as an optional refinement that produces similar gradient-alignment behavior, as illustrated in Fig.~\ref{fig:pcgrad_alignment}.

\begin{table}[t]
    \centering
    \caption{\textbf{Quantitative comparison on applying Gradient Surgery.}}
    \resizebox{0.95\columnwidth}{!}{%
    \begin{tabular}{l ccccc}
        \toprule
        \multirow{2}{*}{\textbf{Training Method}}  & \textbf{Motion} & \textbf{Dynamic} & \multirow{2}{*}{\textbf{CLIP-T}} & \multirow{2}{*}{\textbf{CLIP-I}} & \multirow{2}{*}{\textbf{DINO-I}} \\
        & \textbf{Smoothness} & \textbf{Degree} & & & \\
        \midrule
            w/ Gradient Surgery + buffer 8 & 98.86 & 50.83 & 31.28 & 82.17 & 74.32  \\
            w/ Gradient Surgery + buffer 4 & 97.74 & 70.83 & 33.44 & 74.52 & 56.15 \\
        \midrule
            w/o Gradient Surgery & 98.45 & 69.64 & 32.69 & 77.14 & 62.88\\
        \bottomrule
    \end{tabular}%
    }
    % \vspace{-10mm}
    \label{tab:pcgrad_abl}
\end{table}

\paragraph{Quantitative Comparison on VBench.}
We evaluate default stochastic dual-task training (without Gradient Surgery) and its Gradient Surgery-augmented counterpart on VBench. We report text alignment, subject consistency, motion smoothness, and overall quality metrics using the same evaluation protocol and sampling setup as in Sec.~\ref{sec:algo_details}. Across settings, we observe that headline VBench scores are similar between the two, with differences within experimental variance; see Table~\ref{tab:pcgrad_abl} for full results.

% ---- start: appendix theorem numbering tweak ----
% \makeatletter
% % 서브섹션이 바뀔 때마다 proposition/lemma/assumption 카운터 리셋
% \@addtoreset{proposition}{subsection}
% \@addtoreset{lemma}{subsection}
% \@addtoreset{assumption}{subsection}

% % 표시 형식을 "C.1.1" 같이: (섹션.서브섹션.국소번호)
% \renewcommand{\theproposition}{\thesubsection.\arabic{proposition}}
% \renewcommand{\thelemma}{\thesubsection.\arabic{lemma}}
% \renewcommand{\theassumption}{\thesubsection.\arabic{assumption}}
% \makeatother
% ---- end: appendix theorem numbering tweak ----

\section{Theoretical Analysis}
\label{app:theory}

In this section, we provide a stylized analysis that complements Proposition~4.1 in the main paper and the empirical gradient measurements in Figs.~\gmp{5} and Figs.~\ref{fig:grad_align_img_only}--\ref{fig:pcgrad_alignment}.  
We work with a local second-order model for both objectives and show that, under dual-task learning, the gradient inner product decays to zero. In contrast, under image-only fine-tuning, it typically converges to a non-zero constant.  
We then use the same framework to explain why Gradient Surgery yields only marginal gains once gradients are already nearly orthogonal.  
Throughout this section, the analysis is intended purely as intuition for the observed gradient dynamics, not as a rigorous description of large diffusion transformers.

\subsection{Setup and Notation}

Let $L_1:\mathbb{R}^n \to \mathbb{R}$ (identity loss) and $L_2:\mathbb{R}^n \to \mathbb{R}$ (temporal loss) be two objectives with gradients
\[
g_1(\theta) = \nabla_\theta L_1(\theta), \qquad g_2(\theta) = \nabla_\theta L_2(\theta).
\]

\paragraph{Gradient Conflict.}
We measure the interaction between the two tasks by the cosine of the angle between their gradients:
\begin{equation}
\label{eq:grad_conflict_app}
\phi(\theta)
= \cos \angle\bigl(g_1(\theta), g_2(\theta)\bigr)
= \frac{\langle g_1(\theta), g_2(\theta) \rangle}{\|g_1(\theta)\| \,\|g_2(\theta)\|}.
\end{equation}

\paragraph{Stochastic Task Switching.}
The training rule in the main paper alternates between updates on $L_1$ and $L_2$:
\begin{equation}
\label{eq:stochastic_update_app}
\theta_{t+1} = \theta_t - \eta g_t,
\qquad
g_t =
\begin{cases}
g_1(\theta_t), & \text{w.p. } 1-p,\\[2pt]
g_2(\theta_t), & \text{w.p. } p,
\end{cases}
\end{equation}
so that the expected update direction equals the gradient of the mixture loss
\begin{equation}
\label{eq:mixture_loss_app}
\bar L(\theta)
= (1-p)L_1(\theta) + pL_2(\theta),
\qquad
\mathbb{E}[g_t \mid \theta_t] = \nabla_\theta \bar L(\theta_t).
\end{equation}
The dual-task schedule in Sec.~4 of the main paper can therefore be viewed as stochastic gradient descent on $\bar L$, with $L_1$ and $L_2$ sampled according to the mixing probability $p$.

\subsection{Local Second-Order Mixture Model}

We focus on the behavior in a neighborhood of a point $\theta^\star$ where both tasks are approximately stationary.

\begin{assumption}[Local second-order behavior]
\label{ass:quadratic_app}
There exists a point $\theta^\star$ and symmetric matrices $H_1,H_2 \in \mathbb{R}^{n \times n}$ such that, in a neighborhood of $\theta^\star$, the losses are twice differentiable with Hessians $H_i(\theta^\star)$ and
\begin{equation}
L_i(\theta)
\approx
L_i(\theta^\star)
+
\frac{1}{2}(\theta-\theta^\star)^\top H_i (\theta-\theta^\star),
\qquad i \in \{1,2\},
\end{equation}
so that the corresponding gradients satisfy the local linearization
\begin{equation}
\label{eq:quadratic_gradients_app}
g_i(\theta)
\approx
H_i(\theta-\theta^\star),
\qquad i \in \{1,2\}.
\end{equation}
We assume $H_1$ and $H_2$ are positive semi-definite and approximately commute, i.e., they are (up to a small perturbation) simultaneously diagonalizable in this local region.
\end{assumption}

This second-order approximation is standard around critical points.  
Commutativity means that near $\theta^\star$ the two tasks share a common set of principal directions, which lets us express the dynamics in a joint eigenbasis.

\begin{assumption}[Positive curvature of the mixture along the trajectory]
\label{ass:mixture_pd_app}
Let
\begin{equation}
\label{eq:M_def_app}
M \;:=\; (1-p)H_1 + p H_2.
\end{equation}
We assume that, restricted to the subspace explored by training (the span of $\theta_0-\theta^\star$), $M$ is positive definite, i.e., all eigenvalues $\mu_k$ of $M$ on this subspace satisfy
\begin{equation}
\label{eq:mu_bounds_app}
0 < \mu_{\min} \;\le\; \mu_k \;\le\; \mu_{\max} < \infty.
\end{equation}
\end{assumption}

Note that Assumption~\ref{ass:mixture_pd_app} only requires positive curvature for the mixture $\bar L$ along the trajectory we care about, rather than global strong convexity of each task.

\subsection{Dual-Task Learning and Gradient Interaction}

Under Assumptions~\ref{ass:quadratic_app}--\ref{ass:mixture_pd_app}, we can make the statement of Proposition~4.1 concrete in this local model.

\begin{proposition}[Local model behind Proposition~4.1]
\label{prop:quadratic_alignment_app}
Assume the local second-order model (Assumption~\ref{ass:quadratic_app}) and the positive-curvature condition on the mixture (Assumption~\ref{ass:mixture_pd_app}).  
Consider deterministic gradient descent on $\bar L$ with step size $0 < \eta < 1/\mu_{\max}$,
\begin{equation}
\label{eq:gd_mixture_app}
\theta_{t+1}
=
\theta_t - \eta \nabla \bar L(\theta_t)
=
\theta_t - \eta\bigl[(1-p)g_1(\theta_t) + p g_2(\theta_t)\bigr].
\end{equation}
In this local model, the gradients evolve linearly, and the inner product
\[
A_t \;\equiv\; \bigl\langle g_1(\theta_t), g_2(\theta_t)\bigr\rangle
\]
decays exponentially:
\begin{equation}
\label{eq:At_decay_app}
|A_t|
\;\le\;
C \,(1-\eta\mu_{\min})^{2t},
\qquad
\text{for some constant } C>0,
\end{equation}
and in particular $\displaystyle \lim_{t\to\infty} A_t = 0$.
\end{proposition}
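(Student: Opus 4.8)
The plan is to collapse the gradient-descent recursion onto a shared eigenbasis of $H_1$, $H_2$, and $M$, so that $A_t$ becomes an explicit weighted sum of geometrically decaying scalars. First I would set $e_t := \theta_t - \theta^\star$ and invoke the local linearization $g_i(\theta_t)\approx H_i e_t$ from Assumption~\ref{ass:quadratic_app}. The update~\eqref{eq:gd_mixture_app} then reads $e_{t+1} = (I-\eta M)\,e_t$ with $M=(1-p)H_1+pH_2$ as in~\eqref{eq:M_def_app}, so $e_t=(I-\eta M)^t e_0$, and, using symmetry of $H_1$, $A_t=\langle H_1 e_t, H_2 e_t\rangle = e_t^\top H_1 H_2\, e_t$.

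Second, I would diagonalize. Since $H_1$ and $H_2$ are symmetric, positive semi-definite, and (approximately) commute, they share a common orthonormal eigenbasis $\{v_k\}$ with $H_1 v_k=\lambda_k^{(1)}v_k$, $H_2 v_k=\lambda_k^{(2)}v_k$, and hence $M v_k=\mu_k v_k$ for $\mu_k=(1-p)\lambda_k^{(1)}+p\lambda_k^{(2)}\ge 0$. Expanding $e_0=\sum_k c_k v_k$ gives $e_t=\sum_k c_k(1-\eta\mu_k)^t v_k$, so
\[
A_t=\sum_k c_k^2\,\lambda_k^{(1)}\lambda_k^{(2)}\,(1-\eta\mu_k)^{2t}.
\]
Only indices with $c_k\neq 0$ contribute, and these lie in the subspace explored by training, on which Assumption~\ref{ass:mixture_pd_app} gives $0<\mu_{\min}\le\mu_k\le\mu_{\max}$. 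Together with $0<\eta<1/\mu_{\max}$ this forces $0<1-\eta\mu_k\le 1-\eta\mu_{\min}<1$, hence $(1-\eta\mu_k)^{2t}\le(1-\eta\mu_{\min})^{2t}$. Pulling this factor out of the sum and using $\lambda_k^{(i)}\ge 0$ yields $|A_t|\le C\,(1-\eta\mu_{\min})^{2t}$ with $C:=\sum_k c_k^2\lambda_k^{(1)}\lambda_k^{(2)}\le\|H_1\|\,\|H_2\|\,\|e_0\|^2$ (bounding each $\lambda_k^{(i)}$ by the spectral norm $\|H_i\|$), which is exactly~\eqref{eq:At_decay_app}; since the base is strictly below one, $A_t\to 0$.

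The main obstacle is the clause ``approximately commute'': if $[H_1,H_2]\neq 0$ exactly there is no common eigenbasis and the clean scalar recursion is only a leading-order picture. I would handle this by splitting $H_2=\widetilde H_2+E$ with $\widetilde H_2$ commuting with $H_1$ and $\|E\|$ small, running the argument above for the commuting pair $(H_1,\widetilde H_2)$, and then controlling the $E$-dependent remainder: the perturbed iteration matrix $I-\eta M$ still contracts at rate $1-\eta\bigl(\mu_{\min}-O(\|E\|)\bigr)$, and the cross terms $E$ introduces into $A_t$ are $O(\|E\|\,\|e_0\|^2)$ up to geometrically small corrections, so the exponential-decay conclusion survives with a slightly worse rate and a harmless additive $O(\|E\|)$ floor that simply reflects the stylized nature of the model. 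The only remaining point is the bridge to Proposition~4.1: the stochastic rule~\eqref{eq:stochastic_update_app} has conditional mean equal to the deterministic step~\eqref{eq:gd_mixture_app} by~\eqref{eq:mixture_loss_app}, so taking expectations and using the small step size to absorb the zero-mean fluctuation into a lower-order term reduces the stochastic statement to the deterministic one analyzed here.
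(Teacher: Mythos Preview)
Your proof is correct and follows essentially the same route as the paper's: recenter at $\theta^\star$, linearize the dynamics to $e_{t+1}=(I-\eta M)e_t$, pass to the shared eigenbasis of $H_1,H_2,M$, write $A_t$ as a diagonal sum $\sum_k c_k^2\lambda_k^{(1)}\lambda_k^{(2)}(1-\eta\mu_k)^{2t}$, and bound by the slowest contraction factor. The only minor differences are cosmetic (your constant $C=\sum_k c_k^2\lambda_k^{(1)}\lambda_k^{(2)}$ versus the paper's $C=(\max_k|\lambda_k^{(1)}\lambda_k^{(2)}|)\|c_0\|^2$, both valid), and you go slightly beyond the paper by sketching a perturbation argument for the ``approximately commute'' caveat, which the paper simply sets aside.
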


\begin{proof}
Let $z_t := \theta_t - \theta^\star$.  
Under the local linearization,
\[
g_1(\theta_t) \approx H_1 z_t, \qquad g_2(\theta_t) \approx H_2 z_t,
\]
and the gradient of the mixture loss is
\[
\nabla \bar L(\theta_t)
\approx (1-p)H_1 z_t + p H_2 z_t
= M z_t.
\]
The gradient descent update~\eqref{eq:gd_mixture_app} therefore becomes
\[
z_{t+1}
= \theta_{t+1} - \theta^\star
\approx \theta_t - \theta^\star - \eta M z_t
= (I - \eta M) z_t.
\]

By Assumption~\ref{ass:quadratic_app}, $H_1$ and $H_2$ commute and hence are simultaneously diagonalizable with $M$ (ignoring the small perturbation in this toy model).  
Let $U$ be the orthonormal matrix of shared eigenvectors and write
\[
H_1 = U \Lambda_1 U^\top,
\qquad
H_2 = U \Lambda_2 U^\top,
\qquad
M   = U \Mu U^\top,
\]
where $\Lambda_1 = \mathrm{diag}(\lambda^{(1)}_k)$, $\Lambda_2 = \mathrm{diag}(\lambda^{(2)}_k)$ and $\Mu = \mathrm{diag}(\mu_k)$.  
In these coordinates $z_t = U c_t$ for some coefficients $c_t \in \mathbb{R}^n$ and the update simplifies to
\[
c_{t+1} = (I - \eta \Mu)\, c_t,
\qquad
c_t = (I - \eta \Mu)^t c_0,
\]
so that each component evolves as
\[
c_{t,k} = (1 - \eta \mu_k)^t c_{0,k}.
\]

The gradients in this basis are
\[
g_1(\theta_t) \approx U \Lambda_1 c_t,
\qquad
g_2(\theta_t) \approx U \Lambda_2 c_t,
\]
and their inner product is
\begin{align*}
A_t
&\approx \langle g_1(\theta_t), g_2(\theta_t)\rangle
  = c_t^\top \Lambda_1 \Lambda_2 c_t \\
&= \sum_k \lambda^{(1)}_k \lambda^{(2)}_k c_{t,k}^2
  = \sum_k \lambda^{(1)}_k \lambda^{(2)}_k
    \bigl(1 - \eta \mu_k\bigr)^{2t} c_{0,k}^2.
\end{align*}
Using Assumption~\ref{ass:mixture_pd_app}, $0 < \mu_{\min} \le \mu_k \le \mu_{\max}$ on the subspace reached by training, and since $\eta < 1/\mu_{\max}$ we have $0 \le 1 - \eta \mu_k \le 1 - \eta \mu_{\min} < 1$.  
Therefore
\[
|A_t|
\le
\Bigl(\max_k |\lambda^{(1)}_k \lambda^{(2)}_k|\Bigr)
\|c_0\|^2
(1 - \eta \mu_{\min})^{2t}
=
C (1-\eta\mu_{\min})^{2t},
\]
for some constant $C>0$ depending only on the initial point and the local Hessians.  
This proves the exponential decay~\eqref{eq:At_decay_app} and hence $\lim_{t\to\infty} A_t = 0$ in this local model.
\end{proof}

\paragraph{Connection to Proposition~4.1 and Empirical Trends.}
In the main manuscript we measure the cosine similarity between the image and video gradients, $\phi(t) = \cos\angle\bigl(g_{\text{img}}(t), g_{\text{vid}}(t)\bigr)$, and empirically observe that it rapidly converges to a narrow band around zero while the gradient norms remain non-negligible (Figs.~\gmp{5}).  
Proposition~\ref{prop:quadratic_alignment_app} shows that, in a local second-order model where the two tasks share a common eigenbasis and the mixture has positive curvature along the trajectory, gradient descent on the mixture loss naturally drives the inner product $\langle g_1,g_2\rangle$ (and hence its cosine) toward zero.  
This toy result should be read as one concrete instance consistent with Proposition~4.1, not as a formal guarantee for deep diffusion transformers.

\subsection{Image-Only Fine-Tuning in the Same Model}

We now show that in the same framework, pure image-only fine-tuning ($p=0$) is not expected to drive the cosine toward zero.  
This matches the behavior observed in Fig.~\ref{fig:grad_align_img_only}, where the cosine remains strongly positive throughout training.

\begin{proposition}[Image-only fine-tuning]
\label{prop:img_only_app}
Assume the setting of Assumption~\ref{ass:quadratic_app} and suppose that $H_1$ is positive definite on the subspace explored by training.  
Consider gradient descent on $L_1$ alone,
\begin{equation}
\theta_{t+1} = \theta_t - \eta g_1(\theta_t),
\qquad
0 < \eta < 2/\lambda_{\max}(H_1),
\end{equation}
and define $A_t = \langle g_1(\theta_t), g_2(\theta_t)\rangle$ and $\phi(t)$ as in~\eqref{eq:grad_conflict_app}.  
Let $k^\star$ be an index of an eigenvalue of $H_1$ with maximal magnitude on the trajectory, i.e., $|\lambda^{(1)}_{k^\star}| = \max_k |\lambda^{(1)}_k|$ and $c_{0,k^\star} \neq 0$.  
Then in the local model,
\begin{equation}
\lim_{t\to\infty} \phi(t)
=
\mathrm{sign}\bigl(\lambda^{(1)}_{k^\star}\lambda^{(2)}_{k^\star}\bigr),
\end{equation}
and in particular the cosine converges to a non-zero constant whenever $\lambda^{(1)}_{k^\star}\lambda^{(2)}_{k^\star} \neq 0$.
\end{proposition}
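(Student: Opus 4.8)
The plan is to reuse the eigenbasis machinery from the proof of Proposition~\ref{prop:quadratic_alignment_app}, but to track the \emph{normalized} cosine $\phi(t)$ rather than the raw inner product $A_t$, since under image-only training both $A_t$ and the two gradient norms decay at the same geometric rate and it is only their ratio that stabilizes. Writing $z_t = \theta_t-\theta^\star$ and using the local linearization $g_1(\theta_t)\approx H_1 z_t$, $g_2(\theta_t)\approx H_2 z_t$, the update $\theta_{t+1}=\theta_t-\eta g_1(\theta_t)$ becomes $z_{t+1}=(I-\eta H_1)z_t$, hence $z_t=(I-\eta H_1)^t z_0$. Passing to the shared orthonormal eigenbasis $U$ with $H_1=U\Lambda_1U^\top$, $H_2=U\Lambda_2U^\top$ and setting $c_t=U^\top z_t$, each coordinate evolves independently as $c_{t,k}=(1-\eta\lambda^{(1)}_k)^t c_{0,k}$. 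Because $\eta<2/\lambda_{\max}(H_1)$ and $H_1$ is positive definite on the subspace reached by training, $0<|1-\eta\lambda^{(1)}_k|<1$ for every active mode $k$, so all coordinates contract, but at mode-dependent rates.

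Next I would write the three scalars of interest in this basis: $A_t=\langle g_1,g_2\rangle\approx\sum_k\lambda^{(1)}_k\lambda^{(2)}_k(1-\eta\lambda^{(1)}_k)^{2t}c_{0,k}^2$, $\|g_1(\theta_t)\|^2\approx\sum_k(\lambda^{(1)}_k)^2(1-\eta\lambda^{(1)}_k)^{2t}c_{0,k}^2$, and likewise $\|g_2(\theta_t)\|^2$ with $(\lambda^{(2)}_k)^2$. Let $k^\star$ index the slowest-contracting active mode, i.e.\ the one maximizing $|1-\eta\lambda^{(1)}_k|$; depending on $\eta$ this is one of the extreme eigenvalues of $H_1$, and in the regime singled out in the statement it is the eigenvalue of largest magnitude. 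Factoring $(1-\eta\lambda^{(1)}_{k^\star})^{2t}$ out of the numerator and of both norm-squares, every surviving term carries a factor $\big((1-\eta\lambda^{(1)}_k)/(1-\eta\lambda^{(1)}_{k^\star})\big)^{2t}\to 0$ for $k\neq k^\star$, so $A_t\sim\lambda^{(1)}_{k^\star}\lambda^{(2)}_{k^\star}c_{0,k^\star}^2(1-\eta\lambda^{(1)}_{k^\star})^{2t}$ and $\|g_i(\theta_t)\|\sim|\lambda^{(i)}_{k^\star}|\,|c_{0,k^\star}|\,|1-\eta\lambda^{(1)}_{k^\star}|^{t}$. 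Dividing, the common decay factors and $c_{0,k^\star}^2$ cancel, leaving $\phi(t)\to\lambda^{(1)}_{k^\star}\lambda^{(2)}_{k^\star}/(|\lambda^{(1)}_{k^\star}|\,|\lambda^{(2)}_{k^\star}|)=\mathrm{sign}(\lambda^{(1)}_{k^\star}\lambda^{(2)}_{k^\star})$, which is well defined exactly when $\lambda^{(1)}_{k^\star}\lambda^{(2)}_{k^\star}\neq 0$ (using $\lambda^{(1)}_{k^\star}\neq0$ since $H_1$ is positive definite on this subspace, and $c_{0,k^\star}\neq0$ by hypothesis). Since both Hessians are PSD this limiting sign is in fact $+1$ whenever it is nonzero, matching the persistently positive cosine observed in Fig.~\ref{fig:grad_align_img_only}, and it explains why image-only fine-tuning, unlike the dual-task schedule of Proposition~\ref{prop:quadratic_alignment_app}, does \emph{not} drive $\phi(t)$ to zero.

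The main obstacle is the clean isolation of a \emph{single} dominant mode, and two situations need care. (i) If several modes share the maximal contraction factor — a degenerate $H_1$-eigenspace, or the coincidence $|1-\eta\lambda^{(1)}_k|=|1-\eta\lambda^{(1)}_{k^\star}|$ at distinct eigenvalues — the limit becomes the Cauchy--Schwarz ratio of $\sum_{k\in S}\lambda^{(1)}_k\lambda^{(2)}_k c_{0,k}^2$ to the product of the two restricted gradient norms over the dominant set $S$, which is a pure sign only when all modes in $S$ carry the same pair $(\lambda^{(1)},\lambda^{(2)})$; I would therefore add a genericity assumption that the dominant eigenvalue pair is unique, or simply restrict to the one-mode case, in keeping with the stylized nature of the analysis. (ii) If $\lambda^{(2)}_{k^\star}=0$, the leading term vanishes simultaneously in $A_t$ and in $\|g_2\|^2$, so the limit is governed by the next surviving mode; the proposition already excludes this through its ``whenever $\lambda^{(1)}_{k^\star}\lambda^{(2)}_{k^\star}\neq0$'' clause, so I would merely remark on it. Apart from these points the argument is the same bookkeeping as in Proposition~\ref{prop:quadratic_alignment_app}, with $H_1$ replacing the mixture $M$ and the normalization by gradient norms preventing the collapse to zero.
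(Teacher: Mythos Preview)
Your proof is correct and follows essentially the same eigenbasis-and-dominant-mode argument as the paper: diagonalize $H_1,H_2$ simultaneously, track $c_{t,k}=(1-\eta\lambda^{(1)}_k)^t c_{0,k}$, isolate the slowest-decaying coordinate, and read off the limiting cosine as the sign of $\lambda^{(1)}_{k^\star}\lambda^{(2)}_{k^\star}$. You are in fact more careful than the paper on one point --- you correctly identify the surviving mode as the one maximizing $|1-\eta\lambda^{(1)}_k|$, whereas the paper's proof asserts it is the mode with largest $|\lambda^{(1)}_k|$ (these only coincide for $\eta$ near $2/\lambda_{\max}$, not for small $\eta$) --- and your handling of the degenerate-eigenvalue and $\lambda^{(2)}_{k^\star}=0$ edge cases is a useful addition that the paper omits.
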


\begin{proof}
Under the image-only update $z_{t+1} = (I-\eta H_1)z_t$ with $z_t = \theta_t - \theta^\star$, the same eigenbasis $U$ as in the previous proof diagonalizes $H_1$ and $H_2$.  
Writing $z_t = U c_t$ gives $c_{t+1,k} = (1-\eta\lambda^{(1)}_k)c_{t,k}$ and hence
\[
c_{t,k} = (1-\eta\lambda^{(1)}_k)^t c_{0,k}.
\]
Because $H_1$ is positive definite on the relevant subspace and $0<\eta<2/\lambda_{\max}(H_1)$, we have $|1-\eta\lambda^{(1)}_k|<1$ for all $k$, and the component with largest $|\lambda^{(1)}_k|$ dominates as $t\to\infty$.  
More precisely, for $k\neq k^\star$,
\[
\frac{c_{t,k}}{c_{t,k^\star}}
=
\biggl(\frac{1-\eta\lambda^{(1)}_k}{1-\eta\lambda^{(1)}_{k^\star}}\biggr)^t
\frac{c_{0,k}}{c_{0,k^\star}}
\to 0.
\]
Thus $z_t$ aligns with $v_{k^\star}$, the $k^\star$th column of $U$, and the gradients satisfy
\[
g_1(\theta_t) \approx \lambda^{(1)}_{k^\star} c_{t,k^\star} v_{k^\star},
\qquad
g_2(\theta_t) \approx \lambda^{(2)}_{k^\star} c_{t,k^\star} v_{k^\star},
\]
up to asymptotically negligible contributions from other modes.  
Therefore
\[
\lim_{t\to\infty} \phi(t)
=
\lim_{t\to\infty}
\frac{\langle g_1(\theta_t), g_2(\theta_t)\rangle}{\|g_1(\theta_t)\|\|g_2(\theta_t)\|}
=
\mathrm{sign}\bigl(\lambda^{(1)}_{k^\star}\lambda^{(2)}_{k^\star}\bigr),
\]
which is non-zero whenever $\lambda^{(1)}_{k^\star}\lambda^{(2)}_{k^\star} \neq 0$.
\end{proof}

This proposition explains why, in Fig.~\ref{fig:grad_align_img_only}, the cosine between image and video gradients remains strongly positive under image-only fine-tuning.  
In the local model, the optimization path aligns with a dominant eigen-direction of $H_1$, and if $H_2$ has positive curvature along the same direction, then $g_1$ and $g_2$ stay positively aligned instead of becoming orthogonal.  
Because the optimizer follows only $-g_1$ in this regime, every identity update moves parameters along directions to which the video objective is also highly sensitive, without any counter-balancing video gradient, which is consistent with the near-static yet identity-faithful behavior of the image-only baseline in Table~\gmp{4}.

\subsection{Effect of Gradient Surgery in the Local Model}

We finally use the same framework to explain why Gradient Surgery yields gradient alignment curves similar to our default dual-task schedule (Fig.~\ref{fig:pcgrad_alignment}) and only marginal gains in downstream metrics (Table~\ref{tab:abl_ref}).

\paragraph{Gradient Surgery Update for Dual-tasks.}
For two tasks with gradients $g_1$ and $g_2$, Gradient Surgery projects away conflicting components when $\langle g_1,g_2\rangle<0$.  
In the two-task case, the projected gradients are
\begin{align}
\tilde g_1 &= g_1 - \mathbf{1}[\langle g_1,g_2\rangle<0]\frac{\langle g_1,g_2\rangle}{\|g_2\|^2}g_2, \\
\tilde g_2 &= g_2 - \mathbf{1}[\langle g_2,g_1\rangle<0]\frac{\langle g_2,g_1\rangle}{\|g_1\|^2}g_1.
\end{align}
Given mixing probability $p$, the Gradient Surgery update direction is $g^{\text{PC}} = (1-p)\tilde g_1 + p\tilde g_2$, whereas the dual-task schedule without projection uses the mixture gradient $g^{\text{mix}} = (1-p)g_1 + p g_2$.

\begin{lemma}[Difference between Gradient Surgery and mixture update]
\label{lem:pcgrad_diff_app}
Let $A = \langle g_1,g_2\rangle$ and suppose $A<0$ so that Gradient Surgery modifies both gradients.  
Then the difference between the Gradient Surgery and mixture updates satisfies
\begin{equation}
\label{eq:pcgrad_diff_bound}
\|g^{\text{PC}} - g^{\text{mix}}\|
\le
|A|\Bigl(\frac{1-p}{\|g_2\|} + \frac{p}{\|g_1\|}\Bigr)
\le
2|A|\max\Bigl\{\frac{1}{\|g_1\|},\frac{1}{\|g_2\|}\Bigr\},
\end{equation}
and in particular
\begin{equation}
\label{eq:pcgrad_cos_bound}
\frac{\|g^{\text{PC}} - g^{\text{mix}}\|}{\|g^{\text{mix}}\|}
\le
C_\text{rel}\,|\cos\angle(g_1,g_2)|,
\end{equation}
for some constant $C_\text{rel}$ depending only on the ratio of $\|g_1\|$ and $\|g_2\|$.
\end{lemma}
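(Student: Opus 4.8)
The plan is to reduce everything to one exact identity for $g^{\text{PC}}-g^{\text{mix}}$ and then apply only the triangle and Cauchy--Schwarz inequalities; no curvature assumptions from the local model are needed. Since we assume $A=\langle g_1,g_2\rangle<0$, the indicators in both PCGrad formulas equal $1$, so $\tilde g_1=g_1-\tfrac{A}{\|g_2\|^2}g_2$ and $\tilde g_2=g_2-\tfrac{A}{\|g_1\|^2}g_1$. Substituting into $g^{\text{PC}}=(1-p)\tilde g_1+p\tilde g_2$ and subtracting $g^{\text{mix}}=(1-p)g_1+pg_2$, the unmodified $g_1$ and $g_2$ contributions cancel and I am left with
\[
g^{\text{PC}}-g^{\text{mix}}\;=\;-\,\frac{pA}{\|g_1\|^2}\,g_1\;-\;\frac{(1-p)A}{\|g_2\|^2}\,g_2,
\]
which is the engine of the whole lemma.

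First I would apply the triangle inequality to this identity and simplify $\|g_i\|/\|g_i\|^2=1/\|g_i\|$, giving $\|g^{\text{PC}}-g^{\text{mix}}\|\le|A|\big(\tfrac{p}{\|g_1\|}+\tfrac{1-p}{\|g_2\|}\big)$, which is the first inequality in \eqref{eq:pcgrad_diff_bound}. Bounding $p\le1$ and $1-p\le1$ and then $\tfrac1{\|g_1\|}+\tfrac1{\|g_2\|}\le2\max\{\tfrac1{\|g_1\|},\tfrac1{\|g_2\|}\}$ gives the second inequality. Both of these are routine.

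For the relative bound \eqref{eq:pcgrad_cos_bound} I would substitute $|A|=\|g_1\|\,\|g_2\|\,|\cos\angle(g_1,g_2)|$ into the first inequality, obtaining $\|g^{\text{PC}}-g^{\text{mix}}\|\le|\cos\angle(g_1,g_2)|\,\big((1-p)\|g_1\|+p\|g_2\|\big)$, so the numerator already carries the claimed $|\cos\angle(g_1,g_2)|$ factor. It then remains to divide by $\|g^{\text{mix}}\|$. Writing $\rho=\|g_1\|/\|g_2\|$ and expanding $\|g^{\text{mix}}\|^2=(1-p)^2\|g_1\|^2+2p(1-p)A+p^2\|g_2\|^2$, I would use $A\ge-\|g_1\|\,\|g_2\|$ to get $\|g^{\text{mix}}\|^2\ge\big((1-p)\|g_1\|-p\|g_2\|\big)^2$, hence $\|g^{\text{mix}}\|\ge\|g_2\|\,|(1-p)\rho-p|$. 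Dividing yields $C_\text{rel}=\big((1-p)\rho+p\big)/|(1-p)\rho-p|$, which depends only on the norm ratio $\rho$ (and the fixed $p$), as claimed.

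The one delicate point, and the main obstacle, is this last lower bound on $\|g^{\text{mix}}\|$: when the two gradients are almost exactly anti-parallel \emph{and} their $p$-weighted norms nearly match ($\rho\approx p/(1-p)$), the mixture gradient nearly cancels and $C_\text{rel}$ degenerates. I would handle this either by stating the relative bound under the mild genericity assumption $\rho\neq p/(1-p)$ (i.e.\ $g^{\text{mix}}\neq0$), or simply by noting that this cancellation is precisely the regime our empirical measurements never enter: the observed norms stay comparable ($\rho\approx1$, far from $p/(1-p)=1/4$ at $p=0.2$), and in any case the lemma is only invoked during the brief transient where $A<0$: once $\cos\angle(g_1,g_2)\to0$, both sides of \eqref{eq:pcgrad_cos_bound} vanish, which is exactly the conclusion the surrounding discussion draws.
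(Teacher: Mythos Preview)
Your proof is correct and follows essentially the same route as the paper: the identical exact identity for $g^{\text{PC}}-g^{\text{mix}}$, triangle inequality for the first bound, and substitution of $|A|=\|g_1\|\|g_2\||\cos\angle(g_1,g_2)|$ for the relative bound. You are in fact more careful than the paper on the last step---the paper simply writes ``Dividing by $\|g^{\text{mix}}\|$ \ldots\ yields \eqref{eq:pcgrad_cos_bound} for a suitable constant $C_\text{rel}$'' without exhibiting the constant or flagging the degenerate cancellation regime, so your explicit $C_\text{rel}=\big((1-p)\rho+p\big)/|(1-p)\rho-p|$ and the accompanying discussion are a genuine refinement.
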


\begin{proof}
When $A<0$ we have
\[
\tilde g_1 - g_1 = -\frac{A}{\|g_2\|^2}g_2,
\qquad
\tilde g_2 - g_2 = -\frac{A}{\|g_1\|^2}g_1,
\]
and therefore
\[
g^{\text{PC}} - g^{\text{mix}}
=
(1-p)(\tilde g_1-g_1) + p(\tilde g_2-g_2)
=
-A\Bigl(\frac{1-p}{\|g_2\|^2}g_2 + \frac{p}{\|g_1\|^2}g_1\Bigr).
\]
Taking norms gives
\[
\|g^{\text{PC}} - g^{\text{mix}}\|
\le
|A|\Bigl(\frac{1-p}{\|g_2\|} + \frac{p}{\|g_1\|}\Bigr),
\]
and the second inequality in~\eqref{eq:pcgrad_diff_bound} follows by bounding each coefficient by the maximum of $1/\|g_1\|$ and $1/\|g_2\|$.  
Dividing by $\|g^{\text{mix}}\|$ and using $|A| = \|g_1\|\|g_2\||\cos\angle(g_1,g_2)|$ yields~\eqref{eq:pcgrad_cos_bound} for a suitable constant $C_\text{rel}$.
\end{proof}

Lemma~\ref{lem:pcgrad_diff_app} shows that the Gradient Surgery update is a small perturbation of the mixture gradient whenever the cosine is already close to zero.  
Combined with Proposition~\ref{prop:quadratic_alignment_app}, which predicts that $A_t$ (and hence the cosine) decays exponentially under dual-task mixing, this indicates that Gradient Surgery can only have a significant effect during an early transient phase when gradients are strongly conflicting.  
Once the dynamics enter the near-orthogonal regime, Gradient Surgery and the unmodified dual-task schedule follow almost identical update directions.

\paragraph{Connection to Empirical Gradient Surgery Results.}
In Sec.~\ref{sec:pcgrad} and Fig.~\ref{fig:pcgrad_alignment}, we empirically compare gradient alignment with and without Gradient Surgery.  
Both buffered variants of Gradient Surgery quickly drive the cosine into a narrow band near zero and then track the unprojected dual-task curve closely, which is consistent with Lemma~\ref{lem:pcgrad_diff_app} and Proposition~\ref{prop:quadratic_alignment_app}.  
Because our stochastic dual-task schedule already reduces gradient conflict rapidly and updates are confined to a low-rank LoRA subspace, the additional projections performed by Gradient Surgery mainly act as a small early-time correction and yield only marginal gains in downstream metrics (Table~\ref{tab:pcgrad_abl}).  
This theoretical picture supports our choice to adopt the simpler stochastic schedule without Gradient Surgery as the default training recipe.

\begin{table}[t]
    \centering
    \caption{\textbf{Ablation on the reference token.} Adding \texttt{<CLS>} yields improved subject identity scores (CLIP-I, DINO-I) and a higher dynamic degree.}
    \resizebox{0.75\columnwidth}{!}{%
    \begin{tabular}{l ccccc}
        \toprule
        \multirow{2}{*}{\textbf{Training Method}}  & \textbf{Motion} & \textbf{Dynamic} & \multirow{2}{*}{\textbf{CLIP-T}} & \multirow{2}{*}{\textbf{CLIP-I}} & \multirow{2}{*}{\textbf{DINO-I}} \\
        & \textbf{Smoothness} & \textbf{Degree} & & & \\
        \midrule
            w/o Ref. token &  \textbf{98.84} & 54.55 & 32.87 & 73.36 & 57.51 \\
        \midrule
            w/ Ref. token &  97.99 & \textbf{78.33} & \textbf{33.22} & \textbf{75.87} & \textbf{58.86} \\
        \bottomrule
    \end{tabular}%
    }
    % \vspace{-10mm}
    \label{tab:abl_ref}
\end{table}

\section{Training Strategies \& Ablation Studies}
In this section, we present extensive ablation studies on training strategies and various techniques used. 
Due to the cost of training, we use the model trained with a subset of 4,000 of subject-image pairs~\cite{ominicontrol} unless otherwise stated, since they tend to converge a little faster for the comparison in this section, with less degraded performance according to the Table~\gmp{4} in the main manuscript.

\subsection{Effect of the Reference Token}
\label{sec:ref_token_ablation}
Tab.~\ref{tab:abl_ref} demonstrates how adding a dedicated \texttt{<CLS>} token to the prompt affects our model’s performance.
Without this reference token, the model attains slightly higher motion smoothness and marginally better CLIP-T scores, but it under-performs on dynamic degree and identity-focused metrics.
Introducing \texttt{<CLS>} evidently improves subject fidelity and fosters more diverse motion.
We attribute these gains to the reference token guiding the alignment of subject tokens ($\mathbf{X}_{\text{in}}$) with the textual prompt more explicitly, leading to stronger identity preservation and more coherent motion variations. 

% \begin{table}[t]
%     \centering
%     \caption{\textbf{Quantitative comparison on FPS.}}
%     \resizebox{0.95\columnwidth}{!}{%
%     \begin{tabular}{l ccccc}
%         \toprule
%         \multirow{2}{*}{\textbf{Training Method}}  & \textbf{Motion} & \textbf{Dynamic} & \multirow{2}{*}{\textbf{CLIP-T}} & \multirow{2}{*}{\textbf{CLIP-I}} & \multirow{2}{*}{\textbf{DINO-I}} \\
%         & \textbf{Smoothness} & \textbf{Degree} & & & \\
%         \midrule
%             ours w/ 8FPS & \\
%         \midrule
%             ours w/ 16FPS & \\
%         \bottomrule
%     \end{tabular}%
%     }
%     % \vspace{-10mm}
%     \label{tab:abl_ref}
% \end{table}

\subsection{T2V vs. I2V Fine-tuning}
We also ablate replacing our \emph{image-to-video} (I2V) fine-tuning with a text-to-video (T2V) setup. 
In Tab.~\ref{tab:abl_vid_cnt}, the T2I+T2V (joint) attains a slightly better CLIP-T but lower dynamic degree and subject alignment (CLIP-I, DINO-I). 
This suggests that T2V fine-tuning struggles when introducing a novel subject identity solely through text, resulting in weaker overall identity preservation. 
By contrast, our I2V approach strikes a better balance, preserving subject details (CLIP-I: 73.70, DINO-I: 59.29) and maintaining sufficient motion (dynamic degree: 60.19).

\begin{table}[t]
    \centering
    \caption{\textbf{Comparison with T2V-only stochastically-switched finetuning, with image drop probability of 1.} Switching to text-only input (T2V) moderately boosts CLIP-T but hurts subject fidelity and dynamic degree.}
    \resizebox{0.85\columnwidth}{!}{%
    \begin{tabular}{l ccccc}
        \toprule
        \multirow{2}{*}{\textbf{Training Method}}   & \textbf{Motion} & \textbf{Dynamic} & \multirow{2}{*}{\textbf{CLIP-T}} & \multirow{2}{*}{\textbf{CLIP-I}} & \multirow{2}{*}{\textbf{DINO-I}} \\
      & \textbf{Smoothness}  & \textbf{Degree}& & & \\
        \midrule
        T2I + T2V (joint) &  98.85 & 44.14 & {33.41} & 72.71 & 48.68 \\
        \midrule
           Ours&  97.99 & {78.33} & 33.22 & {75.87} & {58.86} \\
        \bottomrule
    \end{tabular}%
    }
    \label{tab:abl2}
\end{table}

\begin{table}[t]
    \centering
    \caption{\textbf{Ablation study on using a different number of videos.}}
    \resizebox{0.75\columnwidth}{!}{%
    \begin{tabular}{l ccccc}
        \toprule
        \textbf{Video}  & \textbf{Motion} & \textbf{Dynamic} & \multirow{2}{*}{\textbf{CLIP-T}} & \multirow{2}{*}{\textbf{CLIP-I}} & \multirow{2}{*}{\textbf{DINO-I}} \\
        \textbf{Count}& \textbf{Smoothness} & \textbf{Degree} & & & \\
        \midrule
        1K &  99.03 & 59.66 & 32.16 & 72.69 & 56.98 \\
        2K &  98.96 & 52.25 & 32.49 & 72.13 & 54.42 \\
        3K &  98.79 & 55.46 & 32.04 & 72.79 & 55.57 \\
        4K &  97.99 & 78.33 & 33.22 & 75.87 & 58.86 \\
        \bottomrule
    \end{tabular}%
    }
    \label{tab:abl_vid_cnt}
\end{table}

\subsection{Effect of Varying the Video Dataset Size}
\label{sec:video_dataset_size}

Tab.~\ref{tab:abl_vid_cnt} reports how our method’s performance changes when using different amounts of arbitrary videos for I2V fine-tuning (1K, 2K, 3K, and 4K videos).
Notably, with only 1K videos, we already obtain relatively strong results, suggesting that even a small unlabeled corpus can restore temporal consistency to some extent. However, increasing the video count to 4K (our default setting) steadily improves dynamic degree from 59.66 to 78.33 and also boosts identity fidelity (DINO-I) from 56.98 up to 58.56, indicating more consistent subject representation across frames.

We also observe a modest variation in CLIP-T and CLIP-I scores when moving from 1K to 4K videos, suggesting that a larger video dataset helps balance the preservation of subject detail and temporal motion, without overfitting to specific frames or motion patterns. In short, while our method is fairly robust to smaller unlabeled datasets, using around 4K (\ie, 1\% of Pexels~\cite{pexels} dataset) videos offers the best trade-off between data efficiency and stable motion/appearance results.

% \begin{figure*}[t]
%     \centering
%     % \captionsetup{type=figure}
%     % \vspace{-5mm}
%     \includegraphics[width=0.99\textwidth]{Figures/suppl_image_set_size_abl.pdf}
%     % \vspace{-7mm}
%     \caption{\textbf{Ablation on varying S\&I set size used for training.} Up to a subset count of 4K, they exhibit successful identity, but when reduced to 2K ($\approx$ 1\% of Subject200K), they show failure from time to time.}
%     \label{fig:image_set_abl}
%     \vspace{-3mm}
% \end{figure*}

% \subsection{Effect of Varying the Image Dataset Size}
% \label{sec:image_dataset_size}
% Figure~\ref{fig:image_set_abl} visualizes the qualitative trend when we subsample the S\&I corpus from 200K down to 4K and 2K pairs, complementing the VBench summary in Table~\gmp{4}.
% Consistent with the main paper, using only 4K paired subject-images (about 2\% of Subject200K) preserves subject identity convincingly across diverse prompts while maintaining strong motion, as reflected in the 4K results showcased in Fig.~\gmp{1} and Fig.~\gmp{6}.
% When further reducing to 2K pairs (about 1\%), typical cases still generate with good fidelity and temporal coherence, but occasional failures appear, most notably for small or partially occluded subjects and for fine-grained appearance details, which aligns with the moderate drops in CLIP-I and DINO-I while motion smoothness and dynamic degree remain high in Table~\gmp{4}.
% These observations support the conclusion that a relatively small subset of paired subject-images can enable identity injection.

\subsection{Effect of Varying the Switching Probability}
\label{sec:p_ablation}

In Tab.~\ref{tab:abl_p}, we examine how different values of $p$—the probability of sampling unlabeled video data (I2V fine-tuning)—affect overall performance. When $p=0.0$, the model relies solely on subject-driven image generation (SDI-Gen) fine-tuning and achieves a relatively high dynamic degree but moderate identity scores.
Increasing $p$ to 0.2 or 0.4 yields balanced improvements across most metrics, reflecting better coordination between identity and motion. At $p=0.6$, the model dedicates a greater share of updates to I2V fine-tuning, strengthening identity alignment (CLIP-I: 76.71, DINO-I: 62.84) while keeping the dynamic degree stable (60.87). Although different $p$ values trade off between motion smoothness and identity fidelity to varying degrees, our chosen $p=0.2$ demonstrates a strong overall balance, as highlighted in the main paper.

\begin{table}[t]
    \centering
    \caption{\textbf{Abalation study on using different 'p' to choose I2V finetuning.}}
    \resizebox{0.75\columnwidth}{!}{%
    \begin{tabular}{l ccccc}
        \toprule
        \textbf{Video}   & \textbf{Motion}& \textbf{Dynamic} & \multirow{2}{*}{\textbf{CLIP-T}} & \multirow{2}{*}{\textbf{CLIP-I}} & \multirow{2}{*}{\textbf{DINO-I}} \\
        \textbf{Ratio} & \textbf{Smoothness}& \textbf{Degree} & & & \\
        \midrule
        % 0.0 & 98.59 & 63.03 & 32.12 & 72.86 & 57.86 \\
        0.0 & 99.60 & 0.84 & 32.67 & 71.15 & 43.19 \\
        0.2 (Ours) &  97.99 & 78.33 & 33.22 & 75.7 & 58.86 \\
        0.4 & 98.31 & 59.12 & 31.94 & 73.53 & 56.60 \\
        0.6 &  98.33 & 60.87 & 31.31 & 76.71 & 62.84 \\
        \bottomrule
    \end{tabular}%
    }
    \label{tab:abl_p}
\end{table}

\subsection{Effect of Random Initial Frame Selection \& Dropping.}

We further investigate the effects of \emph{random frame selection} and \emph{image-token dropping} during I2V fine-tuning through qualitative analysis. 
As shown in Figure~\ref{fig:quali_random}, when neither technique is applied, the first reference image exhibits excessive dominance over subsequent frames, resulting in limited temporal variation.
Enabling \emph{random frame selection} alone partially restores temporal dynamics, yet introduces visual artifacts in certain regions (e.g., shirts, oranges).
In contrast, the joint application of both techniques achieves a more favorable balance: artifacts are substantially reduced while the reference image blends more naturally into the generated sequence.
These qualitative observations underscore the importance of mitigating over-reliance on the first frame and preventing excessive conditioning on a single reference image to achieve smoother, more temporally coherent video generation.

\begin{figure}[t]
    \centering
    % \captionsetup{type=figure}
    % \vspace{-5mm}
    \includegraphics[width=0.76\columnwidth]{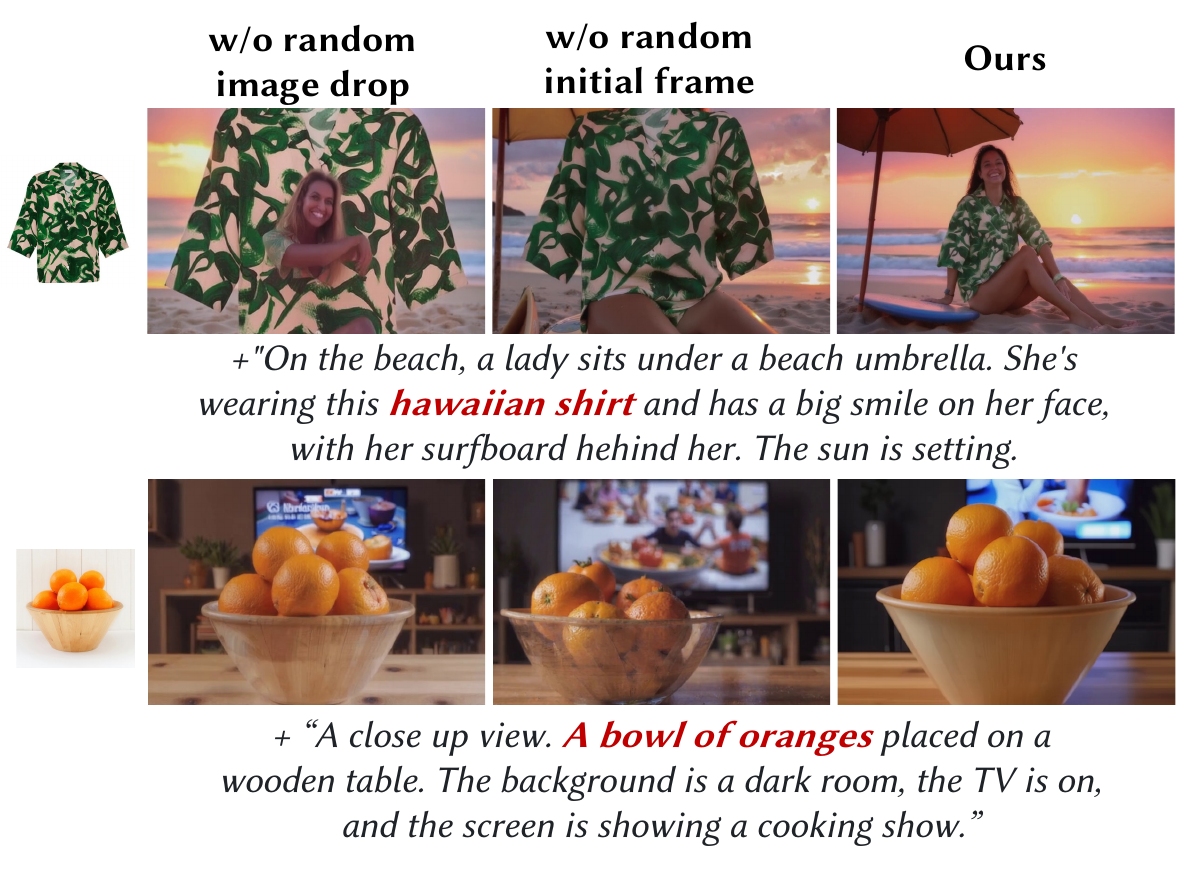}
    % \vspace{-7mm}
    \caption{\textbf{Effect of random initial frame and image token dropping.}}
    \label{fig:quali_random}
    \vspace{-3mm}
\end{figure}

\subsection{Effect of LoRA Rank on Training}
We further study the effect of LoRA rank on training with CogVideoX-5B in Table~\ref{tab:lorarank}.
Quantitatively in Figure~\ref{fig:lorarank}, the differences across LoRA ranks are relatively small, and the standard evaluation metrics do not show a large gap, as the identity injection capability is not affected a lot with LoRA rank. However, the fidelity of the injected identity is affected a lot, showing clear qualitative trend.
% However, the qualitative trend is much clearer.
When the LoRA rank is reduced below 32, the generated videos exhibit noticeable degradation in visual quality, with weaker textures, blurrier structures, and the loss of fine-grained details.
In particular, subtle appearance cues that are important for subject fidelity tend to disappear as the rank becomes too small.
% By contrast, increasing the rank beyond 128 provides only marginal benefit.
When we increase the LoRA rank to 512, the fidelity gets improved in qualitative manner, we chose rank 128 for the memory and compute balance.

\begin{table}[t]
    \centering
    \caption{\textbf{Abalation study on LoRA Rank on Training.}}
    \resizebox{0.75\columnwidth}{!}{%
    \begin{tabular}{l ccccc}
        \toprule
        \textbf{LoRA}   & \textbf{Motion}& \textbf{Dynamic} & \multirow{2}{*}{\textbf{CLIP-T}} & \multirow{2}{*}{\textbf{CLIP-I}} & \multirow{2}{*}{\textbf{DINO-I}} \\
        \textbf{Rank} & \textbf{Smoothness}& \textbf{Degree} & & & \\
        \midrule
        % 0.0 & 98.59 & 63.03 & 32.12 & 72.86 & 57.86 \\
        8 & 97.80 & 73.68 & 32.89 & 74.65 & 57.17 \\
        32 & 98.20 & 71.43 & 33.31 & 75.97 &  59.27 \\
        128 (Ours) & 97.99 & 78.33 & 33.22 & 75.7 & 58.86  \\
        512 & 98.44 & 69.64 & 33.75 & 73.97 & 55.67 \\
        \bottomrule
    \end{tabular}%
    }
    \label{tab:lorarank}
\end{table}

\subsection{Effect of Training Strategies}

\begin{wrapfigure}{r}{0.46\columnwidth}
    \centering
    \vspace{-15mm}
    \includegraphics[width=0.98\linewidth]{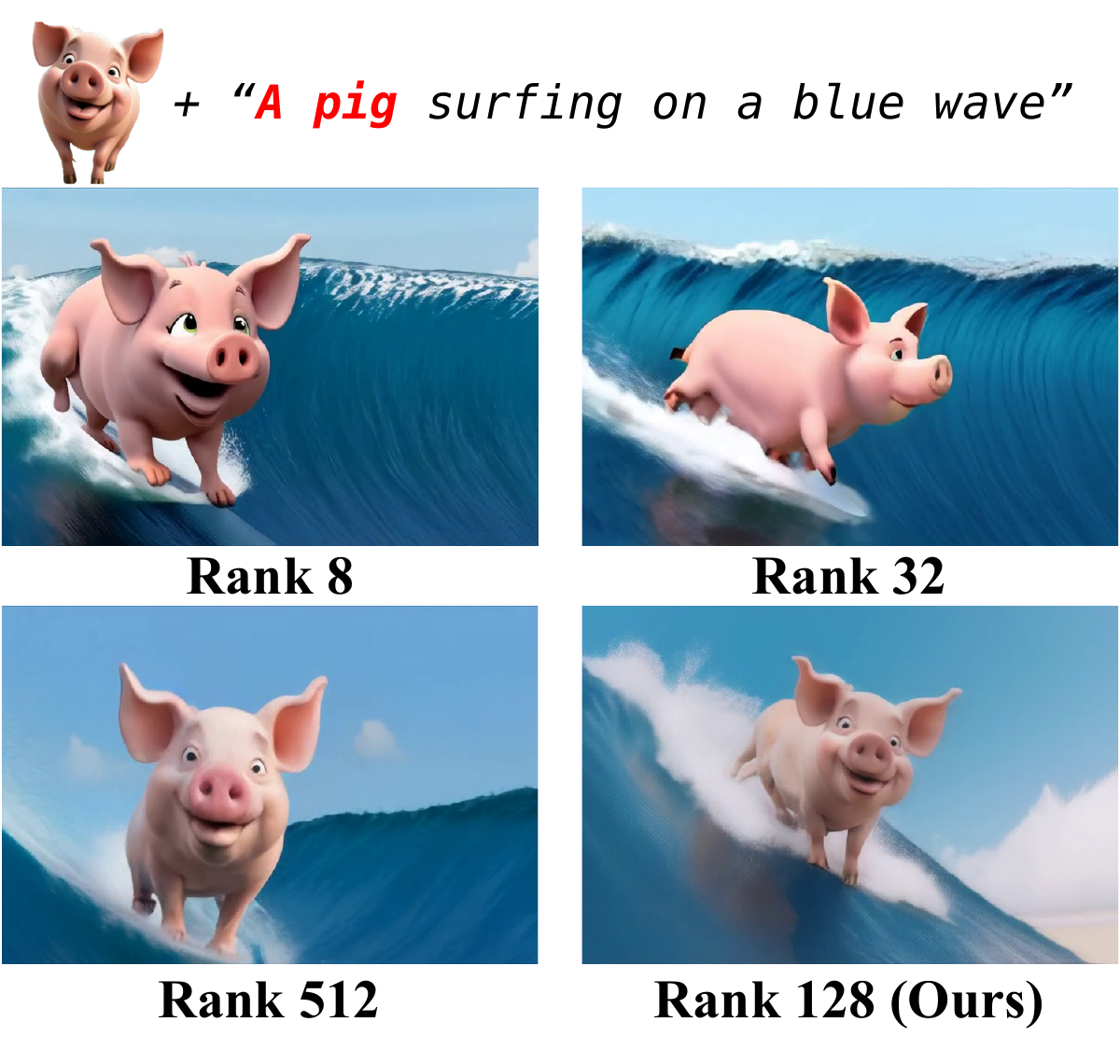}
    \vspace{-3mm}
    \caption{\textbf{Ablation study on LoRA rank.} Identity injection capability is trained even with rank of 8, but the details such as where the eye is located for `pig' differs when rank goes below 32. When rank is above 128, they show good fidelity in identity injection. }
    \label{fig:lorarank}
    \vspace{-3mm}
\end{wrapfigure}

We conduct an ablation on three training strategies: \emph{image-only}, \emph{two-stage}, and our \emph{alternating} (stochastically switched) approach. 
Qualitative comparisons in Figure~\ref{fig:quali_train_st} reveal distinct failure modes for each strategy.
The \emph{image-only} approach produces videos with minimal subject motion, effectively generating near-static sequences that artificially inflate motion-smoothness metrics while sacrificing temporal dynamics, as demonstrated in Table~\gmp{4} of the main manuscript.
\emph{Two-stage} training, which first fine-tunes on subject-image pairs before switching to arbitrary videos, introduces pronounced visual artifacts and exhibits catastrophic forgetting of the base model's text-to-video capability.
These artifacts manifest as inconsistent textures and structural distortions that compromise overall video quality, despite preserving certain fine-grained image details.
In contrast, our proposed \emph{alternating} strategy successfully navigates these failure modes by interleaving identity injection and motion-aware objectives throughout training.
As illustrated in Figure~\ref{fig:quali_train_st}, this approach generates videos with both natural motion dynamics and coherent visual appearance, avoiding the static collapse observed in image-only training and the artifact accumulation in two-stage training.
The qualitative results demonstrate that stochastic alternation between objectives effectively maintains the model's generative capabilities across both modalities.

\begin{figure}[t]
    \centering
    % \captionsetup{type=figure}
    \includegraphics[width=0.96\columnwidth]{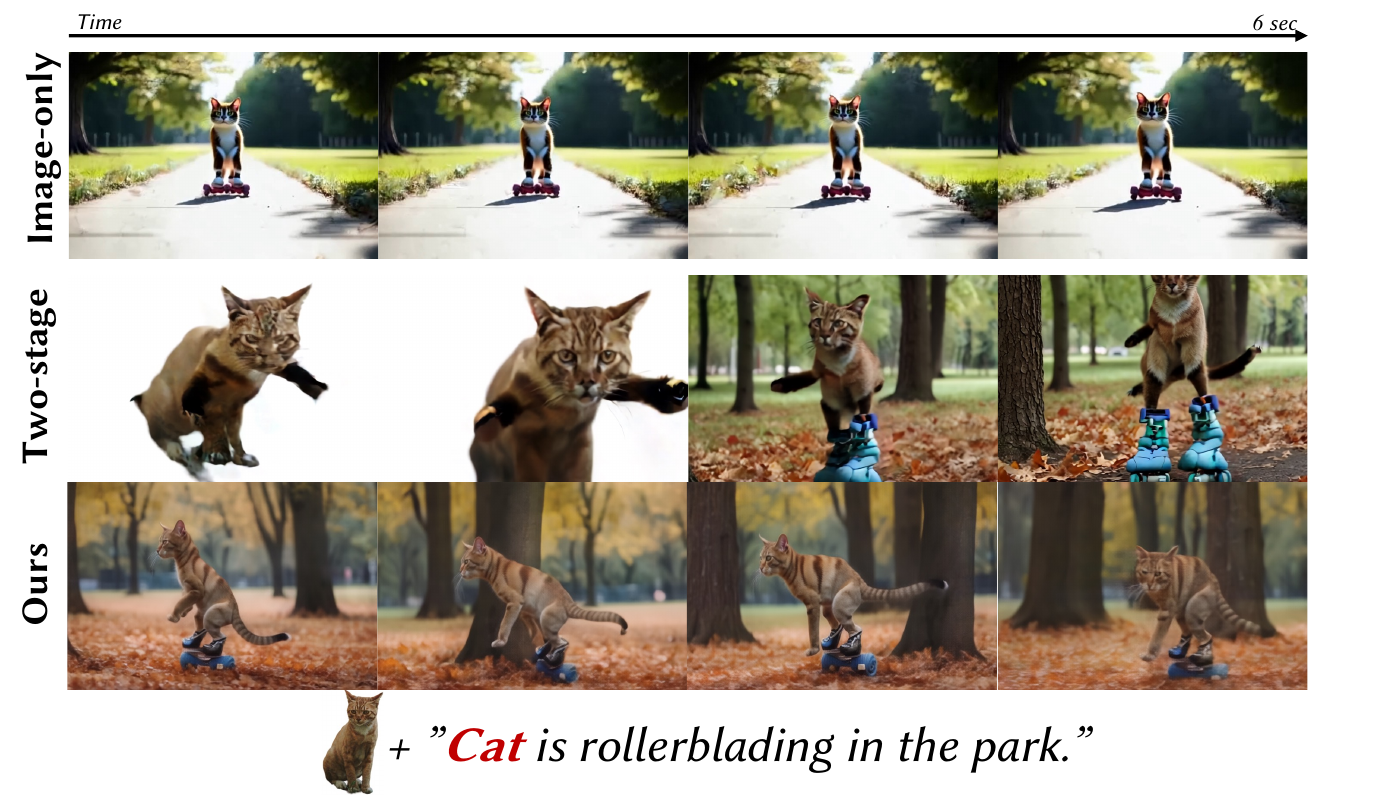}
    % \vspace{-3mm}
    \caption{\textbf{Qualitative result on ablation study of our component in temporal awareness preservation.}}
    % \vspace{-10mm}
    \label{fig:quali_train_st}
\end{figure}

\subsection{Exploration of Various Training Strategies with Penalties}

We further explore whether standard regularization strategies can mitigate the failure modes of the \emph{image-only} and \emph{two-stage} training schemes in Table~\ref{tab:abl_image-only} and Figures~\ref{fig:quali_image_only_penalty}-~\ref{fig:quali_i2v_penalty}
In particular, we apply commonly used penalties, including L2 regularization, and anchoring to the pretrained weights, during both image-only and two-stage fine-tuning.
However, these penalties do not meaningfully alleviate catastrophic forgetting in our setting.

Although such regularizers can slightly constrain parameter drift, they do not recover the pretrained model's temporal modeling capability once optimization becomes overly biased toward the image objective.
As a result, image-only training still tends to produce near-static videos, while two-stage training continues to suffer from degraded video quality and forgetting of the original motion prior.
Overall, we find that adding these practical penalties is not sufficient to resolve the core instability of these training strategies, which supports the need for our stochastic alternating schedule.

\begin{figure}[t]
    \centering
    % \captionsetup{type=figure}
    \includegraphics[width=0.96\columnwidth]{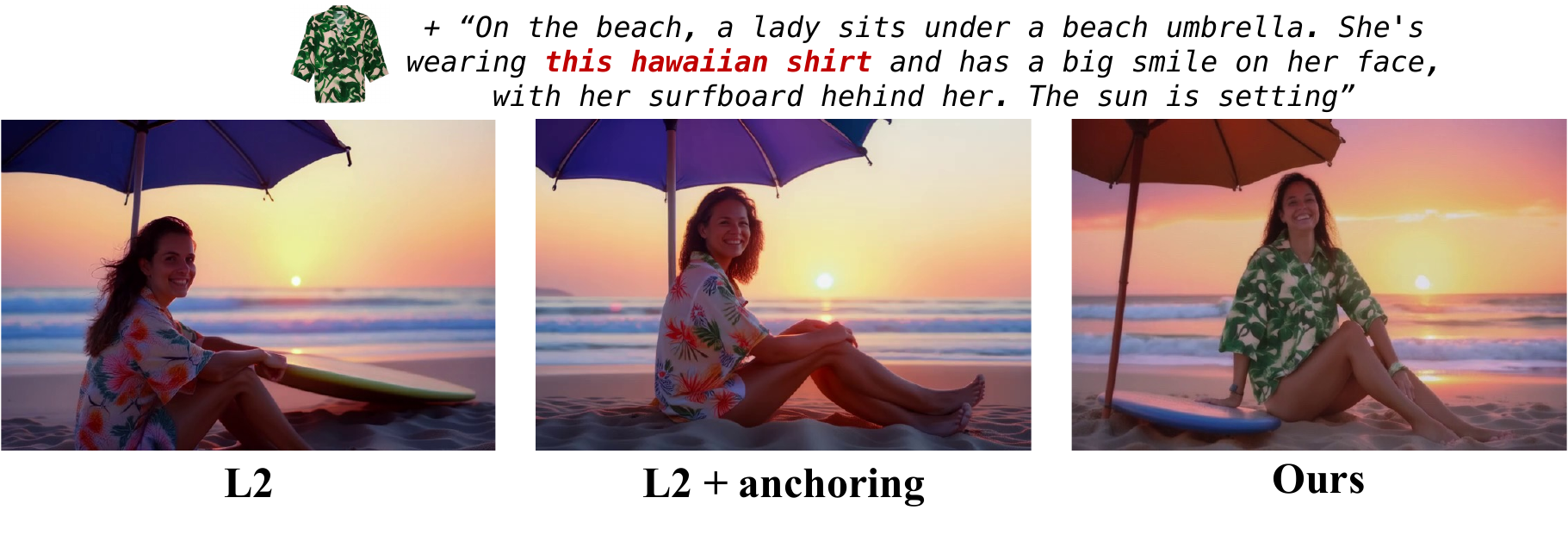}
    % \vspace{-3mm}
    \caption{\textbf{Qualitative result on comparison with image-only approaches with penalty.}}
    % \vspace{-10mm}
    \label{fig:quali_image_only_penalty}
\end{figure}

\begin{table}[t]
    \centering
    \caption{\textbf{Exploration on various training strategies with penalties on image-only.}}
    \resizebox{0.75\columnwidth}{!}{%
    \begin{tabular}{l ccccc}
        \toprule
        \textbf{Training}   & \textbf{Motion}& \textbf{Dynamic} & \multirow{2}{*}{\textbf{CLIP-T}} & \multirow{2}{*}{\textbf{CLIP-I}} & \multirow{2}{*}{\textbf{DINO-I}} \\
        \textbf{Strategy} & \textbf{Smoothness}& \textbf{Degree} & & & \\
        \midrule
        % 0.0 & 98.59 & 63.03 & 32.12 & 72.86 & 57.86 \\
        Ours & 97.99 & 78.33 & 33.22 & 75.7 & 58.86  \\
        \midrule
        image-only & 99.60 & 0.84 & 32.67 & 71.15 & 43.19 \\
        % image-only + L1 &  \\
        image-only + L2 & 99.36 & 31.67 & 32.91 & 73.12 & 48.23 \\
        image-only + L2 anchoring & 99.26 & 38.33 & 32.64 & 72.15 & 44.71\\
        \bottomrule
    \end{tabular}%
    }
    \label{tab:abl_image-only}
\end{table}

\begin{figure}[t]
    \centering
    % \captionsetup{type=figure}
    \includegraphics[width=0.96\columnwidth]{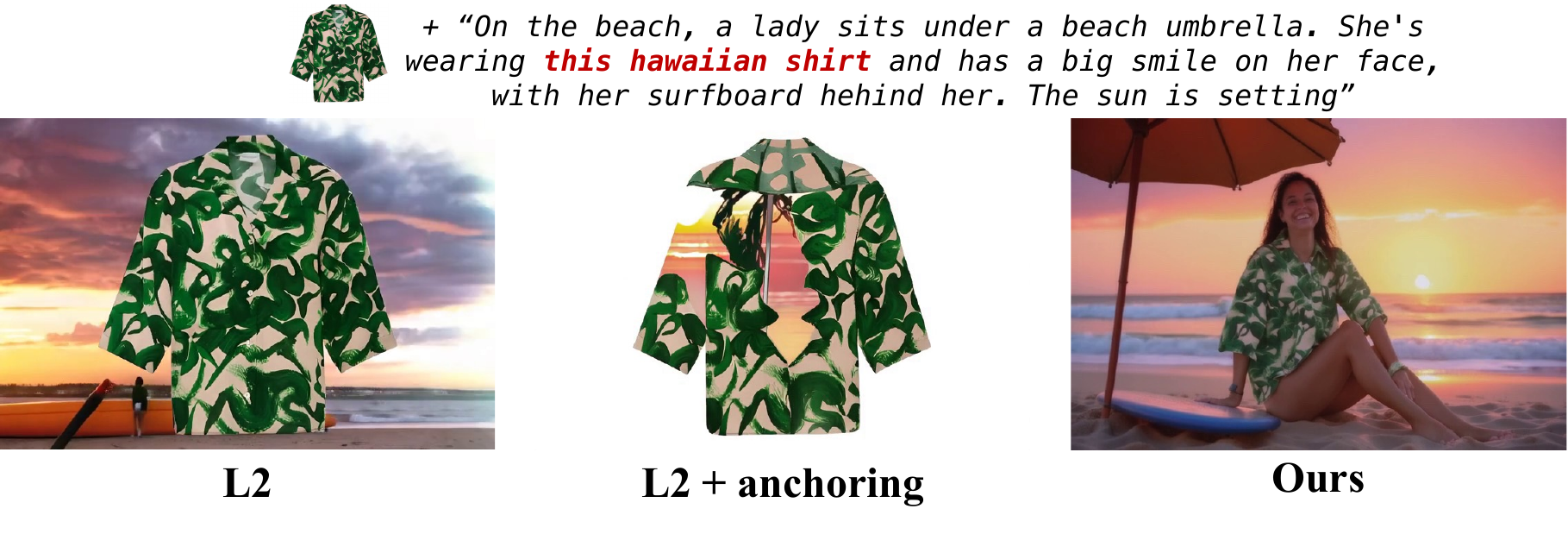}
    % \vspace{-3mm}
    \caption{\textbf{Qualitative result on comparison with two-stage approaches with penalty.}}
    % \vspace{-10mm}
    \label{fig:quali_i2v_penalty}
\end{figure}

\begin{table}[t]
    \centering
    \caption{\textbf{Exploration on various training strategies with penalties on two-stage.}}
    \resizebox{0.75\columnwidth}{!}{%
    \begin{tabular}{l ccccc}
        \toprule
        \textbf{Training}   & \textbf{Motion}& \textbf{Dynamic} & \multirow{2}{*}{\textbf{CLIP-T}} & \multirow{2}{*}{\textbf{CLIP-I}} & \multirow{2}{*}{\textbf{DINO-I}} \\
        \textbf{STrategy} & \textbf{Smoothness}& \textbf{Degree} & & & \\
        \midrule
        Ours & 97.99 & 78.33 & 33.22 & 75.7 & 58.86  \\
        \midrule
        two-stage & 96.04 & 81.51 & 28.96 & 84.73 & 76.13\\
        two-stage + L2 & 97.06 & 81.67 & 31.10 & 77.14 & 62.15 \\
        two-stage + L2 anchoring & 96.60 & 78.33 & 31.16 & 77.71 & 61.27\\
        \bottomrule
    \end{tabular}%
    }
    \label{tab:abl_two-stage}
\end{table}

\section{Additional Evaluation}

\subsection{Human Preference Study}
\label{sec:human_study}

While benchmark metrics offer quantitative insights, they can sometimes be misled by ``cheating'' behaviors, such as static outputs that yield artificially high scores. 
To complement our objective measurements, we conducted a human preference study using 20 randomly chosen samples from each baseline and our approach, \emph{without cherry-picking}. 
A total of 30 participants were asked to rate the generated videos on a five-point Likert scale across dimensions of ID consistency, Prompt alignment, Motion quality, and Overall visual appeal. 
Our method consistently outperformed the baselines, suggesting that our balanced approach to identity preservation and temporal awareness best aligns with human judgments of video realism and quality when viewed holistically.

\begin{table}[t]
    \caption{\textbf{Result on human preference study in Likert scale of 1-5.}}
    \vspace{-3mm}
    \centering
    \resizebox{0.75\columnwidth}{!}{%
    \setlength{\tabcolsep}{2pt}
    \begin{tabular}{l cccc}
        \toprule
        \multirow{2}{*}{\textbf{Method}} & \textbf{ID} & \textbf{Prompt} & \textbf{Motion} & \textbf{Overall} \\
         & \textbf{Consistency} & \textbf{Alignment} & \textbf{Quality} & \textbf{Quality} \\
        \midrule
        Omini+I2V &  3.80 & 3.78 & 3.62 & 3.44 \\
        VideoBooth &  3.25 & 3.20 & 3.08 & 2.91  \\
        Vidu 2.0 & 3.42 & 3.24 & 3.22 & 3.03\\
        \midrule
        Ours & \textbf{4.08} & \textbf{3.82} & \textbf{3.88} & \textbf{3.71} \\
        \bottomrule
    \end{tabular}%
    }
    \label{tab:abl}
    \vspace{-2mm}
\end{table}

\begin{table}[t]
    % \vspace{-3mm}
    \caption{\textbf{Temporal Evaluation following FloVD~\cite{flovd}, assessing whether motion dynamics improve compared to image-only or two-stage training.} Small - Medium - Large - with each number representing FVD. $\dagger$ denotes Pexels~\cite{pexels}-finetuned version of CogVideoX~\cite{cogvideox}. }
    % \vspace{-2mm}
    \centering
    \resizebox{0.55\columnwidth}{!}{%
    \begin{tabular}{lccc}
        \toprule
     \textbf{Method} & Small$\downarrow$ & Medium$\downarrow$ & Large$\downarrow$ \\
        \midrule
        % CogVideoX-I2V$\dagger$ & \\
        % \hdashline
        CogVideoX-T2V$\dagger$ & 597.54 & 594.26 & 573.86 \\
        \midrule
        Image-only & 641.92 & 636.42 & 680.34 \\
        Two-stage & 801.97 & 872.30 &  824.03 \\
        \midrule
        Ours & \textbf{512.30} &  \textbf{511.66} & \textbf{550.14} \\
        \bottomrule
    \end{tabular}%
 }
    \vspace{-2mm}
    \label{tab:temporal_analysis}
\end{table}

\subsection{Temporal Modeling Evaluation}
\label{sec:temporal_eval_supp}

We evaluate temporal modeling performance using videos sampled from the Pexels dataset, ensuring no overlap with those used during training. Following the protocol of FloVD~\cite{flovd}, we categorize videos into three groups based on optical flow magnitude (small: $\leq25$, medium: $25\sim50$, large: $\geq50$) to enable detailed analysis. For object-motion evaluation in particular, we focus on videos with minimal or no camera motion and construct benchmark subsets by applying the same motion-based grouping to a separate collection of Pexels videos not used in our stochastically switched fine-tuning.

\subsubsection{Evaluation Protocol}
$\\$
\noindent\textbf{1) Foreground–Background Segmentation.}
For each video, we use an off-the-shelf segmentation model (e.g., Grounded-SAM2) on the \emph{first frame} to separate foreground and background regions. This allows us to measure object (foreground) motion independently from any camera-induced background shifts.

\noindent\textbf{2) Optical Flow Computation.}
We estimate optical flow between the \emph{first frame} and each subsequent frame using a standard flow estimator (e.g., RAFT~\cite{raft}). Let $\mathbf{u}_{f}(x)$ and $\mathbf{u}_{b}(x)$ denote the per-pixel flow vectors for the foreground and background pixels, respectively, at position $x$. We record:
\[
\begin{aligned}
\text{FlowMag}_{f} &= \frac{1}{N_{f}} \sum_{x \in \text{fg}} \|\mathbf{u}_{f}(x)\|,\\
\text{FlowMag}_{b} &= \frac{1}{N_{b}} \sum_{x \in \text{bg}} \|\mathbf{u}_{b}(x)\|,
\end{aligned}
\]
where $N_{f}$ and $N_{b}$ are the respective pixel counts in the foreground and background masks.

\noindent\textbf{3) Dataset Filtering.}
To ensure negligible camera motion, we \emph{discard} any video whose average magnitude of background flow $\text{FlowMag}_{b}$ exceeds 10~pixels. This filtering step excludes scenes with significant global shifts, retaining only those with primarily object-centric motion.

\noindent\textbf{4) Category Assignment.}
Based on the average magnitude of foreground flow $\text{FlowMag}_{f}$ (averaged over all frames), we categorize videos into:
\begin{itemize}
    \item \emph{Small}: $0 \leq \text{FlowMag}_{f} \leq 25$
    \item \emph{Medium}: $25 < \text{FlowMag}_{f} \leq 50$
    \item \emph{Large}: $\text{FlowMag}_{f} > 50$
\end{itemize}
Each category contains 300 videos, ensuring a balanced evaluation of low-, moderate-, and high-motion scenarios.

\noindent\textbf{5) Evaluation Protocol.}
Within each subset, we use only the \emph{first frame} (including any textual or reference cues, if required) to generate a video of the same length. We then compute FVD~\cite{fvd} between the generated outputs and the ground-truth videos. By comparing FVD across \emph{small}, \emph{medium}, and \emph{large} motion classes, we obtain a clearer picture of how each model (ours vs.\ baselines) adapts to varying object-motion intensities.

\subsubsection{Discussion.}

We compared \emph{image-only}, \emph{two-stage}, and our \emph{dual-task learning} strategies. Image-only training produced a video with poor FVD (Tab.~\ref{tab:temporal_analysis}. Two-stage fine-tuning leads to high FVD, which aligns with the result of artifacts as in Fig.~\ref{fig:quali_train_st}. Our strategy (Ours) excelled, with lower FVD scores confirming superior motion realism and temporal consistency, on par with CogVideoX~\cite{cogvideox}.

One thing to note is that motion-focused split highlights each method’s strengths and weaknesses.
For example, a model might produce near-static outputs for low-motion data, \emph{cheating} on metrics such as smoothness in VBench metrics, yet fail to track fast-moving objects in high-motion videos. 
As observed in FloVD~\cite{flovd}, categorizing by foreground flow magnitude reveals these nuances more effectively than aggregated scores alone.

\noindent\textbf{Additional Details.}
To ensure a fair comparison with our method, we additionally fine-tune the original CogVideoX~\cite{cogvideox} on a subset of the Pexels~\cite{pexels} dataset that is equivalent to the one used in our training. 
Since our model is trained for 4K steps with a sampling ratio $p = 0.2$, we match this by finetuning CogVideoX for 800 steps ($0.2 \times 4000$). As a result, we achieved performance comparable to the original CogVideoX in motion dynamics evaluation.

\section{Estimation of Other Methods' Training Cost}
We estimate the train time of CustomCrafter~\cite{customcrafter} and VACE~\cite{vace} based on the given implementation details in the manuscript.

\subsection{CustomCrafter.}
The estimated GPU hours for training a single subject are in the range of 100--300 A100-hours, with a median estimate of around 200 A100-hours.
$$\text{wall-clock time (hours)} = \frac{10\,000 \times t}{3\,600}$$
$$\text{total GPU-hours} = 4 \times \text{wall-clock time}$$
where $t \in [10, 30]$ is the estimated time per iteration in seconds.

\subsection{VACE.}
The estimated GPU hours for training VACE on LTX-Video are in the range of 9,000--27,000 A100-hours, with a median estimate of around 18,000 A100-hours.
$$\text{number of training steps} = 200\,000$$
$$\text{wall-clock time per training (hours)} = \frac{200\,000 \times t}{3\,600}$$
$$\text{GPU-hours} = 16 \times \frac{200\,000 \times t}{3\,600} = \frac{3\,200\,000 t}{3\,600} = \frac{8\,000 t}{9}$$
where $t \in [10, 30]$ is the estimated time per iteration in seconds.
The estimated GPU hours for training VACE on Wan-T2V are in the range of 70,000--210,000 A100-hours, with a median estimate of around 140,000 A100-hours.
$$\text{number of training steps} = 200\,000$$
$$\text{wall-clock time per training (hours)} = \frac{200\,000 \times t}{3\,600}$$
$$\text{GPU-hours} = 128 \times \frac{200\,000 \times t}{3\,600} = \frac{25\,600\,000 t}{3\,600} = \frac{64\,000 t}{9}$$
where $t \in [10, 30]$ is the estimated time per iteration in seconds.

\subsection{VideoBooth.}
VideoBooth trains two stages (coarse and fine), each for 218,000 steps, totaling 436,000 training steps.
We estimate the total compute assuming an 8$\times$A100 setup.
$$\text{number of training steps} = 218{,}000 + 218{,}000 = 436{,}000$$
$$\text{wall-clock time per training (hours)} = \frac{436{,}000 \times t}{3{,}600}$$
$$\text{GPU-hours} = 8 \times \frac{436{,}000 \times t}{3{,}600}
= \frac{3{,}488{,}000\, t}{3{,}600}
= \frac{8{,}720\, t}{9}$$
where $t \in [0.8, 2.0]$ is the estimated time per iteration in seconds.

Therefore, the estimated GPU hours for training VideoBooth are in the range of 775--1,938 A100-hours, with a median estimate of around 1,356 A100-hours (at $t=1.4$).

\section{Limitation Analysis \& Additional Result}

\noindent\textbf{Limitation \& Analysis.}
As discussed in the main manuscript, strong stylization can occasionally hinder faithful generation, leading to failure cases such as those shown in Fig.~\ref{fig:human_limit}, and in some cases, to blurry outputs.
However, this phenomenon is less pronounced when using Wan~2.2-5B as the backbone.
We conjecture that this is because Wan~2.2-5B provides stronger identity preservation, as evidenced by its substantially higher FaceSim score in the OpenS2V~\cite{s2vnexus} results reported in Tab.~\gmp{3} of the main paper.

\begin{figure*}[t]
    \centering
    \captionsetup{type=figure}
    \includegraphics[width=0.98\textwidth]{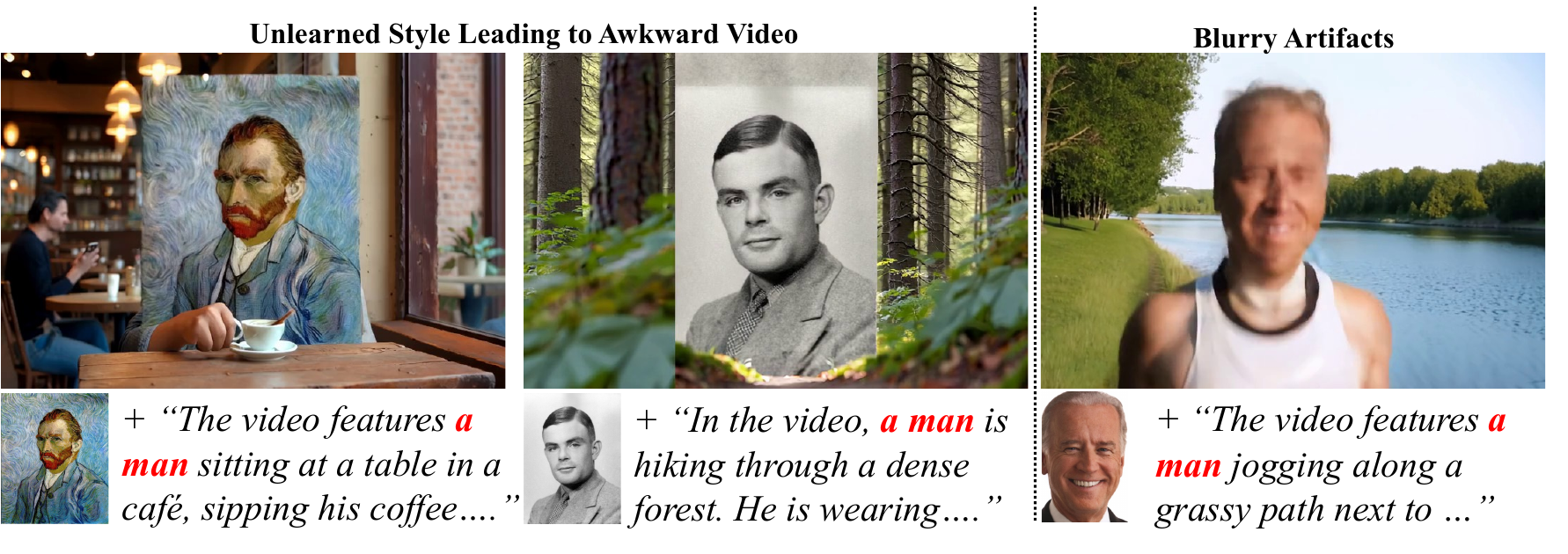}
    % \vspace{-0.8cm}
    \caption{\textbf{Limitation in human-driven generation.}}
    \vspace{-0.3cm}
    \label{fig:human_limit}
\end{figure*}

\noindent\textbf{Additional Qualitative Results.}
We present additional qualitative comparisons against per-subject tuning baselines in Figs.~\ref{fig:sup1} and~\ref{fig:sup2}, as well as zero-shot baselines in Figs.~\ref{fig:sup3} and~\ref{fig:sup4}, including state-of-the-art methods~\cite{phantom,vace}.
For these comparisons, we use Phantom-Wan~2.1-1.3B and VACE-Wan~2.1-1.3B, which are denoted as Phantom-1.3B and VACE-1.3B in the supplemental video for brevity.
Note that \emph{mini} refers to our model trained on a 4,000-sample subset of subject-image pairs~\cite{ominicontrol}.

% We further present additional qualitative video results in the supplemental video, together with Open-S2V-Evaluation~\cite{s2vnexus} benchmark results.
% Moreover, to better assess performance on deformable subjects, highly dynamic scenes, and human-driven generation, we additionally compare against Phantom-Wan~14B and VACE-Wan~14B, denoted as Phantom-14B and VACE-14B in the supplemental video.
% Note that due to computational cost of 14B models, which requires 4 $\times$ A100 GPUs for inference, we utilize OpenS2V provided result for the 14B result.
We further include additional qualitative video results in the supplemental video, along with benchmark results from Open-S2V-Evaluation~\cite{s2vnexus}.
In addition, to better evaluate performance on deformable subjects, highly dynamic scenes, and human-driven generation, we compare against Phantom-Wan~14B and VACE-Wan~14B, denoted as Phantom-14B and VACE-14B in the supplemental video.
Because 14B-scale models are computationally demanding and require four A100 GPUs for inference, we report the 14B results provided by Open-S2V-Evaluation.

% \section{Qualitative Result with Videos}

\begin{figure*}[t]
    \centering
    \captionsetup{type=figure}
    \includegraphics[width=0.98\textwidth]{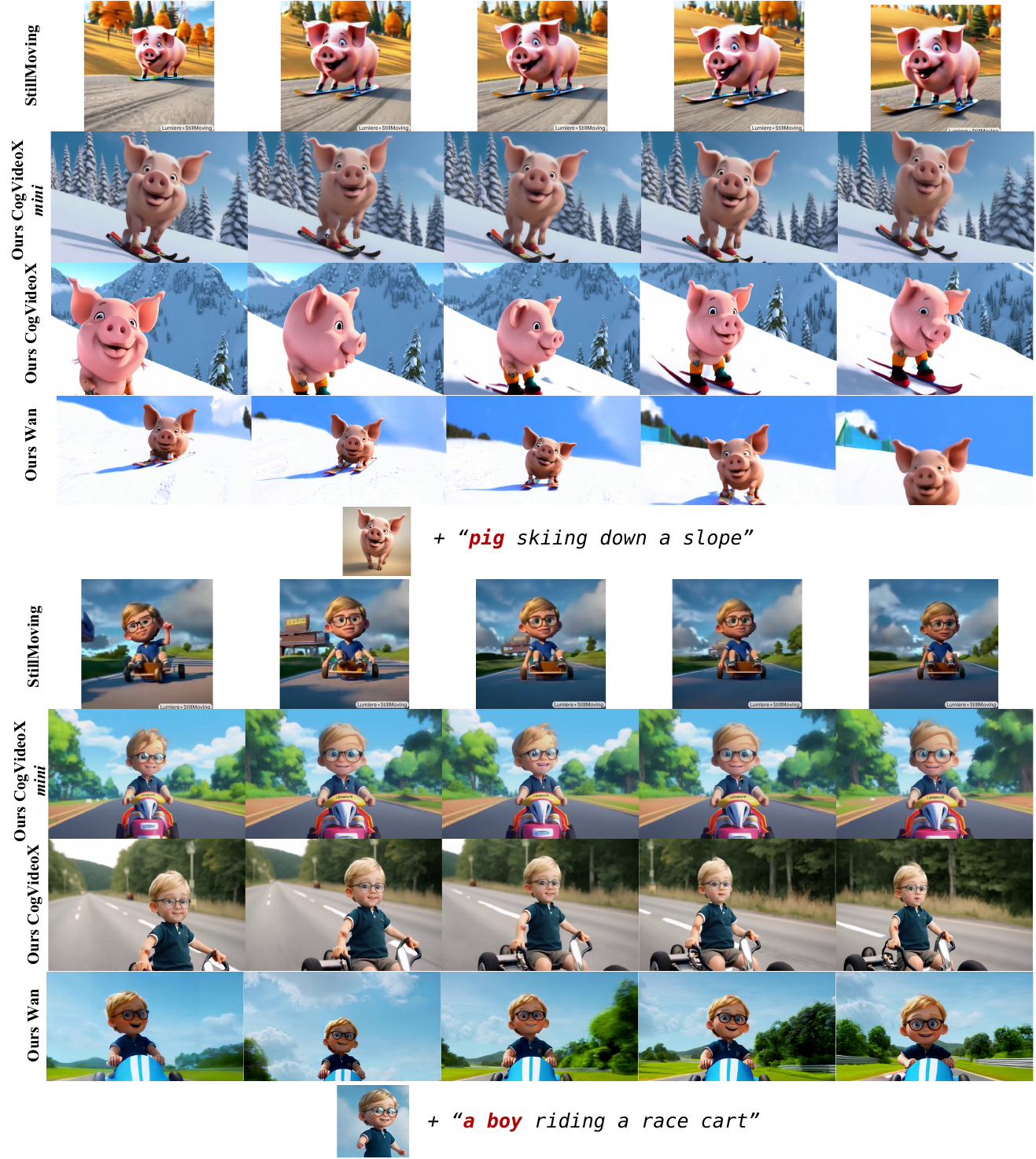}
    % \vspace{-0.8cm}
    \caption{\textbf{Additional qualitative comparison with Still-Moving~\cite{stillmoving}.} Note that \emph{mini} denotes our method trained with a 4K subset of Subject 200K~\cite{ominicontrol}.}
    \vspace{-0.3cm}
    \label{fig:sup1}
\end{figure*}

\begin{figure*}[t]
    \centering
    \captionsetup{type=figure}
    \includegraphics[width=0.98\textwidth]{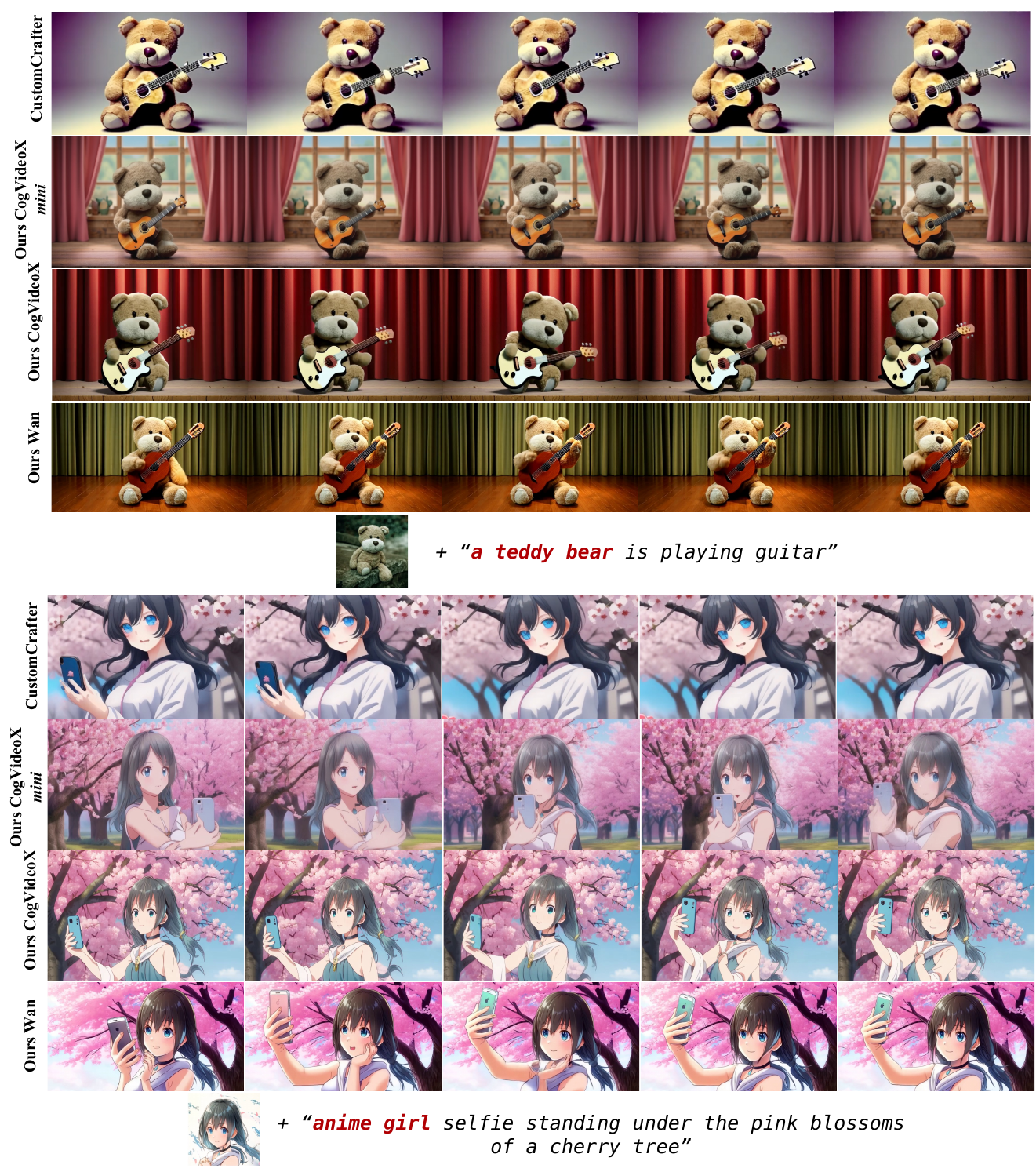}
    % \vspace{-0.8cm}
    \caption{\textbf{Additional qualitative comparison with CustomCrafter~\cite{customcrafter}.} Note that \emph{mini} denotes our method trained with 4K subset of Subject 200K~\cite{ominicontrol}.}
    \vspace{-0.3cm}
    \label{fig:sup2}
\end{figure*}

\begin{figure*}[t]
    \centering
    \captionsetup{type=figure}
    \includegraphics[width=0.98\textwidth]{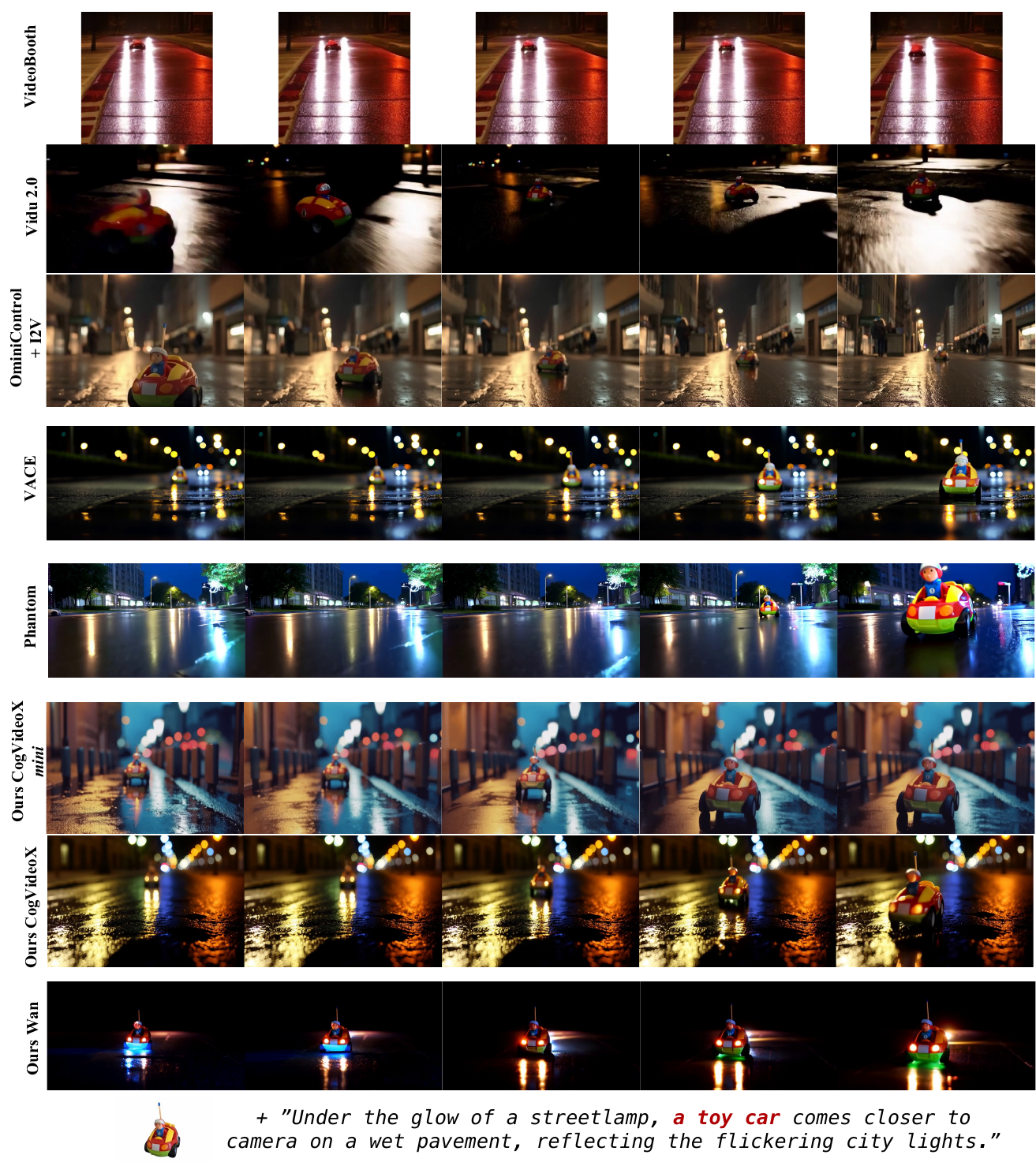}
    % \vspace{-0.8cm}
    \caption{\textbf{Additional qualitative comparison with tuning-free baselines including VACE Wan-1.3B~\cite{vace}, Phantom Wan-1.3B~\cite{phantom}.} Note that \emph{mini} denotes our method trained with 4K subset of Subject 200K~\cite{ominicontrol}}
    \vspace{-0.3cm}
    \label{fig:sup3}
\end{figure*}

\begin{figure*}[t]
    \centering
    \captionsetup{type=figure}
    \includegraphics[width=0.98\textwidth]{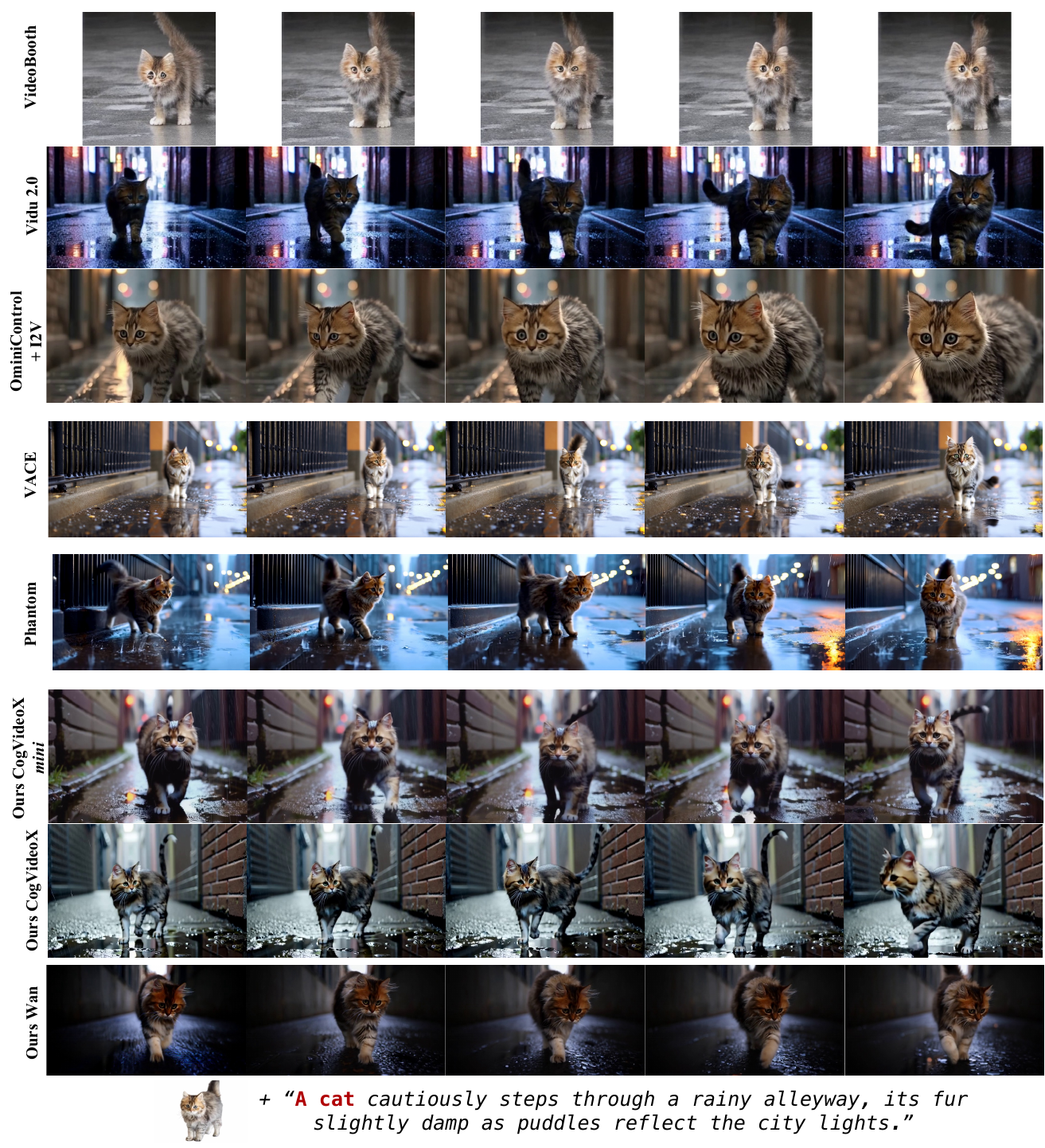}
    % \vspace{-0.8cm}
    \caption{\textbf{Additional qualitative comparison with tuning-free baselines including VACE Wan-1.3B~\cite{vace}, Phantom Wan-1.3B~\cite{phantom}.} Note that \emph{mini} denotes our method trained with 4K subset of Subject 200K~\cite{ominicontrol}}
    \vspace{-0.3cm}
    \label{fig:sup4}
\end{figure*}

% We present additional qualitative results in Figure~\ref{fig:supp_quali1} and Figure~\ref{fig:supp_quali2}, comparing with tuning-free baselines and tuning-based baselines.
% In Figure~\ref{fig:supp_quali1}, we demonstrate that the OminiControl+I2V fails to interpret the `comes closer to camera' for the first example with `a toy car', due to the tendency not to generate close-ups as they tend to show articulated result as in Figure~\ref{fig:i2v_limit} in manuscript.
% Also in the example of `Cat', they also show little articulated result with eye, since they fail to interpret small objects.
% Also compared to Vidu 2.0 and VideoBooth, they show degraded result compared to ours.

% Additionally, when we compare with tuning-based baselines, ours show more identity-preserved result, compared to Still-Moving and CustomCrafter.
% For example, in the first example above with `pig', ours follow the shape of `pig' better, without change in colors or shapes of eyes. 
% For the `boy', we see that ours show less copy-pasted result, with heads turned during the video.
% For the comparison with CustomCrafter in the third and fourth row examples, teddy bear's consistency also is better, also for the `anime girl'.
% Please refer to the supplement video attached for moving videos.

% \section{Failure Cases}
% \noindent\textbf{Failure on Human Faces.}
% \noindent\textbf{Failures on ...}

% \section{Use of Large Language Models}
% Anthropic's Claude was used to polish the writing of this manuscript. All text generated by the tool has been reviewed and revised by the authors.

\end{document}